\newif\ifarxiv
\newcommand{\algname}{\ensuremath{\texttt{TASID}}}
\newcommand{\RDP}{\ensuremath{\texttt{RDP}}}
\newcommand{\sinit}{s_{\text{init}}}
\newcommand{\Tsim}{T^{\circ}}
\newcommand{\Treal}{T^{\star}}
\newcommand{\Msim}{M^{\circ}}
\newcommand{\Mreal}{M^{\star}}
\newcommand{\blockM}{\mathbf{M}}
\newcommand{\blockMreal}{\mathbf{M}^{\star}}
\newcommand{\Rsim}{R^{\circ}}
\newcommand{\Rreal}{R^{\star}}
\newcommand{\blockTreal}{\mathbf{T}^{\star}}
\newcommand{\obsmap}{q}
\newcommand{\inverseobsmap}{\phi}
\newcommand{\perturbclass}[2]{\mathcal{C}(#1, #2)}
\newcommand{\perturbprob}{\xi}
\newcommand{\perturbradius}{\eta}
\newcommand{\latentpolicy}{\psi}
\newcommand{\policy}{\pi}
\newcommand{\robustlatentpolicy}{\rho}
\newcommand{\latentpolicyset}{\Psi}
\newcommand{\actionpredictor}{\alpha}
\newcommand{\unf}{{\tt Unf}}
\newcommand{\robustdpV}[1]{\tilde{V}_{#1}}
\newcommand{\robustdpQ}[1]{\tilde{Q}_{#1}}
\newcommand{\robustdppolicy}{\tilde{\latentpolicy}}
\newcommand{\Aset}{A}
\newcommand{\Acompset}{A^{c}}
\newcommand{\Expt}{\mathrm{E}}
\newcommand{\Prob}{\mathrm{P}}
\newcommand{\ndecode}{n_{D}}
\newcommand{\tuple}[1]{\langle #1\rangle}
\newcommand{\set}[1]{\{#1\}}
\newcommand{\given}{\mathbin{\vert}}
\newcommand{\bigBracks}[1]{\bigl[#1\bigr]}
\newcommand{\bigParens}[1]{\bigl(#1\bigr)}
\newcommand{\bigGiven}{\mathbin{\bigm\vert}}
\newcommand{\Alg}{\textup{\texttt{Alg}}} 
\newcommand{\mumin}{\mu_{\textrm{min}}}
\newcommand{\figsqueeze}{}
\newcommand{\mycomment}[1]{\hfill\textcolor{blue}{//#1}}
\date{}
\title{Provably Sample-Efficient RL with Side Information about Latent Dynamics}
\author[1]{Yao Liu \thanks{yao.liu.chn@gmail.com; The major part of this work is done when Yao Liu was an intern at Microsoft Research.}}
\author[2]{Dipendra Misra \thanks{dipendra.misra@microsoft.com}}
\author[2]{Miro Dudík \thanks{mdudik@microsoft.com}}
\author[2]{Robert E. Schapire \thanks{schapire@microsoft.com}}
\affil[1]{ByteDance, Bellevue, WA}
\affil[2]{Microsoft Research, New York, NY}
\begin{document}

\maketitle

\begin{abstract}
    We study reinforcement learning (RL) in settings where observations are high-dimensional, but where an RL agent has access to abstract knowledge about the structure of the state space, as is the case, for example, when a robot is tasked to go to a specific room in a building using observations from its own camera, while having access to the floor plan. We formalize this setting as transfer reinforcement learning from an \emph{abstract simulator}, which we assume is deterministic (such as a simple model of moving around the floor plan), but which is only required to capture the target domain's latent-state dynamics approximately up to \emph{unknown} (bounded) perturbations (to account for environment stochasticity). Crucially, we assume no prior knowledge about the structure of observations in the target domain except that they can be used to identify the latent states (but the decoding map is unknown). Under these assumptions, we present an algorithm, called $\algname$, that learns a robust policy in the target domain, with sample complexity that is polynomial in the horizon, and \emph{independent} of the number of states, which is not possible without access to some prior knowledge.
In synthetic experiments, we verify various properties of our algorithm and show that it empirically outperforms transfer RL algorithms that require access to ``full simulators'' (i.e., those that also simulate observations).\looseness=-1

\end{abstract}

\section{Introduction}
\label{sec:introduction}
When learning from scratch, reinforcement learning (RL) in the real world
can be very expensive.
For example, a robot learning to navigate in a
building might need to explore every possible state or
location, which can be painstakingly costly and time-consuming.
Sometimes, however, it is not necessary to begin such a learning
process from scratch.
For instance, in the robot example, we might have access to a general
floor map of the building.
How can this kind of high-level but imprecise information
be used to learn how to operate in the environment more quickly and
more effectively?

In this paper, we study how to effectively leverage prior information
in the form of such ``abstract'' descriptions of the environment.
We formalize this abstract description as an
``abstract simulator,'' which, like a map, can be used as an imperfect
model.
Importantly, our abstract simulators differ from more standard
simulators in that they only focus on the ``latent structure'' of the
environment dynamics, not on the observations that might be
experienced by an agent in the environment.

In general, fully faithful simulators of even the latent dynamics
might be hard to build, for instance, due to the difficulty of exactly
modeling all probabilistic outcomes, as when the robot's actions do not
have exactly their intended effect.
More complex simulators might be more faithful, but simpler simulators
might be easier to build and also more computationally tractable.

In this paper, we address this trade-off with
a particular design choice:
First, we assume that the abstract simulator is
deterministic.
Indeed, an ordinary map, which implicitly represents
what new position will be reached by a particular action, is such a
deterministic model.
Compared to fully probabilistic models,
deterministic ones are especially simple to build and work with.

\begin{figure*}
    \centering
    \includegraphics[scale=0.21]{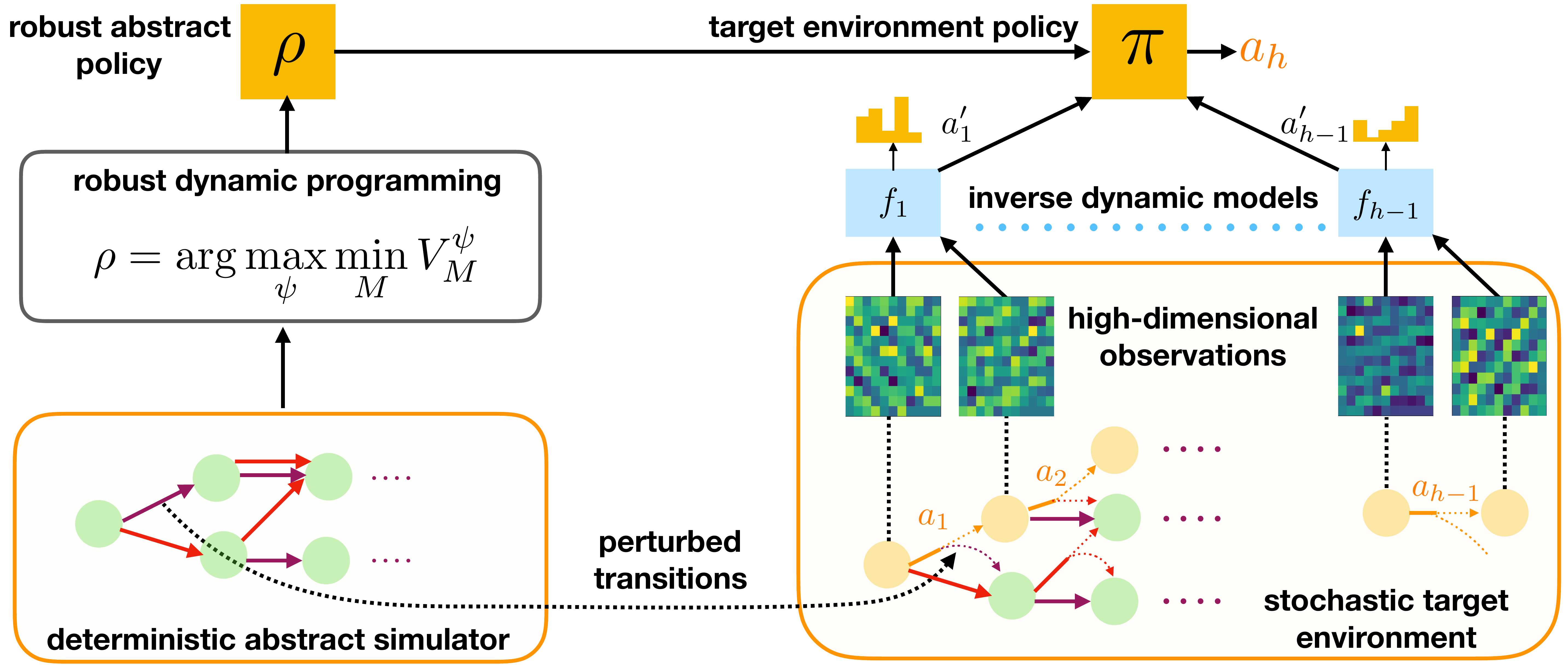}
    \caption[hello]{Overview of our setup. We propose an algorithm $\algname$ that learns a robust policy $\pi$ in the target environment using side information in the form of a fully specified deterministic model, called an ``abstract simulator.'' Abstract simulator approximates (possibly stochastic) latent dynamics of the target environment and thus serves as an idealized description of the the environment.
    The policy $\pi$ selects its actions according to the robust abstract policy $\rho$, but the latent states need to be inferred from high-dimensional observations using learned inverse dynamics models.}
    \label{fig:intro_flowchart}
    \figsqueeze
\end{figure*}

On the other hand, a deterministic model will not in general
capture the ``noise'' of real-world dynamics.
Therefore, we seek algorithms that will learn
policies that are robust to errors or ``perturbations'' in the model
represented by the abstract simulator.
To this end,
in Section~\ref{sec:algorithm},
we present a new algorithm called $\algname$
that provably finds a policy achieving a certain level of performance
for any target environment whose dynamics can be reasonably
approximated by a given abstract simulator,
up to some quantifiable perturbation level, and with arbitrarily different observations.
(\pref{fig:intro_flowchart} visualizes our setup and approach.)

Although our focus here is on transfer from an abstract simulator, and we emphasize the relative ease of constructing abstract simulators, our approach could also be used in a setting similar to theirs. In the source domain, we could use an existing rich-observation approach \cite[e.g.,][]{misra2020kinematic} to infer latent structure, which could then be used as an abstract simulator to speed up learning in the target domain using $\algname$. 

In the robot example, although a map is intuitively helpful for
reducing the need for exploration, there is still much to be inferred,
for instance, how
locations on the map correspond to locations in the building, and,
even more challenging, how they correspond to rich, high-dimensional
observations experienced via cameras or other sensors.
Our algorithm solves these challenges, building on prior work on
learning from such observations~\citep{krishnamurthy2016pac,jiang2017contextual}.

Furthermore,
our theoretical guarantees show that we do indeed save dramatically on
sample complexity (measured by the required number of
interactions with the target environment) by not
having to explore the entire environment, and instead focusing the
task of learning observations just to the states that lead towards
accomplishing the task.
Indeed, our algorithm's sample complexity is entirely
\emph{independent} of the size of the state space, assuring efficiency
even when the state space is extremely large, as is often the case.
We are able to achieve this result because of specific assumptions
and criteria for success, as outlined above, namely,
near-deterministic latent dynamics, knowledge of the abstract simulator, and the benchmark of a robust policy rather than an optimal policy.

In Section~\ref{sec:experiments},
we empirically evaluate $\algname$ in two domains. The first represents a challenging problem requiring strategic exploration. 
We show that $\algname$ is able to efficiently achieve the robust policy value while the PPO algorithm~\citep{schulman2017proximal} augmented with exploration bonus \citep{burda2018exploration} and domain randomization~\citep{tobin2017domain} does not solve the problem. The second domain is a visual grid-world that tests the robustness of $\algname$ to dynamics perturbation and the change of state-space size. We show that $\algname$ succeeds empirically in agreement with our theory.\looseness=-1

\section{Related Work}
\label{sec:related_work}
\textbf{Sim-to-real transfer learning.} The problem of reducing sample complexity of RL in a real-world target domain with the use of a simulator is also known as ``sim-to-real transfer.'' In this setting, a policy is trained on a set of simulators and then fine-tuned and evaluated in real-world environment(s). Most sim-to-real algorithms, such as domain randomization~\citep{sadeghi2016cad2rl,tobin2017domain}, rely on the similarity between the observation spaces in the simulator and real environments~\citep{peng2018sim,andrychowicz2020learning}, or even assume the same observation space, but possibly different action spaces~\citep{christiano2016transfer,song2020provably}.
In contrast, we focus on using an abstract simulator which does not model observations, but assumes the same action space.

\textbf{Transfer RL with different observation spaces.} In transfer RL, most previous work \citep{taylor2007transfer,mann2013directed} focuses on transferring under certain prior knowledge between the two observation spaces. Very recent work from \citet{sun2022transfer} is closest to the settings considered in this paper with a different solution than ours. \citet{sun2022transfer} tackled drastic changes in observation spaces and proposed an algorithm transferring the policy from source domain via learning a sufficient
representation. It provide an asymptotic guarantee of the policy learning given some representation condition, and empirical validation of the algorithm. However, \citet{sun2022transfer} does not gives finite-sample guarantees of the policy learning and any error bounds of learning the representation out of the deterministic transition case. \citet{van2021component} proposed a deep RL algorithm for transfer learning with very different visual observation spaces by learning the abstract states, while this paper focuses more on a provable sample efficiency guarantee.

\textbf{Provably efficient RL with rich observations.} 
While discussing the RL problem with different levels of state abstraction, we adopt a framework used in the line of work about solving Markov Decision Processes (MDPs) with rich observation~\citep{krishnamurthy2016pac,jiang2017contextual}. In these problems, the agent receives high-dimensional observations generated from a much simpler latent state space. Further, the observation space is \emph{rich} in the sense that for every observation, there is a unique latent state that can generate it. Thus this setting is still an MDP and different from partially-observed MDPs (POMDPs) where sample-efficient learning is, in general, intractable. Comparatively, there is limited work on algorithms that can transfer knowledge between two Block MDPs in a provably-efficient manner. In this work, we initiate the theoretical study of transfer learning from one Block MDP to another.

\textbf{Robust reinforcement learning.} Our work builds on top of dynamic programming algorithms for robust reinforcement learning~\citep{iyengar2005robust,bagnell2001solving}. Our algorithm assumes the uncertainty set has a particular structure under which the max-min optimization in the above work has a a closed-form solution. While our solution concept is also to learn a robust policy against a certain uncertainty MDP set, the task that we study is quite different. Our goal is transfer while simultaneously learning to recognize the observations.

\section{Problem Setting}
\label{sec:problem_setting}
We next formalize our modeling setup and assumptions.
As a running example, we consider a navigation problem on a floor of a building, where the goal is to reach a certain location by a robot that can turn left and right and move forward. The robot senses the environment via lidar readings and/or a camera, but it does not have access to its position or orientation.
We use the notation $\Delta(S)$ for the set of probability distributions over a set $S$, and $[n]$ to denote the set $\set{1,2,\dotsc,n}$.

\subsection{Target environment}

We assume that the target environment is a \emph{block MDP} (see, e.g., \citealp{du2019provably,misra2020kinematic,zhang2020invariant}). In a block MDP, observations are high-dimensional (e.g., lidar and camera readings), but emitted by a finite number of latent states (e.g., location and orientation of a robot); latent states are not directly observed, but they can be determined from the observations they emit.\looseness=-1

Formally, a block MDP is a triple \textit{$\blockM=\tuple{M, \Xcal,q}$}. The first component of the triple is a standard episodic MDP $M=\tuple{\Scal,\Acal,\sinit,H, T, R}$, referred to as the \emph{latent MDP}, with a finite state space~$\Scal$, referred to as the \emph{latent state space}, a finite action space $\Acal$, initial latent state $\sinit$, horizon $H$, transition function $T:[H]\times\Scal\times\Acal\to\Delta(\Scal)$, and reward function $R:[H]\times\Scal\times\Acal\to [0,1]$; transition probabilities and reward function values are written as $T_h(s'\given s,a)$ and $R_h(s,a)$. The second component of the block MDP triple is an \emph{observation space} $\Xcal$, which is typically large and possibly infinite. The final component is an \emph{emission function} $q:\Scal\to\Delta(\Xcal)$, which describes the conditional distribution over observations given any latent state; its values are written as $q(s\given x)$.

We assume that $q$ satisfies the block MDP assumption, meaning that there exist disjoint sets $\set{\Xcal_s}_{s\in\Scal}$ such that if $x\sim q(\cdot\given s)$ then $x\in\Xcal_s$ with probability 1. This means that there exists a \emph{perfect decoder} $\phi_\blockM: \Xcal \rightarrow \Scal$ that maps an observation $x$ to the unique state $\phi_\blockM(x)$ that emits it.

An agent interacts with a block MDP in a sequence of episodes, each generated as follows:
Initially, $s_1 = \sinit$.
At each step $h=1,\ldots,H$, the agent observes
$x_h \sim \obsmap(\cdot\given s_h)$, then takes action $a_h$, accrues reward
$r_h = R_h(s_h, a_h)$, after which the MDP transitions to state
$s_{h+1} \sim T_h(\cdot\given s_h, a_h)$. The agent does \emph{not} observe the latent states $s_h$, only the observations $x_h$ and rewards $r_h$. Observations and actions up to step $h$ are denoted $\xb_{1:h}$ and $\ab_{1:h}$.

We denote the block MDP describing the target environment by $\blockMreal =\tuple{\Mreal, \Xcal, q^\star}$ where $\Mreal=\tuple{\Scal,\Acal,\sinit,H, \Treal, \Rreal}$ is the target latent MDP. The perfect decoder for $\blockMreal$ is denoted as~$\phi^\star$.\looseness=-1

\textbf{Practicable policy.}
Behavior of an agent in a target environment is formalized as a (non-Markovian) \emph{practicable policy}. A practicable policy prescribes which action to take given any sequence of observations, i.e., it is a mapping $\pi:\Xcal^{\le H}\to\Acal$, where $\Xcal^{\le H}=\cup_{h=1}^H\Xcal^h$. The action taken by $\pi$ on $\xb_{1:h}$ is written $\pi_h(\xb_{1:h})$.
The expected sum of rewards in $\blockM$, when actions $a_h$ are chosen according to a practicable policy $\pi$, i.e., when $a_h=\pi_h(\xb_{1:h})$, is referred to as the \emph{value of $\pi$ in $\blockM$} and denoted
$V^\pi_\blockM=\Expt_{\blockM,\pi}[r_1+r_2+\dotsb+r_H]$; the subscript in the expectation signifies the probability distribution over episode realizations when the environment follows $\blockM$, and actions are chosen according to $\pi$.

We also allow practicable policies to be (Markovian) mappings
$\pi:[H]\times\Xcal\to\Acal$ which choose every action according to the last observation,
so that $a_h=\pi_h(x_h)$ for all $h$.
Whether a particular practicable policy is Markovian or not will generally be clear from context.

\subsection{Abstract simulator}

Our learning algorithm can interact with the target environment, but it additionally has access to an \emph{abstract simulator}, which provides an idealized and abstracted version of the target environment.
Formally, an abstract simulator is an episodic MDP denoted $\Msim = \tuple{\Scal,\Acal,\sinit,H,\Tsim,\Rsim}$, with the same state space, action space, start state and horizon as the latent MDP $\Mreal$, but not necessarily the same transition and reward functions. Furthermore, we assume that abstract simulator is deterministic:
\begin{assumption}
\label{asm:deterministic}
The abstract simulator $\Msim$ is deterministic, i.e., $\Tsim_h(s_{h+1} \given s_h, a_h)\in\set{0,1}$.\looseness=-1
\end{assumption}

In order for the abstract simulator to be useful, it must approximate target environment. In this paper we assume that the target environment can be viewed as a ``perturbed'' version of the abstract simulator, using a notion of perturbation inspired by the concept of trembling-hand equilibria from extensive-form games \citep{selten1975reexamination}. Specifically, we say that an MDP $M'$ is an $\eta$-perturbation of another MDP $M$ if its dynamics can be realized by following $M$'s dynamics while distorting agent actions according to some (unknown) ``noise'' distribution, referred to as $\xi$ in the definition below, which keeps actions unchanged with probability at least $1-\eta$:

\begin{definition}[$\eta$-perturbation]
\label{def:perturb_mdp_class}
We say that an MDP $M' =\tuple{\Scal,\Acal,\sinit,H, T', R'}$ is an \emph{$\eta$-perturbation} of an MDP $M=\tuple{\Scal,\Acal,\sinit,H, T, R}$
if there exists a function $\xi:[H]\times\Scal\times\Acal\to\Delta(\Acal)$ that satisfies $\xi_h(a\given s,a)\ge 1-\eta$ for all
$h$, $s$, $a$,
and such that
\begin{align*}
&\textstyle
   T'_h(s'\given s,a)
   = \sum_{a' \in \Acal} T_h(s'\given s,a')\xi_h(a'\given s,a)
\\
&\textstyle
   R'_h(s,a)
   = \sum_{a' \in \Acal} R_h(s,a')\xi_h(a'\given s,a)
\end{align*}
for all
$h$, $s$, $a$, $s'$; thus MDP $M'$ can be viewed as following the dynamics of $M$ in which each action $a$ is stochastically replaced (``perturbed'') according to $\xi$.

The set of all $\eta$-perturbations of $M$ is denoted $\perturbclass{M}{\eta}$.
\end{definition}

We assume that target environment is at $\eta$-perturbation of the abstracted simulator for a value of $\eta<0.5$. Thus, most of the time the target environment transitions after each action ``as intended'' (i.e., following known dynamics of the abstract simulator), but with a probability at most $\eta$ it may depart from the intended action due to an inherent, but unknown stochasticity:

\begin{assumption}\label{assum:perturbed}
$\Mreal$ is an $\eta$-perturbation of the abstract simulator $\Msim$ for some $\eta < 0.5$.
\end{assumption}

\textbf{Abstract policy.}
Behavior of an idealized agent that can directly access latent state is formalized as a (Markovian) \emph{abstract policy}.
An abstract policy prescribes what action to take in each state~$s$ at a given step~$h$, i.e., it is a mapping $\latentpolicy: [H] \times \Scal\to\Acal$; we write $\latentpolicy_h(s)$ for the action taken by $\latentpolicy$ in step $h$ and state $s$. The expected sum of rewards in an episodic MDP $M=\tuple{\Scal,\Acal,\sinit,H, T, R}$, when following $\latentpolicy$ is called the \emph{value of $\latentpolicy$ in $M$} and denoted
$V^\latentpolicy_M=\Expt_{M,\latentpolicy}[r_1+r_2+\dotsb+r_H]$.

\textbf{Robust abstract policy.}
Since the latent MDP in the target domain is a perturbation of the abstract simulator, we will seek to obtain policies that are robust to \emph{any} allowed perturbation. For a given abstract simulator $\Msim$ and the perturbation level $\eta$, we define a \emph{robust abstract policy} $\rho$ to be a policy that achieves the largest possible value under the worst-case choice of perturbation:
\begin{align}\label{eqn:robust_policy_main_paper}
    \robustlatentpolicy  =
    \adjustlimits\argmax_{\psi \in \latentpolicyset}\min_{M \in \perturbclass{\Msim}{\perturbradius}} V^{\psi}_{M},
\end{align}
where $\latentpolicyset$ is the set of all mappings from $[H] \times \Scal$ to $\Acal$.

In a natural way, a robust abstract policy $\robustlatentpolicy$ can
be composed with the perfect decoder $\phi^\star$ to obtain a
(Markovian) practicable policy
$\robustlatentpolicy \circ \phi^\star:[H]\times\Xcal\rightarrow\Acal$
mapping observations in the target environment
to actions while still maximizing the worst-case
reward among perturbations of $\Msim$.
We aim for
algorithms that find practicable policies that perform almost as well as this \emph{robust practicable policy}.

\subsection{The learning setting}
\label{sec:formal-learning-setting}

We can now formally define our learning setting.
A learning algorithm $\Alg$ in this setting receives as input a
deterministic abstract simulator (episodic MDP)
$\Msim = \tuple{\Scal,\Acal,\sinit,H,\Tsim,\Rsim}$,
meaning it receives the entire MDP
represented in tabular form (or some other computationally convenient
form).
The algorithm is also provided with \emph{oracle access} to a target
environment (block MDP),
$\blockMreal =\tuple{\Mreal, \Xcal, q^\star}$.
This means that the algorithm cannot directly access $\blockMreal$
itself, but can interact with it as an agent would, executing actions $a_h$,
and receiving back observations $x_h$ and rewards $r_h$ in a sequence of episodes,
as described above.
Finally, $\Alg$ is given parameters
$\epsilon>0$, $\delta>0$, and $\eta<0.5$.
It is assumed that $\Mreal$ is an $\eta$-perturbation of $\Msim$.

After interacting with $\blockMreal$,
the algorithm outputs a practicable policy $\pi$.
The goal of learning is for $\pi$ to have value
almost as good as the robust practicable policy with high probability,
that is, for
$\smash{V^{\pi}_{\blockMreal} \ge
V^{\robustlatentpolicy \circ \phi^\star}_{\blockMreal} - \epsilon}$
with probability at least $1-\delta$
(where probability is over the algorithm's randomization as well as
randomness in the interactions with the target environment).
Furthermore, we require the number of episodes executed by the
algorithm before outputting a policy $\pi$ to be bounded by a
polynomial in the number of actions $|\Acal|$, the horizon $H$,
$1/\epsilon$, $1/\delta$, and $1/(1-2\eta)$.
Note importantly that this polynomial must have no explicit dependence on the
number of states $|\Scal|$ or observations $|\Xcal|$ in the target environment. An algorithm that satisfies these criteria (given the stated
assumptions) is said to achieve an
\emph{efficient transfer from abstract simulator}.

\section{Main Algorithm}
\label{sec:algorithm}

Our main contribution is an algorithm $\algname$, which achieves efficient transfer from an abstract simulator. The algorithm operates in two stages. In the first stage, it determines the robust abstract policy $\rho$ for the provided abstract simulator via robust dynamic programming (\pref{alg:robust_dp}). In the second stage, it interacts with the target environment (via oracle access) in order to learn to predict the current latent state based on the current history. The learnt decoding map is then composed with the robust abstract policy to obtain the practicable policy that is returned by the algorithm (see \pref{alg:trembling_hand_decoding}).

\subsection{Robust dynamic programming for abstract simulator}
\label{sec:robust_algorithm}

We obtain the robust abstract policy by instatiating the robust dynamic programming algorithm of \citet{bagnell2001solving} and \citet{iyengar2005robust} to our specific notion of perturbation. The algorithm proceeds by filling out values of the \emph{robust value function} $\smash{\tilde{V}}$, which quantifies the largest sum of rewards achievable starting at any step $h$ and state $s$, when assuming the worst-case perturbation of the input MDP $\Msim$. Specifically,
\[
  \robustdpV{h}(s) =
  \adjustlimits
  \max_{\psi \in \latentpolicyset}
  \min_{M \in \perturbclass{\Msim}{\perturbradius}}
  \Expt_{M,\latentpolicy}\bigBracks{r_h+r_{h+1}+\dotsb+r_H\bigGiven s_h=s}.
\]
Similar to standard dynamic programming, the value function in robust dynamic programming can be filled out beginning with $h=H+1$, where we have $\smash{\robustdpV{h}(s)=0}$, and proceeding backward. In our case, the values $\smash{\robustdpV{h}(s)}$ can be obtained from $\smash{\robustdpV{h+1}(s)}$ using a closed-form expression (\pref{line:robust-value}), which leverages intermediate values $\smash{\robustdpQ{h}(s,a)}$. Note that $\tilde{Q}$ is not quite the robust state-action value function, because it assumes that the action at step $h$ is left unperturbed (and only considers the worst-case perturbation in the following steps). The function $\smash{\tilde{Q}}$ is used to derive the robust policy (\pref{line:robust-extract-policy}).

\begin{theorem}
\label{thm:robust_dp_value}
For any abstract simulator $\Msim$ and any perturbation level $\eta$, \pref{alg:robust_dp} returns the robust policy $\strut\smash{\robustlatentpolicy = \argmax_{\latentpolicy \in \latentpolicyset} \min_{M \in \perturbclass{\Msim}{\perturbradius}} V^{\latentpolicy}_{M}}$.
\end{theorem}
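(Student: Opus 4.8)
The plan is to prove, by backward induction on $h$ from $H+1$ down to $1$, that the value $\robustdpV{h}(s)$ computed by \pref{alg:robust_dp} equals the robust value $\max_{\latentpolicy \in \latentpolicyset}\min_{M\in\perturbclass{\Msim}{\eta}} \Expt_{M,\latentpolicy}[r_h+\dots+r_H \mid s_h=s]$, and simultaneously that the greedy policy extracted on \pref{line:robust-extract-policy} attains this value against \emph{every} admissible perturbation; the theorem then follows by specializing to $h=1$, $s=\sinit$, since $\robustlatentpolicy\in\latentpolicyset$ attaining the robust value makes it a maximizer. The structural fact driving the induction is that $\perturbclass{\Msim}{\eta}$ is \emph{$(s,a)$-rectangular}: by \pref{def:perturb_mdp_class} an $M\in\perturbclass{\Msim}{\eta}$ is specified by choosing, independently for each triple $(h,s,a)$, a distribution $\xi_h(\cdot\mid s,a)\in\Delta(\Acal)$ subject only to the local constraint $\xi_h(a\mid s,a)\ge 1-\eta$. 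Hence the adversary's choice at one $(h,s,a)$ places no constraint on its choices elsewhere, which is precisely what lets $\min_M$ be pushed inside the dynamic program.

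First I would establish the single-step closed form. Because $\Msim$ is deterministic (\pref{asm:deterministic}), each executed action $a'$ in state $s$ at step $h$ leads to a unique successor $s^+_h(s,a')$ and deterministic reward $\Rsim_h(s,a')$; writing $\robustdpQ{h}(s,a') = \Rsim_h(s,a') + \robustdpV{h+1}(s^+_h(s,a'))$ for the continuation value when $a'$ is executed \emph{unperturbed} and the worst case is assumed thereafter, the one-step value of intending action $a$ under a local perturbation is $\sum_{a'}\xi_h(a'\mid s,a)\,\robustdpQ{h}(s,a')$. Minimizing this linear objective over $\xi_h(\cdot\mid s,a)\in\Delta(\Acal)$ subject to $\xi_h(a\mid s,a)\ge 1-\eta$ is a trivial linear program whose optimum keeps the forced mass $1-\eta$ on $a$ and pours the free mass $\eta$ onto $\argmin_{a'}\robustdpQ{h}(s,a')$, yielding $(1-\eta)\robustdpQ{h}(s,a) + \eta\min_{a'}\robustdpQ{h}(s,a')$. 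Taking $\max_a$ and noting the $\min_{a'}$ term is $a$-independent gives the update on \pref{line:robust-value} and identifies $\robustlatentpolicy_h(s)=\argmax_a \robustdpQ{h}(s,a)$; this is the easy, purely algebraic part.

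The substance is in the two induction inequalities. For the lower bound I would fix the greedy $\robustlatentpolicy$ and show by backward induction that $V^{\robustlatentpolicy}_{M,h}(s)\ge \robustdpV{h}(s)$ for \emph{every} $M\in\perturbclass{\Msim}{\eta}$: expanding one Bellman step, applying the inductive hypothesis $V^{\robustlatentpolicy}_{M,h+1}\ge\robustdpV{h+1}$ and then the local constraint $\xi_h(\robustlatentpolicy_h(s)\mid s,\robustlatentpolicy_h(s))\ge 1-\eta$ reduces matters to the same linear program and gives $V^{\robustlatentpolicy}_{M,h}(s)\ge(1-\eta)\robustdpQ{h}(s,\robustlatentpolicy_h(s))+\eta\min_{a'}\robustdpQ{h}(s,a')=\robustdpV{h}(s)$; minimizing over $M$ preserves this. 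For the upper bound I would exhibit a \emph{single} perturbation $M^\dagger$, independent of $\latentpolicy$, that sets $\xi^\dagger_h(\cdot\mid s,a)$ to the local minimizer above at each $(h,s,a)$ — this assembles into a valid member of $\perturbclass{\Msim}{\eta}$ exactly by rectangularity — and show by backward induction that $V^{\latentpolicy}_{M^\dagger,h}(s)\le\robustdpV{h}(s)$ for every $\latentpolicy$, since the step at $(h,s)$ with intended action $\latentpolicy_h(s)$ is driven to $(1-\eta)\robustdpQ{h}(s,\latentpolicy_h(s))+\eta\min_{a'}\robustdpQ{h}(s,a')\le\robustdpV{h}(s)$. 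Combining the two inequalities yields $\robustdpV{h}=\max_\latentpolicy\min_M\Expt_{M,\latentpolicy}[\cdot]$ and that $\robustlatentpolicy$ is a maximizer.

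The main obstacle is the interchange of $\min_M$ with the dynamic program in the upper bound: one must verify that taking the worst local perturbation at every $(h,s,a)$ simultaneously is legitimate, i.e.\ that these per-triple choices are mutually unconstrained and hence define a bona fide element of $\perturbclass{\Msim}{\eta}$, and that distinct successors $s^+_h(s,a')$ arising from different executed actions can each be driven to their own robust value $\robustdpV{h+1}(\cdot)$ by one and the same global $M^\dagger$. Both facts are exactly the $(s,a)$-rectangularity of \pref{def:perturb_mdp_class}; once it is made explicit the induction is routine, with base case $\robustdpV{H+1}\equiv 0$ and the observation that the start-at-$(h,s)$ value depends only on the restriction of $\latentpolicy$ to steps $\ge h$.
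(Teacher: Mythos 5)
Your proposal is correct and follows essentially the same route as the paper's self-contained proof in Appendix~\ref{appendix:proof1} (the ``first principles'' version): the same single-step linear-program argument yielding the $(1-\eta)\max+\eta\min$ form, a backward induction showing the greedy policy's value dominates $\robustdpV{h}$ under every admissible $M$, and a single globally assembled worst-case perturbation (the paper's $\underline{a}$-construction, your $M^\dagger$) justified by per-$(h,s,a)$ rectangularity for the upper bound. The only cosmetic difference is that the paper sandwiches via the weak max-min inequality against $\min_M V^\star_{M,h}$, whereas you bound $\max_\latentpolicy V^\latentpolicy_{M^\dagger,h}$ directly, which amounts to the same thing.
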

(The proof of this theorem and all other proofs in this paper are deferred to the appendix.)

\begin{algorithm}[t]
\caption{Robust Dynamic Programming. $\RDP(\Msim,\eta)$}
\label{alg:robust_dp}
\begin{algorithmic}[1]
\Statex\textbf{Input:} An episodic MDP $\Msim=\tuple{\Scal,\Acal,\sinit,H,\Tsim,\Rsim}$, perturbation level $\eta$.
\smallskip
\State $\smash{\robustdpV{H+1}(s)} \leftarrow 0$ for all $s \in \Scal$
\For{$h=H, \dots, 1$}
    \State \textbf{for} all $s\in\Scal,a\in\Acal$:
           $\quad\robustdpQ{h}(s,a)
               \leftarrow \Rsim_h(s,a) + \sum_{s'} \Tsim_h(s'\given s,a) \robustdpV{h+1}(s')$
           \label{line:bellman-eqn}
    \State \textbf{for} all $s\in\Scal$:$\hphantom{,a\in\Acal}$
           $\quad\robustdpV{h}(s)
               \leftarrow  (1- \perturbradius) \max_a \robustdpQ{h}(s,a) + \perturbradius \min_a \robustdpQ{h}(s,a)$
       \label{line:robust-value}
\EndFor
\State \textbf{Return} $\rho$ defined by
    $\rho_h(s) = \argmax_{a} \robustdpQ{h}(s,a) $
    \label{line:robust-extract-policy}
\end{algorithmic}
\end{algorithm}

\subsection{Learning a decoder in the target environment}
\label{sec:decoding_algorithm}

\newcommand{\xind}{\hspace*{\algorithmicindent}}
\begin{algorithm}[t]
\caption{Transfer from Abstract Simulator using Inverse Dynamics. $\algname(\blockMreal,\Msim,\Fcal,\eta,\epsilon,\delta)$ }
\label{alg:trembling_hand_decoding}
\begin{algorithmic}[1]
\Statex \textbf{Input:}
        Oracle access to target environment $\blockMreal$,
        deterministic abstract simulator $\Msim$,
\Statex \hphantom{\textbf{Input:}}
        optimization-oracle access to a function class
        $\Fcal\subseteq\{\Xcal^2 \rightarrow \Delta(\Acal)\}$,
\Statex \hphantom{\textbf{Input:}}
        perturbation level $\eta<0.5$,
        target accuracy $\epsilon>0$, failure probability $\delta$.
\medskip
\State Let $\rho$ be the robust policy returned by $\RDP(\Msim,\eta)$
       \label{line:rho}
\smallskip
\State Let $n_D\defeq \frac{8H^2 |\Acal|^3\ln(|\Fcal|/\delta)}{\epsilon (1-2\perturbradius) ^2}$
\smallskip
\State Let $\policy_{1}(x)\defeq
       \rho_{1}(\sinit)$ for all $x\in\Xcal$
       \mycomment{Define practicable policy in step $h=1$}
       \label{line:pi-1}
\smallskip
\For{$h=1, \dotsc, H-1$}
\State $\Dcal_h \leftarrow \emptyset$
       \label{line:gather}
       \mycomment{Gather dataset $\Dcal_h$ for learning ``inverse dynamics'' in step $h$}
\For{$\ndecode$ times}\label{line:potas-dataset-start}
\State Follow $\policy_{1:h-1}$ for $h-1$ steps to observe $x_{h}$
\State Take action $a_{h}$ uniformly at random and observe $x_{h+1}$
\State $\Dcal_h \leftarrow \Dcal_h \cup \{(x_{h}, a_{h}, x_{h+1})\}$
\EndFor\label{line:potas-dataset-end}
\State $f_h \defeq \arg\max_{f \in \Fcal} \sum_{(x_{h}, a_{h}, x_{h+1})\in \Dcal_h} \ln f(a_h \given x_h, x_{h+1})$
       \label{line:alg_classification}
       \mycomment{Learn ``inverse dynamics''}
\State Define $\alpha_h:\Xcal^2\to\Acal$ as
       \label{line:potas-alpha}
       \mycomment{Define ``shadow action'' decoder}
\Statex\xind\xind
       $\actionpredictor_{h}(x_h, x_{h+1}) = \argmax_{a \in \Acal} f_h(\cdot \given x_h, x_{h+1})$
\smallskip
\State Define $\phi_{h+1}:\Xcal^{h+1}\to\Scal$ such that
       \label{line:potas-phi}
       \mycomment{Define state decoder}
\Statex\xind\xind
       $\phi_{h+1}(\xb_{1:h+1})$ is the state $s_{h+1}$ reached in $\Msim$,
       when starting in $\sinit$ and executing
\Statex\xind\xind\xind
       $a'_1 =  \actionpredictor_1(x_1,x_2), \dotsc, a'_h = \actionpredictor_h(x_{h},x_{h+1})$
\smallskip
\State Define $\policy_{h+1}:\Xcal^{h+1}\to\Acal$ as
       \label{line:potas-policy}
       \mycomment{Define practicable policy}
\Statex\xind\xind
       $\policy_{h+1}(\xb_{1:h+1})=
       \rho_{h+1}\bigl(\phi_{h+1}(\xb_{1:h+1})\bigr)$
\EndFor
\State \Return $\policy=(\policy_{1}, \dotsc, \policy_{H})$
\end{algorithmic}
\end{algorithm}

We cannot directly apply the robust abstract policy in the target environment, because the latent states are not observable. Therefore, we construct a ``state decoder,'' which uses the history of observation in an episode to predict the current latent state; the state decoder is combined with the robust abstract policy to obtain the practicable policy.
Formally, a (non-Markovian) state decoder is a mapping $\phi:\Xcal^{\le H}\to\Scal$. In step $h$ of an episode, the history of previous observations $\xb_{1:h}$ is used as an input, and the decoder predicts the latent state $s_h$. To construct such a state decoder, we crucially leverage the abstract simulator $\Msim$.

The abstract simulator $\Msim$ is deterministic, so a specific sequence of actions $a_1,\dotsc,a_h$ always leads to the same state. The latent MDP in the target environment is a perturbation of the abstract simulator. This means that when an agent takes action $a_h$ in step $h$, the environment most of the time transitions according to $\Tsim(\cdot\given s_h, a_h)$, but sometimes (with probability at most $\eta$) it transitions according to some other action. The action $a_h$ gets replaced with some ``shadow'' action $a'_h$ according to an unknown noise distribution $\xi$, and the latent state then transitions according to $\Tsim(\cdot\given s_h, a'_h)$. If we knew shadow actions $a'_1,\dotsc,a'_h$, we could then recover the current latent state by simulating that same sequence of actions in the abstract simulator.

To obtain shadow actions, we learn an ``inverse dynamics'' model, which predicts $\smash{a'_h}$ from the observations $x_h$ and $x_{h+1}$ (an approach also used in previous work on block MDPs). We learn a separate inverse dynamics model for each step of an episode. In step $h$, we sample triplets of the form $(x_h,a_h,x'_h)$ across multiple episodes in the target environment, and then fit a model $f_h$ for the conditional probability of $a_h$ given $x_h$ and $x'_h$. The model $f_h(a_h\given x_h,x'_h)$ is referred to as an inverse dynamics model, because it ``inverts'' the dynamics represented by the transition function. We show that if we were able to obtain an exact model $f^\star_h$ of the conditional probability, then the action $a$ with the largest probability $f^\star_h(a\given x_h,x'_h)$ would be the correct shadow action.

As is standard in the block MDP literature, in order to fit an inverse dynamics model, we assume access to an optimization algorithm capable of fitting functions from some class $\Fcal$ to data; we call this algorithm an \emph{optimization oracle for $\Fcal$}. The class $\Fcal$ should be sufficiently expressive to approximate the required conditional probability distribution.

We now have all the pieces required to describe our algorithm. The algorithm first constructs the robust abstract policy (\pref{line:rho}), and defines the practicable policy $\pi_1$ at the initial step $h=1$, where the latent state is known to be $\sinit$ (\pref{line:pi-1}). The algorithm then proceeds iteratively to fill in $\pi_2,\dotsc,\pi_H$. In iteration $h$, the algorithm first learns the inverse dynamics model $f_h$ with the help of the optimization oracle (\pref{line:gather}--\ref{line:alg_classification}). The inverse dynamics model is then used to obtain the shadow action decoder $\alpha_h$ (\pref{line:potas-alpha}), which predicts which action caused the transition from $x_h$ to $x_{h+1}$. Using the shadow action decoders up to step $h$, we can construct a state decoder $\phi_{h+1}$ (\pref{line:potas-phi}), which for a given history of observations $\xb_{1:h+1}$, first predicts their corresponding shadow actions $a'_1,\dotsc,a'_h$ and then uses the abstract simulator to determine the state $s_{h+1}$ that they lead to. Finally, using the state decoder $\phi_{h+1}$, we define the practicable policy at the step $h+1$ to return the same action as the abstract robust policy would return on the decoded state (\pref{line:potas-policy}).

\subsection{Sample Complexity of $\algname$}

We next provide the sample complexity analysis of $\algname$, showing that it indeed achieves efficient transfer from an abstract simulator. In addition to Assumptions~\ref{asm:deterministic} and~\ref{assum:perturbed}, we also need to ensure that the function class $\Fcal$ is expressive enough to contain the conditional probability distribution being fitted by the inverse dynamics model. It turns out that this target probability distribution can be expressed in terms of the transition function of the block MDP $\blockMreal$, which is the function $\blockTreal:[H]\times\Xcal\times\Acal\to\Delta(\Xcal)$ equal to
\[
  \blockTreal_h(x'\given x,a)\defeq q^\star(x'\given s'=\phi^\star(x'))T_h\bigParens{s'=\phi^\star(x')\bigGiven s=\phi^\star(x),a}.
\]
Using $\blockTreal$, we can state our assumption on the class $\Fcal$:
\begin{assumption}\label{asm:realizable}
(Realizability with respect to $\blockMreal$) For any $h \in [H]$, there exists $f^\star_h \in \Fcal$ such that $f^\star_h(a \given x, x') = \frac{\blockTreal_h (x'\given x,a )}{\sum_{a'\in \Acal} \blockTreal_h( x'\given x,a' )}$ for all $x$, $x'$ that can occur at steps $h$ and $h+1$.
\end{assumption}

Realizability assumptions are standard in block MDP literature~\cite{du2019provably,agarwal2020flambe}; in practice they are assured by choosing expressive models such as deep neural networks.

We are now ready to state our main theoretical result---sample complexity of $\algname$.
\begin{theorem}
\label{thm:robust_tranfer}
Let $\Msim$ be a deterministic abstract simulator. Let $\blockMreal$ be a
target environment for which $\Mreal$ is an $\eta$-perturbation of $\Msim$
for some $\eta<0.5$. Let $\Fcal$ be a class of functions satisfies Assumption \ref{asm:realizable} with respect to $\blockMreal$.
Then for any $\epsilon>0$ and $\delta \in (0, 1)$, \pref{alg:trembling_hand_decoding} with oracle access to
$\blockMreal$, optimization-oracle access to $\Fcal$, and inputs $\Msim$, $\eta$, $\epsilon$, and $\delta$,
executes $n=\Ocal\bigl(\frac{H^3 |\Acal|^3\ln(|\Fcal|/\delta)}{\epsilon(1-2\perturbradius) ^2}\bigr)$ episodes and returns a practicable policy $\policy$ that with probability at least $1-\delta$ satisfies
\[
V^{\pi}_{\blockMreal} \ge
V^{\robustlatentpolicy \circ \phi^\star}_{\blockMreal} - \epsilon.
\]
\end{theorem}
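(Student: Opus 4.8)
The plan is to reduce the value gap $V^{\robustlatentpolicy\circ\phi^\star}_{\blockMreal} - V^{\policy}_{\blockMreal}$ to the probability that the learned decoders ever misfire along a trajectory, and then to control that probability through a maximum-likelihood generalization bound. First I would dispatch the sample count: the algorithm runs $H-1$ outer iterations, each collecting exactly $\ndecode$ episodes, so the total is $(H-1)\ndecode = \Ocal\bigl(H^3|\Acal|^3\ln(|\Fcal|/\delta)/(\epsilon(1-2\perturbradius)^2)\bigr)$, as claimed. For the value bound I would couple the trajectory generated by $\policy$ with the one generated by the benchmark $\robustlatentpolicy\circ\phi^\star$ on shared environment randomness (perturbation noise and emissions). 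Since $\policy_{h}(\xb_{1:h})=\robustlatentpolicy_h(\phi_{h}(\xb_{1:h}))$ while $(\robustlatentpolicy\circ\phi^\star)_h(x_h)=\robustlatentpolicy_h(s_h)$, the two policies take identical actions---and the coupled trajectories therefore coincide---for as long as every decoder is correct, i.e.\ $\phi_{h}(\xb_{1:h})=s_h$. A short induction shows $\phi_{h+1}$ is correct whenever each shadow-action decoder $\actionpredictor_t$, $t\le h$, returns an action that in the deterministic $\Msim$ moves $\phi^\star(x_t)=s_t$ to $\phi^\star(x_{t+1})=s_{t+1}$. Because rewards lie in $[0,1]$ and episodes have length $H$, this yields $V^{\robustlatentpolicy\circ\phi^\star}_{\blockMreal}-V^{\policy}_{\blockMreal}\le H\cdot\Prob[\text{some }\actionpredictor_t\text{ misfires along the }\policy\text{-trajectory}]$.

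The crux is a structural lemma about the target conditional $\fstar_h$ identified in Assumption~\ref{asm:realizable}. Since the emission factor $q^\star(x'\given\phi^\star(x'))$ in $\blockTreal_h$ cancels upon normalizing over actions, $\fstar_h(a\given x,x')$ is proportional in $a$ to $\Treal_h(\phi^\star(x')\given\phi^\star(x),a)$. Writing $s=\phi^\star(x)$ and $s'=\phi^\star(x')$ and combining Definition~\ref{def:perturb_mdp_class} with Assumption~\ref{asm:deterministic}, any action $a$ whose deterministic $\Msim$-transition from $s$ reaches $s'$ satisfies $\Treal_h(s'\given s,a)\ge\perturbprob_h(a\given s,a)\ge 1-\perturbradius$, whereas any action not reaching $s'$ satisfies $\Treal_h(s'\given s,a)\le 1-\perturbprob_h(a\given s,a)\le\perturbradius$. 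Because $1-\perturbradius>\perturbradius$, the maximizer of $\fstar_h(\cdot\given x,x')$ always reaches $s'$, and the $\fstar_h$-gap between the best action reaching $s'$ and any action not reaching $s'$ is at least $(1-2\perturbradius)/|\Acal|$ (the normalizer $Z=\sum_{a'}\Treal_h(s'\given s,a')$ satisfies $Z\le|\Acal|$). I would then argue this margin is robust to estimation error: if $\TV{f_h(\cdot\given x,x')-\fstar_h(\cdot\given x,x')}<(1-2\perturbradius)/(2|\Acal|)$, then $\actionpredictor_h(x,x')=\argmax_a f_h(a\given x,x')$ still reaches $s'$, since a flip of the $\argmax$ would force the total pointwise deviation to exceed the margin, contradicting the $\TV{}$ bound.

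Finally I would turn the margin condition into a sample bound. The standard MLE guarantee for a finite realizable class gives, with probability $1-\delta/H$ per step, $\Expt_{\Dcal_h}\TV{f_h-\fstar_h}^2=\Ocal(\ln(|\Fcal|/\delta)/\ndecode)$; a union bound over the $H-1$ steps costs the stated $\delta$. Markov's inequality then bounds the $\Dcal_h$-probability of the bad set $\{\TV{f_h-\fstar_h}\ge(1-2\perturbradius)/(2|\Acal|)\}$ by $\Ocal\bigl(|\Acal|^2\ln(|\Fcal|/\delta)/(\ndecode(1-2\perturbradius)^2)\bigr)$. Because $\Dcal_h$ is generated by the \emph{same} roll-in $\policy_{1:h-1}$ used at deployment but with a \emph{uniform} action at step $h$, an importance-weighting argument (each deployed action has probability $1/|\Acal|$ under uniform sampling, and the roll-in marginals coincide) shows the deployment probability of the bad set is at most $|\Acal|$ times its $\Dcal_h$-probability, hence $\Ocal\bigl(|\Acal|^3\ln(|\Fcal|/\delta)/(\ndecode(1-2\perturbradius)^2)\bigr)$ per step. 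Summing over the $H$ steps and multiplying by the horizon factor $H$ from the coupling bound yields a value gap of $\Ocal\bigl(H^2|\Acal|^3\ln(|\Fcal|/\delta)/(\ndecode(1-2\perturbradius)^2)\bigr)$, which the choice $\ndecode=8H^2|\Acal|^3\ln(|\Fcal|/\delta)/(\epsilon(1-2\perturbradius)^2)$ drives below $\epsilon$. I expect the main obstacle to be precisely the distribution-shift bookkeeping of this last paragraph: relating the uniform-exploration training distribution $\Dcal_h$ to the on-policy deployment distribution (resolved by importance weighting, at the cost of a factor $|\Acal|$ and requiring the training and deployment roll-ins to be identical so errors do not compound across iterations), together with chaining the per-step decoder guarantees into a global trajectory guarantee via the coupling and a first-error union bound.
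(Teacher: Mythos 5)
Your proposal is correct, and everything downstream of the value-gap reduction is essentially identical to the paper's proof: the same $(1-2\perturbradius)/|\Acal|$ margin for $f^\star_h$ between actions that do and do not realize the observed latent transition in $\Msim$ (Lemma~\ref{lem:1step_decoding_accuracy}), the same MLE bound on $\Expt\,\TV{f_h-f^\star_h}^2$ (Theorem~\ref{thm:flambe_tv_error}), the same Markov-inequality conversion of that squared-TV bound into a misclassification probability, and the same importance-weighting factor of $|\Acal|$ to pass from the uniform-action training distribution to the on-policy deployment distribution. The one place you genuinely diverge is the first step. The paper invokes the performance difference lemma and bounds the expected \emph{number} of steps at which $\pi$'s action disagrees with $\robustlatentpolicy\circ\phi^\star$, bounding each disagreement indicator by the sum of all preceding decoder-failure indicators; this double sum costs an extra factor of $h$ and yields a value gap of order $H^3|\Acal|^3\ln(|\Fcal|/\delta)/(\ndecode(1-2\perturbradius)^2)$, which is why the appendix concedes in a footnote that the total episode count should be $H^4$ rather than the $H^3$ stated in the theorem. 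Your coupling argument replaces the expected count by the probability of the \emph{first} decoder failure, union-bounded over the $H$ steps, which saves that factor of $H$ and gives a gap of order $H^2|\Acal|^3\ln(|\Fcal|/\delta)/(\ndecode(1-2\perturbradius)^2)$ --- exactly what the algorithm's choice of $\ndecode$ drives below $\epsilon$, so your route actually recovers the $n=\Ocal(H^3\cdots)$ claimed in the theorem statement. The coupling does require fixing the per-$(h,s,a)$ perturbation draws and emissions in advance so that the two trajectories coincide until the first misfire, but this is routine given that actions agree up to that point, and your per-step failure probabilities are evaluated under precisely the deployment marginal $\Prob_{h,\pi}$ for which the importance-weighted guarantee holds.
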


This result shows that~\pref{alg:trembling_hand_decoding} achieves efficient transfer from abstract simulator, meaning that its sample complexity is independent of the sizes of $|\Scal|$ and $|\Xcal|$. We also achieve a fast rate, $\Ocal(1/\epsilon)$, with respect to the sub-optimality $\epsilon$; the dependence on $\ln|\Fcal|$ is standard. There may be room for improvement in terms of the horizon $H$ and action space size $|\Acal|$, but these were not our focus here.

\subsection{Extensions}

  In the appendix, we show how $\algname$, under additional assumptions, can be extended to settings
  where the abstract simulator has a stochastic start state (but deterministic transitions).

  Although our focus here is on transfer from an abstract simulator, and we emphasize the relative ease of constructing abstract simulators, our approach could also be used for transfer learning between two block MDP environments that share latent space structure. In the source domain, we could use an existing block MDP approach \cite[e.g.,][]{misra2020kinematic} to infer latent structure, which could then be used as an abstract simulator to speed up learning in the target domain using $\algname$.

\section{Experiments}
\label{sec:experiments}

\begin{figure*}[!t]
    \centering
    \begin{minipage}{.35\textwidth}
        \centering
        \includegraphics[width=1\textwidth]{./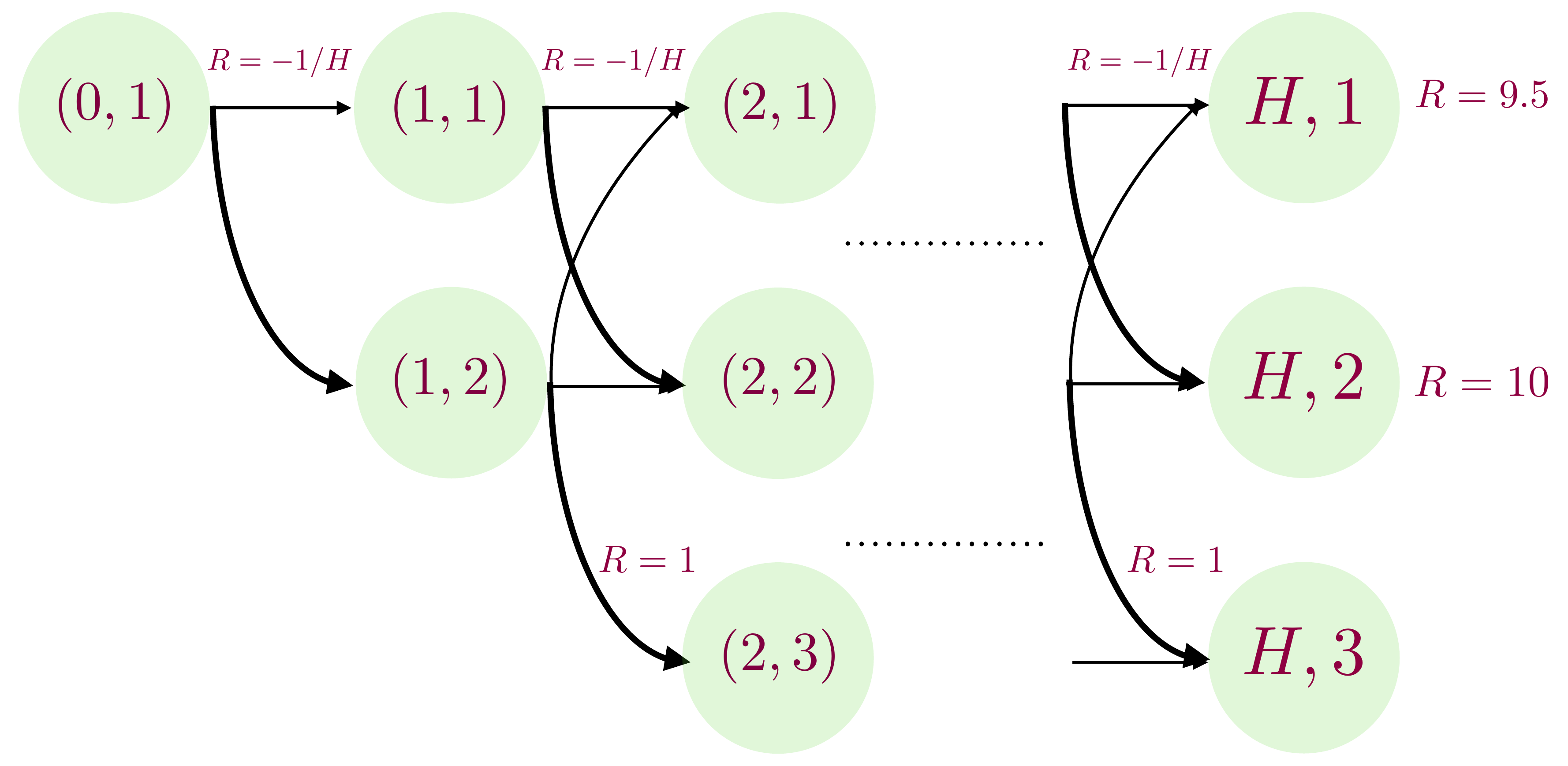}
        \vspace{0.2cm}
        \subcaption{Combination lock environment}
        \label{fig:combolock_mdp}
    \end{minipage}
    \hspace{0.05cm}
    \begin{minipage}{.3\textwidth}
        \centering
       \includegraphics[width=1\textwidth]{./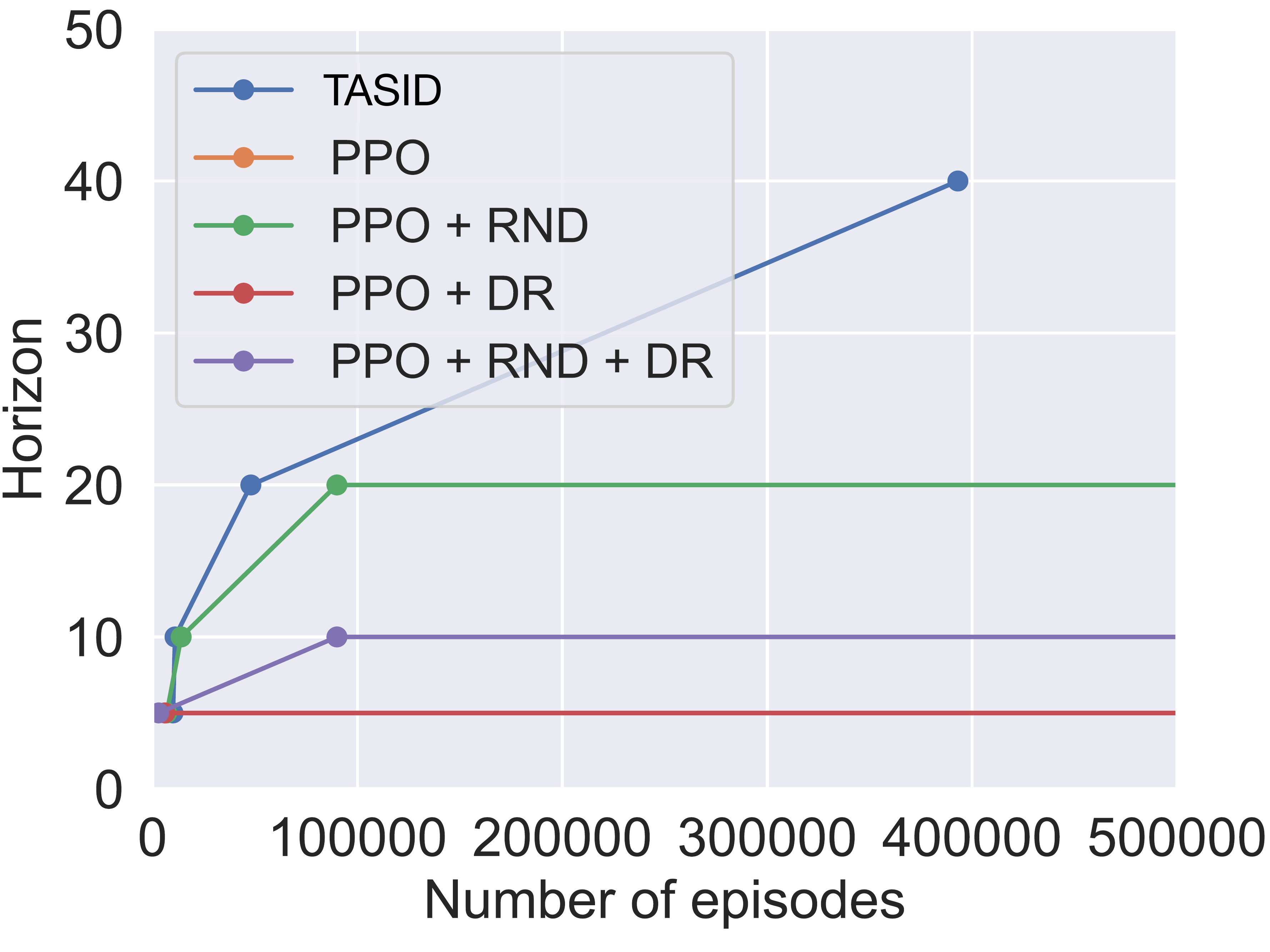}
        \subcaption{Horizon solved \emph{vs} \# episodes}
        \label{fig:combolock_diff_h}
    \end{minipage}
    \hspace{0.05cm}
    \begin{minipage}{.3\textwidth}
        \centering
        \includegraphics[width=1\textwidth]{./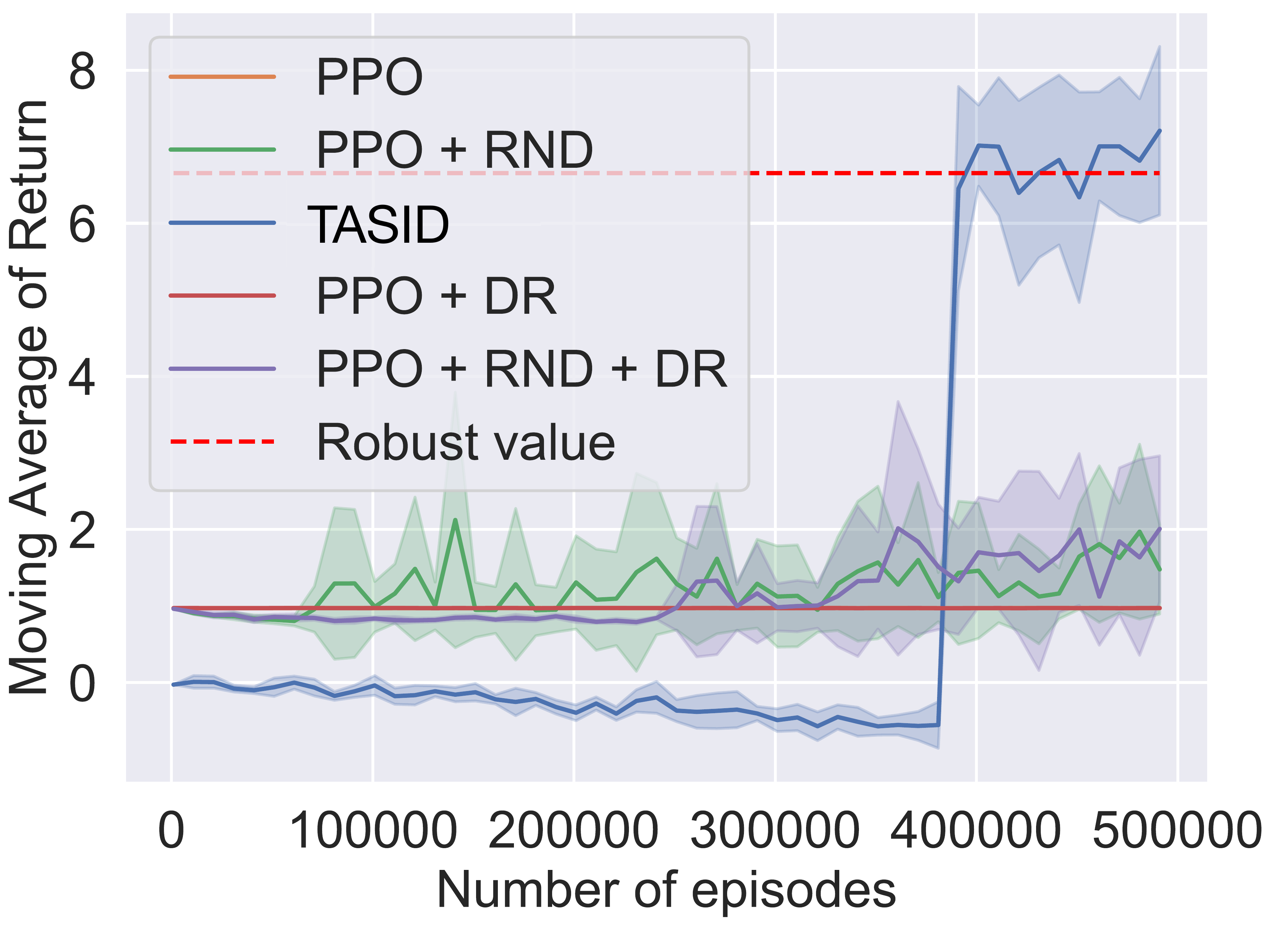}
        \subcaption{Reward curve for $H=40$}
    \label{fig:reward_curve_h40_combolock}
    \end{minipage}
    \caption{
    \textbf{Left:} Latent MDP in the combination lock. \textbf{Center:}
    The largest problem (expressed as horizon $H$) that each algorithm can solve (defined as reaching 95\% of the value of the robust policy) within a given number of episodes; we considered $H=5,10,20,40$ and $|\Acal|=10$. We allow a maximum of $5 \times 10^5$ number of episodes in the target domain. We report median number of episodes across five trials with different seeds (see the appendix for full details).
    \textbf{Right:} Total reward per episode for the problem size $H=40$. $\algname$ is a batch algorithm trained with $400000$ episodes, thus its performance before the end of training is very low and after the end of training stays constant.}
    \label{fig:combo_lock_result}
    \figsqueeze
\end{figure*}

We evaluate $\algname$ on two simulation environments to test four aspects: state-space-size independent sample complexity; scalability to complex visual observations; robustness of a learnt policy; and scalability to large state spaces. We summarize the results here and defer the details, including hyperparameter selection, detailed domain description, algorithm and baselines implementation and a full description of results under various setups to the appendix.

\noindent\textbf{Can $\algname$ solve problems that require strategic exploration?} We theoretically showed that $\algname$ can solve problems that require performing strategic exploration using a small number of episodes. We test this empirically on a challenging environment called \emph{combination lock}.

\noindent\textbf{Combination lock.} We first describe the abstract simulator $\Msim$ visualized in Figure~\ref{fig:combolock_mdp}. It has an action space $\Acal$, horizon $H$, and a state space $\{(h, i): h \in \{0\} \cup [H], i \in [3]\}$, with the initial state $(0, 1)$. As we will see, states $\{(h, 1), (h, 2) \mid h \in [H]\}$ are \emph{good} states from which optimal return is possible, while states $\{(h, 3) \mid h \in [H]\}$ are \emph{bad} states. In state $(h,1)$, one good action leads to state $(h+1, 1)$, and all other $|\Acal|-1$ actions lead to state $(h+1, 2)$. In state $(h, 2)$, one good action leads to state $(h+1, 2)$, another good action leads to state $(h+1, 1)$, and all other $|\Acal|-2$ actions lead to $(h+1, 3)$. All actions in state $(h, 3)$ lead to state $(h+1, 3)$. The identities of good actions are unknown and are different for different states. Any transition to state $(H, 1)$ gives a reward of~$9.5$, whereas transition to state $(H, 2)$ gives a reward of $10$, and transition to state $(H, 3)$ gives a reward of $0$. To ``mislead'' the agent, transitions from $(h, 2)$ to $(h+1,3)$ give a reward of~$1$ and transitions from $(h, 1)$ to $(h+1, 1)$ give a reward of~$-1/H$. All other transitions give a reward of 0.

The latent MDP $\Mreal$ of the target environment is constructed by taking a random $\eta$-perturbation of~$\Msim$ for $\eta = 0.1$. Exploration in both $\Mreal$ and $\Msim$ can be difficult due to misleading rewards and challenging dynamics, where most actions lead to bad states. The optimal policy in $\Msim$ finishes in the state $(H, 2)$ and achieves the value of 10. However, the robust policy will attempt to visit the state $(H, 1)$ which lies on a more stable trajectory. Meanwhile, the optimal policy in $\Msim$ will fail in $\Mreal$, because perturbations are likely to move the agent into a bad state.

Observations in the target environment are real-valued vectors of dimension $2^{\lceil \log_2(H+4) \rceil }$. For a latent state $(h, i)$, the observation is generated by first creating an $(H+4)$-dimensional vector by concatenating one-hot encodings of $h$ and $i$, concatenating it with another $(H+4)$-dimensional i.i.d.\ Bernoulli noise vector, applying a fixed coordinate permutation, element-wise adding Gaussian noise, and finally multiplying the vector with a Hadamard matrix.
The goal of this process is to ``blend'' the informative one-hot encoding with uninformative Bernoulli noise.

We compare $\algname$ with four baselines.
The first is PPO~\citep{schulman2017proximal}, a policy-gradient-based algorithm. The second is PPO-RND, which augments PPO with an exploration bonus based on prediction error~\citep{burda2018exploration}. These two baselines do not use abstract simulator and run on the target environment from scratch.
The other two baselines, PPO+DR and PPO+RND+DR, enhance PPO and PPO+RND with domain randomization (DR), where the policy is pre-trained on a set of randomized block MDPs~\citep{tobin2017domain}. PPO+DR and PPO+RND+DR are designed for transfer RL, but unlike our work, they rely on observation-based similarity. During the pre-training phase of domain randomization, we generate block MDPs by following a similar process as we used to generate the target environment, but when generating the emission function $q$, only the permutation matrix is regenerated (and other components are kept the same as in the target environment).

\begin{figure*}[!t]
    \centering
    \begin{minipage}{.15\textwidth}
        \centering
        \includegraphics[width=\textwidth]{./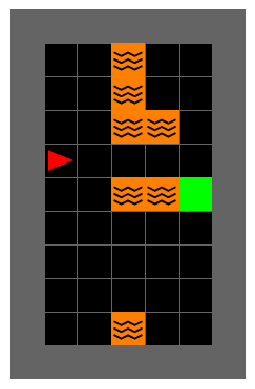}
        \subcaption{Minigrid}
        \label{fig:minigrid_original}
    \end{minipage}
    \begin{minipage}{.15\textwidth}
        \centering
        \includegraphics[width=\textwidth]{./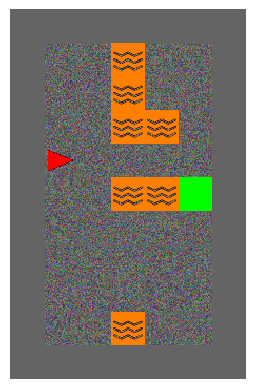}
        \subcaption{Noise}
        \label{fig:minigrid_noisy}
    \end{minipage}
    \begin{minipage}{.33\textwidth}
        \centering
        \includegraphics[width=1\textwidth]{./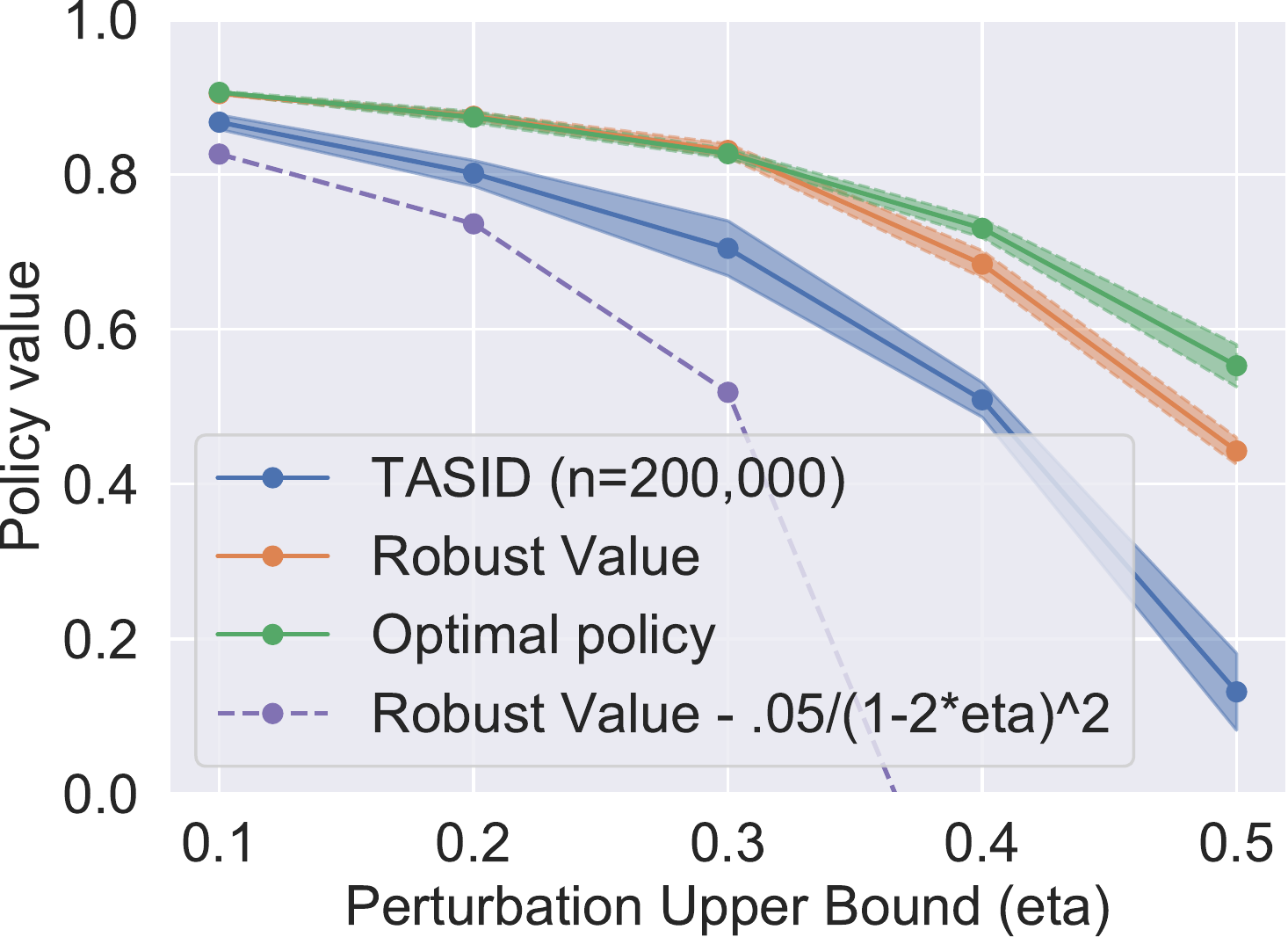}
        \subcaption{Performance \emph{vs} perturbation}
        \label{fig:minigrid_result_alpha}
    \end{minipage}
    \begin{minipage}{.33\textwidth}
        \centering
        \includegraphics[width=1\textwidth]{./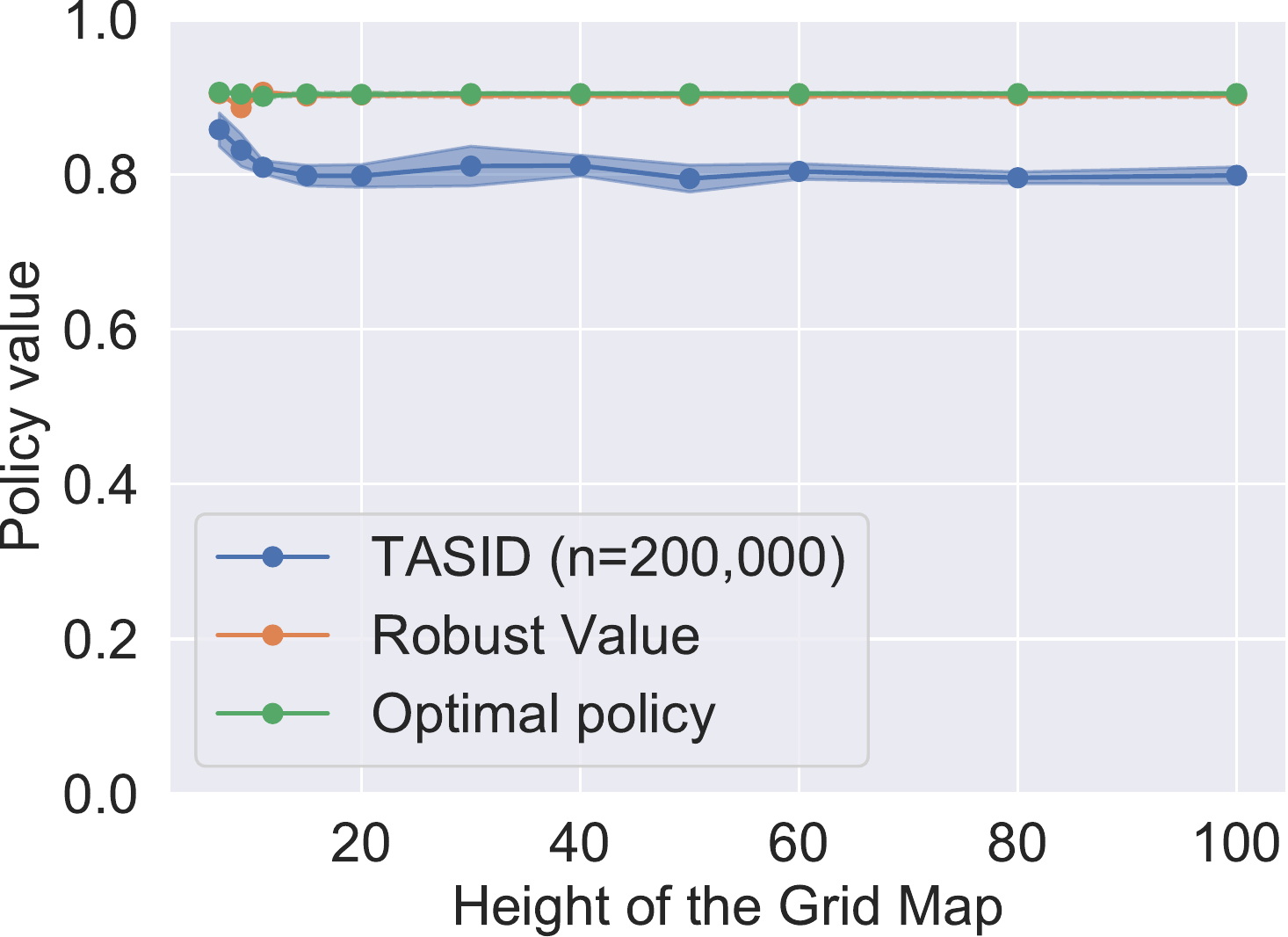}
        \subcaption{Performance \emph{vs} state space size}
        \label{fig:minigrid_result_height}
    \end{minipage}
    \caption{Map of a $7\times11$ MiniGrid environment (a) with noisy observations (b) and five actions: forward, turn left, turn right, turn left + forward, turn right + forward. (c): The performance of \algname\ with different amount of perturbation $\perturbradius$. (d): The performance of \algname\ with different height of the map.\looseness=-1}
    \label{fig:minigrid}
    \figsqueeze
\end{figure*}

\pref{fig:combolock_diff_h} plots the size of the problem (expressed as horizon $H$) that each algorithm can solve (defined as reaching at least 95\% of the value of the robust policy) within a given number of episodes; we considered $H=5,10,20,40$. The plot shows that PPO and PPO+DR fail to solve the problem beyond the smallest size ($H=5$), which is not surprising as they are not designed to perform strategic exploration. PPO+RND can solve the problem until $H=20$, but fails for $H=40$, showing that RND bonus helps to an extent but cannot solve harder problems. Interestingly, PPO+RND+DR underperforms compared to PPO+RND even though it has access to pre-training. We believe this is because RND reward bonus does not work well with domain randomization which can mislead the RND bonus by randomizing the observation. Finally, $\algname$ can solve the problem for all values of $H$ and is more sample efficient than the baselines.\looseness=-1

\noindent\textbf{Can $\algname$ scale to complex observation spaces?} We evaluate $\algname$ in the visual MiniGrid environment~\citep{gym_minigrid} with noisy observations. We test on the grid-world map shown  in~\pref{fig:minigrid_original}.
The green square represents the goal, and orange squares are lava that should be avoided. The agent's position and direction is shown by the red triangle. The agent gets a reward of $1$ on reaching the goal, $-1$ for reaching the lava, and $-0.01$ for every other step. The state encodes the position and direction of the agent and the current time step. There are 5 actions: move forward, turn left, turn right, and combinations of turning and moving. The abstract simulator $\Msim$ models the deterministic transitions on the grid. The target environment is an $\eta$-perturbation of $\Msim$ $\perturbradius = 0.1$. The agent cannot observe the whole map as in Figure \ref{fig:minigrid_original}, but only an image of the $7\times7$ grids ($56\times56$ pixels) in the direction it is facing, with i.i.d.\ random noise added to all black pixels as shown in~\pref{fig:minigrid_noisy}. This type of random noise has been studied in the literature and has been shown to pose challenge to RL algorithms~\citep{burda2018large}.

In this grid-world, the agent can either use the top route to reach the goal, or a bottom route. The top route is shorter but the agent is more likely to visit the lava due to perturbations. Therefore, the robust policy prefers the bottom route which is longer but more robust.

\noindent\textbf{How robust is $\algname$ to different $\perturbradius$?} 
Our theoretical analysis suggests that the suboptimality scales with $\Ocal(\frac{1}{1-2\perturbradius})$. We evaluate this error empirically on gridworld for various values of $\perturbradius$ in $[0.1, 0.5]$ and show the results in~\pref{fig:minigrid_result_alpha}.
The performance matches the theoretical prediction of $v^\robustlatentpolicy -  \Ocal(\frac{1}{1-2\perturbradius})$ until $\perturbradius=0.4$. Interestingly, even though the theoretical guarantee is vacuous at $\perturbradius=0.5$, the algorithm still learns a non-trivial policy ($V^\pi$ greater than the value of random walk policy). 

\noindent\textbf{How does $\algname$'s performance scale with $|\Scal|$?} We showed theoretically that $\algname$'s sample complexity to learn a near-optimal robust policy is independent of the size of the state space. We empirically test this by varying the height of the grid-world environment while keeping the width and the horizon $H$, and overall layout the same. Results are presented in~\pref{fig:minigrid_result_height}. As height changes from $10$ to $100$, the size of the state space increases ten times; however, we do not see a significant drop in the performance of $\algname$ when using a fixed number of episodes in the target environment.

\section{Conclusion and Future Work}
\label{sec:conclusion}
We presented a new algorithm $\algname$ for transfer RL in block MDPs that quickly learns a robust policy in the target environment by leveraging an abstract simulator. We have also proved a sample complexity bound, which does not scale with the size of the state space or the observation space. Finally, we demonstrated theoretical properties by empirical evaluation in two domains.

Our work raises many important questions. Our theoretical analysis depends on the assumption that the abstract simulator is deterministic and the the target environment is its perturbation. 
Is it possible to relax these requirements and still obtain sample bounds that do not scale with the state space size? One direction might be to leverage ``full simulators'' (including observations), and incorporate observation similarity between the simulator and the real environment, along with similarity in the latent dynamics. Another important question is how to extend the presented approach to continuous control problems in robotics, which are natural domains for sim-to-real transfer.

\bibliographystyle{plainnat}
\bibliography{ref}

\newpage
\onecolumn
\appendix
In the appendices, we include proofs, extended theoretical results as well as experimental details for all claims in the main paper, under the following structure.
\begin{itemize}
    \ifarxiv \else \item Appendix \ref{app:related_work} includes a more detailed discussion of related work.  \fi
    \item Appendix \ref{appendix:proof1} includes the proof for the analysis of~\pref{alg:robust_dp}.
    \item Appendix \ref{appendix:proof2} includes the proof for the analysis of~\pref{alg:trembling_hand_decoding}.
    \item Appendix \ref{appendix:stochastic_start} includes the algorithm and analysis for the case of stochastic initial states, based on the main algorithm. 
    \item Appendix \ref{appendix:experiment} includes more details of the experiments.
\end{itemize}

\section{Proofs for Algorithm \ref{alg:robust_dp}}
\label{appendix:proof1}
We introduce the definition of the (single-step) perturbed transition and reward set, following the definition of $\perturbclass{\Msim}{\eta}$. 
\begin{definition}[$\eta$-perturbation]
\label{def:perturb_rt_class}
We say that a transition and reward function pair $(R',T')$ at step $h$ is an \emph{$\eta$-perturbation} of $T_h, R_h$ if there exists a function $\xi:\Scal\times\Acal\to\Delta(\Acal)$ that satisfies $\xi(a\given s,a)\ge 1-\eta$ for all $s$, $a$,
and such that
\begin{align*}
&\textstyle
   T'(s'\given s,a)
   = \sum_{a' \in \Acal} T_h(s'\given s,a')\xi(a'\given s,a)
\\
&\textstyle
   R'(s,a)
   = \sum_{a' \in \Acal} R_h(s,a')\xi(a'\given s,a)
\end{align*}
With slight abusing of the notation, the set of all $\eta$-perturbations of $T_h, R_h$ is denoted $\perturbclass{(R_h, T_h)}{\eta}$.
\end{definition}
It is straightforward to see that $(R', T') \in \perturbclass{(R_h, T_h)}{\eta}$ if and only if there exists MDP $M' \in \perturbclass{M}{\eta}$ such that $(R', T')$ is the $h$-th step reward and transition functions in $M'$ and $(R_h, T_h)$ is the $h$-th step reward and transition functions in $M$.

Now we prove a lemma about the function $Q_h$ used in Algorithm \ref{alg:robust_dp}.
\begin{lemma} 
\label{lem:robust_be}
$$ (1- \perturbradius) \max_a \robustdpQ{h}(s,a) + \perturbradius \min_a \robustdpQ{h}(s,a) = \max_{a} \min_{ (R,T) \in \perturbclass{(\Rsim_h,\Tsim_h)}{\perturbradius}} \Expt_{ s' \sim T(\cdot|s,a) } [R(s,a) + \robustdpV{h+1}(s') ] $$
\end{lemma}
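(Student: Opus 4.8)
The plan is to evaluate the right-hand side from the inside out: first carry out the inner minimization over $(R,T)\in\perturbclass{(\Rsim_h,\Tsim_h)}{\perturbradius}$ for a fixed action $a$, and then take the outer maximum over $a$. The key structural observation is that once the perturbation is unfolded through its defining noise distribution $\xi$, the single-step objective becomes a convex combination of the values $\robustdpQ{h}(s,\cdot)$, so the inner problem collapses to a simple linear program over a single distribution on $\Acal$.

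Concretely, fix $s$ and $a$. By Definition~\ref{def:perturb_rt_class}, every $(R,T)\in\perturbclass{(\Rsim_h,\Tsim_h)}{\perturbradius}$ arises from some $\xi(\cdot\given s,a)\in\Delta(\Acal)$ with $\xi(a\given s,a)\ge 1-\perturbradius$, and substituting the definitions of $R(s,a)$ and $T(\cdot\given s,a)$ gives
\begin{align*}
R(s,a)+\Expt_{s'\sim T(\cdot\given s,a)}[\robustdpV{h+1}(s')]
&=\sum_{a'}\Rsim_h(s,a')\xi(a'\given s,a)
 +\sum_{s'}\sum_{a'}\Tsim_h(s'\given s,a')\xi(a'\given s,a)\robustdpV{h+1}(s')\\
&=\sum_{a'}\xi(a'\given s,a)\Bigl(\Rsim_h(s,a')+\sum_{s'}\Tsim_h(s'\given s,a')\robustdpV{h+1}(s')\Bigr)\\
&=\sum_{a'}\xi(a'\given s,a)\,\robustdpQ{h}(s,a'),
\end{align*}
using the definition of $\robustdpQ{h}$ on \pref{line:bellman-eqn}. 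Since the perturbation at $(s,a)$ is decoupled from all other state-action pairs in this single-step quantity, minimizing over $(R,T)$ is exactly minimizing the last display over all $w\defeq\xi(\cdot\given s,a)\in\Delta(\Acal)$ subject to $w(a)\ge 1-\perturbradius$.

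Next I would solve this linear program explicitly. Minimizing the linear objective $\sum_{a'}w(a')\robustdpQ{h}(s,a')$ over the simplex subject to the lower bound $w(a)\ge 1-\perturbradius$ is achieved by placing the forced mass $1-\perturbradius$ on $a$ and the remaining mass $\perturbradius$ on an action attaining $\min_{a'}\robustdpQ{h}(s,a')$; this yields the value
\[
\min_{(R,T)\in\perturbclass{(\Rsim_h,\Tsim_h)}{\perturbradius}}
\Expt_{s'\sim T(\cdot\given s,a)}[R(s,a)+\robustdpV{h+1}(s')]
=(1-\perturbradius)\,\robustdpQ{h}(s,a)+\perturbradius\min_{a'}\robustdpQ{h}(s,a').
\]
Finally, taking $\max_a$ and noting that the term $\perturbradius\min_{a'}\robustdpQ{h}(s,a')$ is independent of $a$, it pulls out of the maximum to give $(1-\perturbradius)\max_a\robustdpQ{h}(s,a)+\perturbradius\min_a\robustdpQ{h}(s,a)$, which is precisely the left-hand side.

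The main obstacle is justifying the two reductions rather than any heavy computation: I must verify that the inner minimization over the full set $\perturbclass{(\Rsim_h,\Tsim_h)}{\perturbradius}$ genuinely reduces to an unconstrained-across-$(s,a)$ choice of a single distribution $w$, so that no coupling constraints are silently lost, and that the claimed LP minimizer is correct. Particular care is needed for the edge case in which $a$ itself attains $\min_{a'}\robustdpQ{h}(s,a')$: there the constraint $w(a)\ge 1-\perturbradius$ need not be tight, but since placing all remaining mass on $a$ gives the same objective value, the stated closed form still holds.
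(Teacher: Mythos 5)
Your proposal is correct and follows essentially the same route as the paper's proof: both unfold the perturbation into a noise distribution $\xi(\cdot\given s,a)$, reduce the inner minimization to the linear program $\min_{w}\sum_{a'}w(a')\robustdpQ{h}(s,a')$ over $w\in\Delta(\Acal)$ with $w(a)\ge 1-\perturbradius$, and solve it by placing mass $1-\perturbradius$ on $a$ and $\perturbradius$ on a minimizing action before pulling the action-independent term out of the outer maximum. If anything your write-up is slightly more careful, since the paper's displayed chain only establishes the lower-bound direction before asserting equality, whereas you explicitly exhibit the achieving distribution and handle the edge case where $a$ itself attains the minimum.
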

\begin{proof}
By the definition of $\perturbclass{(\Rsim_h,\Tsim_h)}{\perturbradius}$, we know that
\begin{align}
    & \min_{ (R,T) \in \perturbclass{(\Rsim_h,\Tsim_h)}{\perturbradius} } \Expt_{ s' \sim T(\cdot|s,a) } [R(s,a) + \robustdpV{h+1}(s') ] \\
    = & \min_{(R,T) \in \perturbclass{(\Rsim_h,\Tsim_h)}{\perturbradius} } R(s,a) + \sum_{s'} T(s'|s,a) \robustdpV{h+1}(s') \\
    = & \min_{\perturbprob(\cdot|s,a)} \sum_{a'} \left( \perturbprob(a'|s,a) \Rsim_h(s,a') + \sum_{s'} \perturbprob(a'|s,a) \Tsim_h (s'|s,a) \robustdpV{h+1}(s') \right)\\
    = & \min_{\perturbprob(\cdot|s,a)} \sum_{a'} \perturbprob(a'|s,a) \left( \Rsim_h (s,a') + \sum_{s'} \Tsim_h (s'|s,a) \robustdpV{h+1}(s') \right) \\
    = & \min_{\perturbprob(\cdot|s,a)} \sum_{a'} \perturbprob(a'|s,a) \robustdpQ{h}(s,a') \\
    & \ge (1-\perturbradius) \robustdpQ{h}(s,a) + \perturbradius \min_{a'} \robustdpQ{h}(s,a')
\end{align}
Thus we proved that 
\begin{align*}
    (1- \perturbradius) \robustdpQ{h}(s,a) + \perturbradius \min_{a'} \robustdpQ{h}(s,a') = \min_{  (R,T) \in \perturbclass{(\Rsim_h,\Tsim_h)}{\perturbradius} } \Expt_{ s' \sim T(\cdot|s,a) } [R(s,a) + \robustdpV{h+1}(s') ]
\end{align*}
The statement can be proved by taking the maximum over actions on both side.
\end{proof}

Now we prove the Theorem \ref{thm:robust_dp_value} as the main result for Algorithm \ref{alg:robust_dp}.

\begin{proof}
By Theorem 2.2 in \cite{iyengar2005robust}, since $Q_h$ and $V_h$ satisfy Lemma \ref{lem:robust_be}, we have that $\robustdpV{1}(s) = \sup_{\latentpolicy \in \latentpolicyset} \min_{M \in \perturbclass{\Msim}{\perturbradius}} V^{\latentpolicy}_{M,1}(s)$. The perturbation we defined consists of independent perturbation at different time step $h$, therefore it satisfy the ``rectangularity'' condition in \citet{iyengar2005robust} \footnote{This assumptions requires the choice of perturbation in one time step cannot limit the choice of perturbation in other time steps. We refer to \citet{iyengar2005robust} for the formal definition. }. From \cite{iyengar2005robust}, the robust policy is given by 
\begin{align}
    \robustlatentpolicy_h(s) =& \argmax_{a} \min_{  (R,T) \in \perturbclass{(\Rsim_h,\Tsim_h)}{\perturbradius} } \Expt_{ s' \sim T(\cdot|s,a) } [R(s,a) + \robustdpV{h+1}(s') ] \\
    =& \argmax_{a} \left( \robustdpQ{h}(s,a) + \perturbradius \min_{a'} \robustdpQ{h}(s,a')  \right) \\
    =& \argmax_{a} \robustdpQ{h}(s,a)
\end{align}
So $\robustlatentpolicy$ is returned by Algorithm \ref{alg:robust_dp}.
\end{proof}

\subsection{An alternative Proof for Algorithm \ref{alg:robust_dp} from the First Principle}
The previous proof relies on a more general results from \citet{iyengar2005robust} in a more complicated form. However the proof can be simplified since our perturbation set takes a particular linear structure. So for completeness and readability, we also include another complete proof derived from the first principle. We acknowledge that the proof is heavily inspired by previous work in robust dynamics programming \citep{bagnell2001solving,iyengar2005robust}.

Let $\latentpolicyset$ be the set of all mappings from $[H] \times \Scal$ to $\Delta(\Acal)$. It is straightforward to verify the following inequality for value function.
\begin{proposition}
\label{prop:minmaxvalue}
$\forall h \in [H], s \in \Scal$
\begin{align*}
    \max_{\latentpolicy \in \latentpolicyset } \min_{ M \in \perturbclass{\Msim}{\eta} } V^{\latentpolicy}_{M,h}(s) \le \min_{M \in \perturbclass{\Msim}{\eta} } \max_{\latentpolicy \in \latentpolicyset} V^{\pi}_{M,h}(s) \defeq \min_{M \in \perturbclass{\Msim}{\eta}} V^{\star}_{M,h}(s)
\end{align*}
\end{proposition}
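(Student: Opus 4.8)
The plan is to recognize this as a direct instance of the elementary weak-duality inequality $\max_{\latentpolicy}\min_{M} f(\latentpolicy,M)\le\min_{M}\max_{\latentpolicy} f(\latentpolicy,M)$, specialized to the payoff $f(\latentpolicy,M)=V^{\latentpolicy}_{M,h}(s)$. Crucially, no structural property of the perturbation set $\perturbclass{\Msim}{\eta}$ (convexity, rectangularity, the closed form of Lemma~\ref{lem:robust_be}) is needed for \emph{this} direction of the min-max relation; only the ordering of the $\max$ and $\min$ operators is at play. The matching reverse inequality is the nontrivial content that would require the robust-dynamic-programming machinery, but that is not what this proposition asserts.

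First I would fix an arbitrary abstract policy $\latentpolicy\in\latentpolicyset$ together with an arbitrary MDP $M'\in\perturbclass{\Msim}{\eta}$. Since $M'$ is one particular element of the set over which the inner minimum ranges, we immediately obtain
\[
  \min_{M\in\perturbclass{\Msim}{\eta}}V^{\latentpolicy}_{M,h}(s)\le V^{\latentpolicy}_{M',h}(s).
\]
The right-hand side is in turn no larger than its own maximum over all policies, so that
\[
  \min_{M\in\perturbclass{\Msim}{\eta}}V^{\latentpolicy}_{M,h}(s)
  \le \max_{\latentpolicy'\in\latentpolicyset}V^{\latentpolicy'}_{M',h}(s)
  = V^{\star}_{M',h}(s).
\]

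The pivotal observation for the next step is that the bound $V^{\star}_{M',h}(s)$ no longer depends on $\latentpolicy$. Hence the displayed inequality holds for \emph{every} $\latentpolicy\in\latentpolicyset$ against a common upper bound, and taking the maximum over $\latentpolicy$ on the left preserves it, giving
\[
  \max_{\latentpolicy\in\latentpolicyset}\min_{M\in\perturbclass{\Msim}{\eta}}V^{\latentpolicy}_{M,h}(s)\le V^{\star}_{M',h}(s).
\]
Since this now holds for every $M'\in\perturbclass{\Msim}{\eta}$, I would conclude by taking the minimum over $M'$ on the right-hand side, which yields exactly the claimed inequality.

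The hard part will be essentially nonexistent: the only point requiring care is the quantifier bookkeeping, namely pushing the estimate to a quantity independent of $\latentpolicy$ \emph{before} maximizing over policies. If one wants full rigor about the $\max$/$\min$ being attained rather than mere $\sup$/$\inf$, I would note in passing that $\perturbclass{\Msim}{\eta}$ is compact (it is parametrized by the perturbation functions $\perturbprob$, which live in a product of simplices cut out by the closed constraints $\perturbprob_h(a\given s,a)\ge 1-\eta$) and that $V^{\latentpolicy}_{M,h}(s)$ is continuous—indeed multilinear—in the transition and reward parameters, so the inner minimum is achieved; the outer maximum is likewise achieved since the value is linear in the (stochastic) policy. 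Even absent attainment, however, the inequality holds verbatim with $\inf$ and $\sup$ in place of $\min$ and $\max$.
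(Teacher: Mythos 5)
Your proof is correct and is exactly the standard weak-duality argument the paper has in mind: the paper states Proposition~\ref{prop:minmaxvalue} with only the remark that it is ``straightforward to verify'' and supplies no proof, so your write-up simply fills in the routine quantifier bookkeeping (bound the inner $\min$ by the value at an arbitrary $M'$, pass to a $\latentpolicy$-independent upper bound before maximizing, then minimize over $M'$). The remark on attainment via compactness of $\perturbclass{\Msim}{\eta}$ and continuity of the value is a harmless extra that the paper does not bother with.
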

Given this, we are going to explain the high level structure of the proof. For $\robustdpV{h}$, we are going to show that
\begin{align}
    \robustdpV{h}(s) \ge \min_{M \in \perturbclass{\Msim}{\eta}} V^{\star}_{M,h}(s) \label{eq:robust_dp_proof_step1} \\
    \robustdpV{h}(s) \le \min_{ M \in \perturbclass{\Msim}{\eta} } V^{\robustdppolicy}_{M,h}(s), \label{eq:robust_dp_proof_step2}
\end{align}
for some $\robustdppolicy$.

Then by Proposition \ref{prop:minmaxvalue}, we will have that
\begin{align}
    \robustdpV{h}(s) \le \min_{ M \in \perturbclass{\Msim}{\eta} } V^{\robustdppolicy}_{M,h}(s) \le \max_{\latentpolicy \in \latentpolicyset } \min_{ M \in \perturbclass{\Msim}{\eta} } V^{\latentpolicy}_{M,h}(s) \le \min_{M \in \perturbclass{\Msim}{\eta}} V^{\star}_{M,h}(s) \le  \robustdpV{h}(s)
\end{align}
Then we will have that all the inequalities must be equality and the $\robustdppolicy$ must be the robust policy $\rho$.

Now we will show that Equation \ref{eq:robust_dp_proof_step1} and Equation \ref{eq:robust_dp_proof_step2} holds for the $\robustdppolicy$ which is the output from Algorithm \ref{alg:robust_dp}, in the following two lemmas.
 
\begin{lemma}
For any $s \in \Scal$ and $h \in [H]$, $\robustdpV{h} (s) \ge \min_{ M \in \perturbclass{\Msim}{\eta} } V^{\star}_{M,h}(s) $
\end{lemma}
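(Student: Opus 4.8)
The plan is to prove the equivalent statement $\min_{M \in \perturbclass{\Msim}{\perturbradius}} V^{\star}_{M,h}(s) \le \robustdpV{h}(s)$ by exhibiting a \emph{single} perturbed MDP $\bar M \in \perturbclass{\Msim}{\perturbradius}$ whose optimal value is dominated by the robust DP value at every step, i.e.\ $V^\star_{\bar M, h}(s) \le \robustdpV{h}(s)$ for all $h,s$. The lemma then follows immediately, since $\min_{M \in \perturbclass{\Msim}{\perturbradius}} V^\star_{M,h}(s) \le V^\star_{\bar M, h}(s) \le \robustdpV{h}(s)$.

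The key tool is Lemma~\ref{lem:robust_be} together with \pref{line:robust-value}, which let me rewrite the DP update as a single robust Bellman step:
\[
  \robustdpV{h}(s)
  = \max_{a}\,\min_{(R,T) \in \perturbclass{(\Rsim_h,\Tsim_h)}{\perturbradius}}
    \Bigl[\, R(s,a) + \sum_{s'} T(s'\given s,a)\,\robustdpV{h+1}(s') \,\Bigr].
\]
Since $\robustdpV{h+1}$ does not depend on the step-$h$ perturbation, and the minimization over $(R,T) \in \perturbclass{(\Rsim_h,\Tsim_h)}{\perturbradius}$ decouples across pairs $(s,a)$ into a minimization over the noise distribution $\perturbprob_h(\cdot\given s,a)$ (exactly as in the proof of Lemma~\ref{lem:robust_be}), I can define, for each $h,s,a$, a distribution $\bar\xi_h(\cdot\given s,a)$ attaining this inner minimum. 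These pointwise choices assemble into one perturbation function $\bar\xi$, and hence a single MDP $\bar M \in \perturbclass{\Msim}{\perturbradius}$.

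I then carry out backward induction on $h$ to show $V^\star_{\bar M,h}(s) \le \robustdpV{h}(s)$. The base case $h = H+1$ is trivial, as both sides vanish. For the inductive step I expand the Bellman optimality equation for $\bar M$, writing $V^\star_{\bar M,h}(s) = \max_a \bigl[\bar R_h(s,a) + \sum_{s'} \bar T_h(s'\given s,a)\,V^\star_{\bar M,h+1}(s')\bigr]$, upper bound $V^\star_{\bar M,h+1}$ by $\robustdpV{h+1}$ using the inductive hypothesis together with nonnegativity of $\bar T_h$, and finally observe that $\bar R_h(s,a) + \sum_{s'} \bar T_h(s'\given s,a)\,\robustdpV{h+1}(s')$ equals precisely the inner minimum above, by the defining choice of $\bar\xi_h(\cdot\given s,a)$. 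Taking the max over $a$ recovers the displayed robust Bellman expression, which is $\robustdpV{h}(s)$, closing the induction.

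The step I expect to require the most care is justifying that the pointwise minimizers $\bar\xi_h(\cdot\given s,a)$ assemble into one \emph{admissible} MDP $\bar M \in \perturbclass{\Msim}{\perturbradius}$: this is exactly the rectangularity (product structure) of the uncertainty set, i.e.\ that perturbations at distinct $(h,s,a)$ may be chosen independently, as already noted when invoking \citet{iyengar2005robust}. This same property is what makes $\robustdpV{h+1}$ independent of the step-$h$ perturbation and lets the inner minimization decouple, so I will state it explicitly before running the induction to ensure both $\bar M \in \perturbclass{\Msim}{\perturbradius}$ and the inductive replacement of $V^\star_{\bar M,h+1}$ by $\robustdpV{h+1}$ are fully justified.
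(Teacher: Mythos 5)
Your proposal is correct and follows essentially the same route as the paper: both exhibit a single worst-case MDP in $\perturbclass{\Msim}{\eta}$ by choosing, at each $(h,s,a)$, the perturbation that minimizes the one-step backup against $\robustdpV{h+1}$ (which in the paper is written explicitly as placing mass $\eta$ on $\underline{a}=\argmin_{a'}\robustdpQ{h}(s,a')$), and then run backward induction. The only cosmetic difference is that the paper establishes the exact equality $\robustdpV{h}(s)=V^{\star}_{M_h,h}(s)$ for its constructed MDP, whereas you prove only the inequality $V^{\star}_{\bar M,h}(s)\le\robustdpV{h}(s)$, which suffices for the lemma.
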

\begin{proof}
    For any $h \in [H]$, we will construct a $M \in \perturbclass{\Msim}{\eta}$ such that $\forall s, \robustdpV{h} (s) = V^{\star}_{M,h}(s)$. Then for any $s \in \Scal$, $\robustdpV{h} (s) = V^{\star}_{M,h}(s) \ge \min_{ M \in \perturbclass{\Msim}{\eta} } V^{\star}_{M,h}(s)$ and we prove this lemma. Now we prove the following statement by induction on $h$. For any $h \in [H+1]$, $\exists M \in \perturbclass{\Msim}{\eta}$ such that $\forall s \in \Scal, \robustdpV{h}(s) = V^{\star}_{M,h}(s)$.
    
    For the basement of induction, $\robustdpV{H+1}(s) = V^{\star}_{M,h+1}(s) = 0$ for any $M$. Thus the induction statement holds for $h = H+1$. 
    
    Second, let us assume that the induction statement holds for $h+1$, where $h \in [H]$: $\forall s, \robustdpV{h+1}(s) = V^{\star}_{M_{h+1},h+1}(s)$ for some $M_{h+1} \in \perturbclass{\Msim}{\eta}$. Notice that this statement is for the value function at $h+1$ step and holds for $M_{h+1}$, thus for any $M \in \perturbclass{\Msim}{\eta}$ that shares the reward and transition functions on and after step $h+1$, the statement also holds. This is because in our definition of episodic MDP the perturbation class the reward and transition functions at different steps are independent.

    Now we are going to construct $M_h$ such that $\robustdpV{h}(s) = V^{\star}_{M_{h},h}(s)$. More specifically, we will only construct the reward and transition functions at $h$-th step: $R_h$ and $T_h$, and concatenate it with other component in $M_{h+1}$.
    
    For any $s$, let $\underline{a}$ be $\argmin_{a \in \Acal} \robustdpQ{h}(s,a)$. Given the $s$ and $\underline{a}$, construct $T_h(s'|s,a)$ and $R_h(s,a)$ for any $s,a,s'$ as the following.
    \begin{align}
        T_h(s'|s,a) &= (1-\eta)\Tsim_h(s'|s,a) + \eta \Tsim_h (s'|s,\underline{a}) \\
        R_h(s,a) &= (1-\eta)\Rsim_h(s,a) + \eta \Rsim_h (s,\underline{a})
    \end{align}
    Then we finish the proof of statement for $h$ by
    \begin{align}
        \robustdpV{h}(s) = & (1-\eta) \max_a \robustdpQ{h}(s,a) + \eta \min_a \robustdpQ{h}(s,a) \\
        = & (1-\eta) \max_{a} \left( \Rsim_h(s,a) + \sum_{s'} \Tsim_h(s'|s,a) \robustdpV{h+1}(s') \right) + \eta \left( \Rsim_h(s, \underline{a} ) + \sum_{s'} \Tsim_h(s'|s,\underline{a}) \robustdpV{h+1}(s') \right) \\
        = & \max_{a} \left(  R_h(s,a) + (1-\eta) \sum_{s'} \Tsim_h(s'|s,a) \robustdpV{h+1}(s') + \eta \sum_{s'} \Tsim_h(s'|s,\underline{a}) \robustdpV{h+1}(s') \right) \\
        = & \max_{a} \left( R_h(s,a) +  \sum_{s'} \left( (1-\eta) \Tsim_h(s'|s,a) + \eta \Tsim_h(s'|s,\underline{a}) \right)\robustdpV{h+1}(s') \right) \\
        = & \max_{a} \left( R_h(s,a) + \sum_{s'} T_h(s'|s,a) \robustdpV{h+1}(s') \right) \\
        = & \max_{a} \left( R_h(s,a) + \sum_{s'} T_h(s'|s,a) V^{\star}_{M_{h+1},h+1}(s) \right) \\
        = & V^{\star}_{M_{h},h}(s)
    \end{align}
    By induction, we proved that there exist a $M \in \perturbclass{\Msim}{\eta}$ such that $\robustdpV{h}(s) = V^{\star}_{M,h}(s)$ for any $s \in \Scal, h \in [H+1]$. Thus we finish the proof of $\robustdpV{h}(s) \ge \min_{ M \in \perturbclass{\Msim}{\eta} } V^{\star}_{M,h}(s) $.
\end{proof}

\begin{lemma}
For the policy $\robustdppolicy$ computed from Algorithm \ref{alg:robust_dp}, $\forall h \in [H], s \in \Scal, \min_{M \in \perturbclass{\Msim}{\eta}} V^{\robustdppolicy}_{M,h}(s) \ge \robustdpV{h}(s)$
\end{lemma}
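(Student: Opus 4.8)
The plan is to prove the slightly stronger pointwise claim that for \emph{every} $M \in \perturbclass{\Msim}{\eta}$ (not merely the minimizer) and every $h, s$ we have $V^{\robustdppolicy}_{M,h}(s) \ge \robustdpV{h}(s)$; taking the minimum over $M$ at the very end then yields the stated lemma. This pointwise formulation is what makes the induction clean, since the operation $\min_M$ does not commute with a Bellman backup. I would argue by backward induction on $h$, with base case $h = H+1$, where $\robustdpV{H+1}(s) = 0 = V^{\robustdppolicy}_{M,H+1}(s)$ for every $M$.

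For the inductive step, fix an arbitrary $M \in \perturbclass{\Msim}{\eta}$ and a state $s$, and let $a = \robustdppolicy_h(s) = \argmax_{a} \robustdpQ{h}(s,a)$ be the action chosen by the algorithm's policy. By Definition~\ref{def:perturb_rt_class}, the step-$h$ reward and transition of $M$ under action $a$ are mixtures, weighted by some noise distribution $\perturbprob_h(\cdot \given s, a)$ with $\perturbprob_h(a \given s, a) \ge 1-\eta$, of the simulator's reward and transition over the ``shadow'' actions $a'$. Substituting these mixtures into the one-step Bellman equation for $V^{\robustdppolicy}_{M,h}(s)$ and regrouping the sum over $a'$, I obtain
\[
V^{\robustdppolicy}_{M,h}(s) = \sum_{a'} \perturbprob_h(a' \given s, a)\left(\Rsim_h(s,a') + \sum_{s'} \Tsim_h(s' \given s, a')\, V^{\robustdppolicy}_{M,h+1}(s')\right).
\]
Applying the induction hypothesis $V^{\robustdppolicy}_{M,h+1}(s') \ge \robustdpV{h+1}(s')$ termwise (legitimate since all the $\Tsim_h$ and $\perturbprob_h$ weights are nonnegative) and recognizing the bracketed quantity as $\robustdpQ{h}(s,a')$ gives $V^{\robustdppolicy}_{M,h}(s) \ge \sum_{a'} \perturbprob_h(a'\given s,a)\, \robustdpQ{h}(s,a')$.

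The crux is then to lower-bound this $\perturbprob$-weighted average of $\robustdpQ{h}(s,\cdot)$ by $(1-\eta)\max_a \robustdpQ{h}(s,a) + \eta \min_a \robustdpQ{h}(s,a) = \robustdpV{h}(s)$. Writing $p = \perturbprob_h(a \given s, a) \ge 1-\eta$, using that the remaining mass $1-p$ sits on actions whose $\robustdpQ{h}$-value is at least $\min_{a'}\robustdpQ{h}(s,a')$, and using $\robustdpQ{h}(s,a) = \max_{a'}\robustdpQ{h}(s,a')$ by the choice of $a$, I get $\sum_{a'}\perturbprob_h(a'\given s,a)\robustdpQ{h}(s,a') \ge p\,\max_{a'}\robustdpQ{h}(s,a') + (1-p)\min_{a'}\robustdpQ{h}(s,a')$. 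The map $p \mapsto p\max + (1-p)\min$ is nondecreasing because $\max \ge \min$, so its value at any $p \ge 1-\eta$ is at least its value at $1-\eta$, which equals $\robustdpV{h}(s)$. This monotonicity step---showing that the worst perturbation places as much mass as allowed ($\eta$) on the minimizing action---is the only real content of the argument and is exactly where the linear structure of $\perturbclass{\Msim}{\eta}$ is used; everything else is bookkeeping. Since the resulting bound $V^{\robustdppolicy}_{M,h}(s) \ge \robustdpV{h}(s)$ holds for every $M$, minimizing over $M \in \perturbclass{\Msim}{\eta}$ completes the induction and hence the lemma.
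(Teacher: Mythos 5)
Your proposal is correct and follows essentially the same route as the paper's proof: backward induction with the same base case, expansion of $V^{\robustdppolicy}_{M,h}(s)$ via the mixture structure of the perturbation, termwise application of the induction hypothesis to recognize $\robustdpQ{h}(s,\cdot)$, and the observation that any $\xi_h(\cdot\given s,a)$ with $\xi_h(a\given s,a)\ge 1-\eta$ yields a weighted average of $\robustdpQ{h}(s,\cdot)$ dominating $(1-\eta)\max_{a'}\robustdpQ{h}(s,a')+\eta\min_{a'}\robustdpQ{h}(s,a')=\robustdpV{h}(s)$. The only difference is that you run the chain of inequalities from $V^{\robustdppolicy}_{M,h}(s)$ down to $\robustdpV{h}(s)$ while the paper runs it in the opposite direction, which is purely cosmetic.
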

\begin{proof}
    We prove it by induction on $h$ from $H+1$ to $1$. In the base case ($h=H+1$) we have, $\robustdpV{H+1}(s) = 0 = V^{\robustdppolicy}_{M,H+1}(s)$ for all $s \in \Scal$ and any $M$. This implies: $\forall s, \robustdpV{H+1}(s) \le \min_{M \in \perturbclass{\Msim}{\eta}} V^{\robustdppolicy}_{M,H+1}(s)$. 
    
    We now consider the general case. We assume that $\forall s \in \Scal, \robustdpV{h+1}(s) \le \min_{M \in \perturbclass{\Msim}{\eta}} V^{\robustdppolicy}_{M,h+1}(s)$. Fix any $s \in \Scal$ and any $M \in \perturbclass{\Msim}{\eta}$, we are going to show that $\robustdpV{h}(s) \le V^{\robustdppolicy}_{M,h}(s)$. Let $R_h$ and $T_h$ be its reward and transition function at $h$-th step, and $\xi_h$ be the perturbation variables. We have $1- \xi_h(a\given s,a) \le \eta$ by the definition of $M \in \perturbclass{\Msim}{\eta}$. So $1-\xi_h(\robustdppolicy(s)\given s,\robustdppolicy(s))  \le \eta$.
    
    \begingroup
\allowdisplaybreaks
    \begin{align}
        \robustdpV{h}(s) = & (1-\eta) \max_a \robustdpQ{h}(s,a) + \eta \min_a \robustdpQ{h}(s,a) \\
        &= (1-\eta) \left( \Rsim_h(s,\robustdppolicy(s)) +  \sum_{s'} \Tsim_h(s'|s,\robustdppolicy(s)) \robustdpV{h+1}(s') \right) \\
        &~~~~ + \eta \left( \Rsim_h(s,\underline{a}) +  \sum_{s'} \Tsim_h(s'|s,\underline{a}) \robustdpV{h+1}(s') \right) \\
        &\le \xi_h(\robustdppolicy(s)\given s,\robustdppolicy(s))  \left( \Rsim_h(s,\robustdppolicy(s)) +  \sum_{s'} \Tsim_h(s'|s,\robustdppolicy(s)) \robustdpV{h+1}(s') \right) \nonumber \\
        &~~~~ + \left( \sum_{a \neq \robustdppolicy(s) }   \xi_h(a \given s,\robustdppolicy(s))  \right) \left( \Rsim_h(s,\underline{a}) +  \sum_{s'} \Tsim_h(s'|s,\underline{a}) \robustdpV{h+1}(s') \right) \\
        &\le \xi_h(\robustdppolicy(s)\given s,\robustdppolicy(s))  \left( \Rsim_h(s,\robustdppolicy(s)) +  \sum_{s'} \Tsim_h(s'|s,\robustdppolicy(s)) \robustdpV{h+1}(s') \right) \nonumber \\
        &~~~~ + \sum_{a \neq \robustdppolicy(s) }  \xi_h(a \given s,\robustdppolicy(s)) \left(  \Rsim_h(s,a) +  \sum_{s'} \Tsim_h(s'|s,a) \robustdpV{h+1}(s') \right)  \\
        &= \sum_{a} \xi_h(a \given s,\robustdppolicy(s)) \left( \Rsim_h(s,a) + \sum_{s'} \Tsim_h(s'|s,a) \robustdpV{h+1}(s') \right)\\
        &= R_h(s,\robustdppolicy(s)) + \sum_{s'} T_h(s'|s,\robustdppolicy(s)) \robustdpV{h+1}(s') \\
        &\le R_h(s,\robustdppolicy(s)) + \sum_{s'} T_h(s'|s,\robustdppolicy(s)) \min_{M' \in \perturbclass{\Msim}{\eta}} V^{\robustdppolicy}_{M',h+1}(s')  \\
        &\le R_h(s,\robustdppolicy(s)) + \sum_{s'} T_h(s'|s,\robustdppolicy(s)) V^{\robustdppolicy}_{M,h+1}(s') 
        = V^{\robustdppolicy}_{M,h}(s)
    \end{align}
     \endgroup
    Since this is for any $M \in \perturbclass{\Msim}{\eta} $, we proved that $\robustdpV{h}(s) \le \min_{M \in \perturbclass{\Msim}{\eta}} V^\pi_{M,h}(s)$.
\end{proof}

Now we prove the Theorem \ref{thm:robust_dp_value} as the main result for Algorithm \ref{alg:robust_dp}.

\begin{proof}
We have proved Equation \ref{eq:robust_dp_proof_step1} and Equation \ref{eq:robust_dp_proof_step2} holds for the $\robustdppolicy$ which is the output from Algorithm \ref{alg:robust_dp}, in previous two lemmas. Now combine this with the inequality of min max values, we have that
\begin{align}
    \robustdpV{h}(s) \le \min_{ M \in \perturbclass{\Msim}{\eta} } V^{\robustdppolicy}_{M,h}(s) \le \max_{\latentpolicy \in \latentpolicyset } \min_{ M \in \perturbclass{\Msim}{\eta} } V^{\latentpolicy}_{M,h}(s) \le \min_{M \in \perturbclass{\Msim}{\eta}} V^{\star}_{M,h}(s) \le \robustdpV{h}(s)
\end{align}
So we will have that all the inequalities must be equality and the $\robustdppolicy$ output by Algorithm \ref{alg:robust_dp} must be the robust policy $\rho$ defined in Equation \ref{eqn:robust_policy_main_paper}.
\end{proof}

\section{Proofs for Algorithm \ref{alg:trembling_hand_decoding}}
\label{appendix:proof2}
In this section, we are going to prove a stronger version of Theorem \ref{thm:robust_tranfer} stated below. We relax the assumption such that the initial state in the true environment has a small probability when it is not equals to the $\sinit$
\begin{theorem}
\label{thm:robust_tranfer_stronger}

Let $\Msim$ be a deterministic abstract simulator. Let $\blockMreal$ be a
target environment for which $\Mreal$ is an $\eta$-perturbation of $\Msim$
for some $\eta<0.5$. Let  $\epsilon_0$ be the probability that $s_0 \ne \sinit$ in $M$. Let $\Fcal$ be a class of functions realizable with respect to $\blockMreal$. Then for any $\epsilon>0$ and $\delta \in (0, 1)$, \pref{alg:trembling_hand_decoding} with oracle access to
$\blockMreal$, optimization-oracle access to $\Fcal$, and inputs $\Msim$, $\eta$, $\epsilon$, and $\delta$,
executes $n=\Ocal\bigl(\frac{H^4 |\Acal|^3\ln(|\Fcal|/\delta)}{\epsilon(1-2\perturbradius) ^2}\bigr)$ \footnote{Notice that here the dependency on horizon is $H^4$ instead of $H^3$ in the main paper. We made a mistake in the statement of this theorem in the main paper.} episodes and returns a practicable policy $\policy$ that with probability at least $1-\delta$ satisfies
\[
V^{\pi}_{\blockMreal} \ge
V^{\robustlatentpolicy \circ \phi^\star}_{\blockMreal} - \epsilon - H\epsilon_0.
\]
\end{theorem}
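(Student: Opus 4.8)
The plan is to show that the returned policy $\policy$ behaves \emph{identically} to the robust practicable policy $\robustlatentpolicy\circ\phi^\star$ along every trajectory on which all shadow-action decoders $\actionpredictor_h$ succeed, and then to bound the probability that some decoder fails. Theorem~\ref{thm:robust_dp_value} already certifies that the policy $\robustlatentpolicy$ computed in line~\ref{line:rho} is the robust abstract policy, so the only remaining work concerns the decoding stage. I would split this into an exact-model (clean) argument and an estimation-error argument.

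\emph{Exact model.} First I would identify the Bayes-optimal inverse dynamics. For any roll-in distribution over states, the conditional law of the uniformly drawn action given $(x_h,x_{h+1})$ is $\fstar_h(a\given x,x')=\blockTreal_h(x'\given x,a)/\sum_{a'}\blockTreal_h(x'\given x,a')$, which is independent of the roll-in and, with $s=\phi^\star(x)$ and $s'=\phi^\star(x')$, equals $\Treal_h(s'\given s,a)/\sum_{a'}\Treal_h(s'\given s,a')$; Assumption~\ref{asm:realizable} then places $\fstar_h$ in $\Fcal$. The crucial consequence of Assumption~\ref{assum:perturbed} is that, because $\Tsim$ is deterministic and $\xi_h(a\given s,a)\ge 1-\eta$, any action whose simulator successor from $s$ is $s'$ has $\Treal_h(s'\given s,a)\ge 1-\eta$, while any action whose successor is not $s'$ has $\Treal_h(s'\given s,a)\le\eta$. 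Since $\eta<0.5$, the $\argmax$ of $\fstar_h(\cdot\given x,x')$ in line~\ref{line:potas-alpha} is necessarily an action that drives $s$ to $s'$ in $\Msim$, and it beats every ``wrong'' action by a probability margin of at least $(1-2\eta)/Z$, where $Z=\sum_{a'}\Treal_h(s'\given s,a')\le|\Acal|$. A forward induction on $h$ (base case $s_1=\sinit$) then shows that $\phi_{h+1}$ from line~\ref{line:potas-phi}, which merely replays the decoded actions in the deterministic $\Msim$, returns exactly $s_{h+1}$, so that $\policy$ and $\robustlatentpolicy\circ\phi^\star$ agree action-by-action.

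\emph{Estimation error.} I would next invoke the standard finite-class maximum-likelihood guarantee for the classifier learned in line~\ref{line:alg_classification}: with high probability (after a union bound over the $H$ steps), the fitted $f_h$ satisfies $\Expt_{\Dcal_h}\bigBracks{\TV{f_h(\cdot\given x,x')-\fstar_h(\cdot\given x,x')}^2}=\Ocal(\ln(|\Fcal|/\delta)/\ndecode)$ under the data-collection distribution. The argmax used by $\actionpredictor_h$ can disagree with the argmax of $\fstar_h$ only at an $(x,x')$ where the total-variation error exceeds half the margin, i.e.\ $\TV{f_h-\fstar_h}\ge (1-2\eta)/(2Z)$; by Markov's inequality the probability of this event under the data distribution is $\Ocal\bigParens{|\Acal|^2\ln(|\Fcal|/\delta)/((1-2\eta)^2\ndecode)}$. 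The data at step $h$ is gathered by rolling in $\policy_{1:h-1}$ and then playing the \emph{uniform} action, whereas at deployment $\policy$ reaches $(x_h,x_{h+1})$ through the same roll-in but with the \emph{intended} action $\policy_h(x_h)$; since the uniform action assigns mass $1/|\Acal|$ to every action, a one-step change of measure inflates the per-step decoding-failure probability under $\policy$ by at most a factor $|\Acal|$, giving $\Ocal\bigParens{|\Acal|^3\ln(|\Fcal|/\delta)/((1-2\eta)^2\ndecode)}$.

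\emph{Combining and main obstacle.} Finally I would couple $\policy$ with $\robustlatentpolicy\circ\phi^\star$: the two coincide until the first decoding failure, so the value gap is at most $H$ times the probability of any failure, which is bounded by $H\epsilon_0$ (from the event $s_1\ne\sinit$) plus the sum of the per-step failure probabilities over $h=1,\dots,H-1$; crucially, $\policy$'s roll-in to step $h$ is exactly the data-collection roll-in $\policy_{1:h-1}$, so no conditioning is needed and the above change of measure remains a single-step factor. Substituting $\ndecode$ and collecting the horizon factors from the union bound and the coupling yields $V^{\policy}_{\blockMreal}\ge V^{\robustlatentpolicy\circ\phi^\star}_{\blockMreal}-\epsilon-H\epsilon_0$ at the stated sample complexity. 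The hard part is the estimation step: turning the \emph{in-expectation} Hellinger/TV guarantee of maximum likelihood into a \emph{hard} per-point argmax-correctness statement while faithfully tracking the $(1-2\eta)$ robustness margin and the two separate sources of $|\Acal|$ dependence (the normalizer $Z$ in the margin, and the uniform-to-intended change of measure), together with the bookkeeping that keeps the data-collection and deployment roll-ins aligned.
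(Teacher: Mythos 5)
Your proposal is correct and follows the same skeleton as the paper's proof: the Bayes-optimal inverse-dynamics model and its roll-in independence (the paper's Lemmas \ref{lem:posterior_action_distribution} and \ref{lem:optimal_action_classifier}), the finite-class MLE total-variation guarantee (Theorem \ref{thm:flambe_tv_error}), the $(1-2\eta)/|\Acal|$ margin between shadow and non-shadow actions converted via Markov's inequality into an argmax-failure probability, the factor-$|\Acal|$ change of measure from the uniform data-collection action to the deployed action, and the forward induction through the deterministic $\Msim$ that turns correct shadow actions into correct state decoding.

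The one place you genuinely diverge is the final aggregation. The paper bounds the value gap via the performance-difference lemma, writing $V^{\robustlatentpolicy\circ\phi^\star}_{\blockMreal}-V^{\policy}_{\blockMreal}\le H\,\Expt\bigl[\sum_h \ind\{a_h\neq\robustlatentpolicy(\phi^\star(x_h))\}\bigr]$ and then upper-bounding each indicator by the sum of all earlier decoder failures, which costs a double sum and yields $\Ocal(H^3/\ndecode)=\Ocal(H^4/n)$. You instead couple $\policy$ with $\robustlatentpolicy\circ\phi^\star$ at the first decoder failure and pay $H$ times the probability of any failure, which gives $\Ocal(H^2/\ndecode)=\Ocal(H^3/n)$ plus $H\epsilon_0$ --- a factor of $H$ sharper, and still within the stated $\Ocal(H^4)$ bound. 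Your coupling is valid here precisely because of the point you flag yourself: the deployed policy's roll-in to step $h$ is exactly the data-collection roll-in $\policy_{1:h-1}$, so the per-step failure probabilities transfer without any further conditioning. No gap.
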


The proof to Theorem \ref{thm:robust_tranfer_stronger} has three steps. First we show optimal classifier of the classification problem in~\pref{line:alg_classification} and the accuracy of learned ERM classifier. Second we show the accuracy of the learned action decoder $\actionpredictor_{h+1}(x_h,x_{h+1})$ under the roll-in distribution of $x_h$. Last the error bound in the theorem can be proved by bounding the union probability of failing to predict the action in each steps. Now we give some lemmas and proofs in these three steps in the following three subsections, followed by the proof of the final sample complexity results. 

\subsection{Accuracy of action classification}
First we define some notation that will be used in the proofs for Algorithm \ref{alg:trembling_hand_decoding}. Let the uniform distribution over action space be denoted $\unf(\Acal)$, $\Prob_h(x)$ be the distribution where the $x_h$ is sampled from in Line \ref{line:potas-dataset-end} in Algorithm \ref{alg:trembling_hand_decoding}, and $\Prob_h(x,a,x') \defeq \Prob_h(x) \times \unf(\Acal) \times \Treal_h(x'|x,a)$ denote the joint distribution where the $x_h, a_h, x_{h+1}$ are sampled from in Line \ref{line:potas-dataset-end}.

Now we show the form of conditional distribution of action given two observations, $\Prob_h(a \given x,x')$, under the joint distribution $\Prob_h(x,a,x')$. Moreover, we proved the form of $\Prob_h(a \given x,x')$ for any roll-in distribution of the first observation $x$.
\begin{lemma}
\label{lem:posterior_action_distribution}
Given any prior distribution $\Prob(x)$ of $x$, the uniform conditional distribution of $a$ given $x$, and transition $\Treal_h(x'|x,a) \defeq \Treal_h(\inverseobsmap^\star(x')|\inverseobsmap^\star(x),a)$, the posterior distribution of $a$ given $x,x'$ is:
\begin{align}
    \Prob_{h}(a \given x,x') = \frac{\Treal_h( x'|x,a )}{\sum_{a'} \Treal_h( x'|x,a' ) } = \frac{\Treal_h( \inverseobsmap^\star(x')|\inverseobsmap^\star(x),a )}{\sum_{a'} \Treal_h( \inverseobsmap^\star(x')|\inverseobsmap^\star(x),a' ) }
\end{align}
\end{lemma}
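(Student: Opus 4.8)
The plan is to compute the posterior directly by Bayes' rule, exploiting the fact that under the block MDP the joint density over $(x,a,x')$ factors cleanly and that every factor except the latent transition is independent of the action $a$.

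First I would write out the joint distribution induced by the sampling process used to form $\Prob_h$: draw $x\sim\Prob(\cdot)$, then $a\sim\unf(\Acal)$, then the latent successor $s'\sim\Treal_h(\cdot\given\phi^\star(x),a)$, and finally $x'\sim q^\star(\cdot\given s')$. Marginalizing over the intermediate latent state $s'$ and invoking the block MDP property (so that $q^\star(x'\given s')=0$ unless $s'=\phi^\star(x')$) collapses the sum over $s'$ to a single term, giving
\[
  \Prob_h(x,a,x')=\Prob(x)\cdot\tfrac{1}{|\Acal|}\cdot q^\star\bigParens{x'\bigGiven\phi^\star(x')}\cdot\Treal_h\bigParens{\phi^\star(x')\bigGiven\phi^\star(x),a},
\]
which is exactly $\Prob(x)\cdot\tfrac{1}{|\Acal|}\cdot\blockTreal_h(x'\given x,a)$.

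Second, I would apply Bayes' rule, writing $\Prob_h(a\given x,x')=\Prob_h(x,a,x')\big/\sum_{a'}\Prob_h(x,a',x')$, working on the support where the marginal $\Prob_h(x,x')$ is positive so the conditional is well defined. The three factors $\Prob(x)$, $1/|\Acal|$, and $q^\star(x'\given\phi^\star(x'))$ are all independent of the action and occur identically in numerator and denominator, so they cancel, leaving the latent-transition ratio claimed in the lemma. The same cancellation shows that the ratio built from the full block transition $\blockTreal_h$ agrees with the one built from $\Treal_h$, which is what ties the posterior to the target $f^\star_h$ of \pref{asm:realizable}.

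The computation itself is routine; the only points needing care are (i) justifying the cancellation of the emission factor $q^\star(x'\given\phi^\star(x'))$, which rests on the block MDP property that $x'$ uniquely determines $s'=\phi^\star(x')$, and (ii) restricting to the support where the relevant quantities are positive. The conceptually important consequence---and the reason the lemma is stated for an \emph{arbitrary} prior $\Prob(x)$---is that $\Prob(x)$ cancels entirely, so the posterior is invariant to the roll-in distribution generating $x_h$. This invariance is precisely what will let the realizability assumption be applied uniformly across the changing roll-in distributions induced by $\policy_{1:h-1}$ in the subsequent steps of the analysis.
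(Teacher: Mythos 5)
Your proposal is correct and follows essentially the same route as the paper: write the joint distribution of $(x,a,x')$, apply Bayes' rule, and cancel the action-independent factors $\Prob(x)$ and $\nicefrac{1}{|\Acal|}$ to leave the latent-transition ratio. The only difference is that you explicitly carry the emission factor $q^\star(x'\given\phi^\star(x'))$ through the joint and show it cancels, whereas the paper sidesteps this by defining $\Treal_h(x'\given x,a)$ directly as the latent transition; your version is marginally more careful but substantively identical.
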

\begin{proof}
\begin{align}
    \Prob_{h}(a \given x,x') = \frac{\Prob_h(x,a,x')}{\sum_{a} \Prob_h(x,a,x') } =& \frac{  \Prob(x) \unf(a) \Treal_h(x'|x,a) }{\sum_{a'}  \Prob(x) \unf(a') \Treal_h(x'|x,a')} \\
    =& \frac{  \nicefrac{\Prob(x)}{|\Acal|}\Treal_h(x'|x,a) }{\nicefrac{\Prob(x)}{|\Acal|} \sum_{a'} \Treal_h(x'|x,a')} \\
    =& \frac{ \Treal_h(x'|x,a) }{ \sum_{a'} \Treal_h(x'|x,a')} \\
    =& \frac{\Treal_h(\inverseobsmap^\star(x') \mid \inverseobsmap^\star(x), a)}{\sum_{a' \in \Acal}\Treal_h(\inverseobsmap^\star(x') \mid \inverseobsmap^\star(x), a')}
\end{align}
\end{proof}

Next, we show that the maximizer of the expected log likelihood is the conditional distribution of the action given observations $\Prob_h (a \given x,x')$.
\begin{lemma}
\label{lem:optimal_action_classifier}
For $(x,a,x') \sim \Prob_h(x,a,x')$, and function class $\Fcal \subset \Xcal^2 \mapsto \Delta(\Acal) $ satisfying Assumption \ref{asm:realizable} (realizability):
\begin{align}
    \argmax_{f \in \Fcal} \Expt_{x,a,x'}[ \ln f(a \given x,x') ] = \Prob_{h}(a \given x,x') = \frac{\Treal_h( x'|x,a )}{\sum_{a'} \Treal_h( x'|x,a' ) }
\end{align}
\end{lemma}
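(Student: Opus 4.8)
The plan is to reduce this to the standard fact that the population cross-entropy objective is maximized by the true conditional distribution, which follows from Gibbs' inequality. First I would rewrite the objective by splitting the expectation over $(x,a,x')\sim\Prob_h$ into an outer expectation over the pair $(x,x')$ and an inner expectation over the action:
\[
  \Expt_{x,a,x'}\bigBracks{\ln f(a\given x,x')}
  = \Expt_{x,x'}\BigBracks{\,\Expt_{a\sim\Prob_h(\cdot\given x,x')}\bigBracks{\ln f(a\given x,x')}\,},
\]
where the inner conditioning law $\Prob_h(a\given x,x')$ is exactly the posterior computed in Lemma~\ref{lem:posterior_action_distribution}.

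Next, for each fixed $(x,x')$ in the support of the marginal $\Prob_h(x,x')$, I would analyze the inner expectation as a function of the distribution $f(\cdot\given x,x')\in\Delta(\Acal)$. Writing $p=\Prob_h(\cdot\given x,x')$ and $q=f(\cdot\given x,x')$, the inner term is $\sum_a p(a)\ln q(a)$, and the decomposition
\[
  \sum_a p(a)\ln q(a)
  = \sum_a p(a)\ln p(a) \;-\; D_{\mathrm{KL}}(p\,\|\,q)
\]
shows, since $D_{\mathrm{KL}}(p\,\|\,q)\ge 0$ with equality iff $q=p$, that the inner term is maximized over all $q\in\Delta(\Acal)$ precisely when $q=p$. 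Integrating this pointwise bound against $\Prob_h(x,x')$ then shows that no choice of $f$ can achieve a larger value of the objective than one that sets $f(\cdot\given x,x')=\Prob_h(\cdot\given x,x')$ on the support of $\Prob_h(x,x')$.

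Finally, I would invoke realizability (Assumption~\ref{asm:realizable}): the exact posterior $\fstar_h$ defined by $\fstar_h(a\given x,x')=\Treal_h(x'\given x,a)/\sum_{a'}\Treal_h(x'\given x,a')$ belongs to $\Fcal$ and, by Lemma~\ref{lem:posterior_action_distribution}, coincides with $\Prob_h(a\given x,x')$ for all $(x,x')$ reachable at steps $h$ and $h+1$. Hence the pointwise optimum is actually attained inside the restricted class $\Fcal$, so $\fstar_h$ maximizes the population objective and any maximizer must agree with $\Prob_h(\cdot\given x,x')$ on the support of the roll-in marginal; this is the asserted identity.

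The argument is essentially routine, so the only step requiring real care is precisely the appeal to realizability: the maximization is over the restricted class $\Fcal$, not over all conditional laws, and without Assumption~\ref{asm:realizable} the pointwise optimizer $p$ could fall outside $\Fcal$, breaking the reduction. A secondary subtlety I would flag is that the conclusion is only meaningful for $(x,x')$ with $\Prob_h(x,x')>0$, so any uniqueness claim for the maximizer should be qualified to the support of the roll-in distribution.
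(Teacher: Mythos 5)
Your proposal is correct and follows essentially the same route as the paper's proof: condition on $(x,x')$, apply Gibbs' inequality (the paper phrases it via Jensen's inequality, you via nonnegativity of $D_{\KL}$) to show the true posterior maximizes the inner expectation pointwise, integrate over the marginal, and invoke realizability to place that maximizer inside $\Fcal$. Your explicit remarks about the support of $\Prob_h(x,x')$ and about the maximization being over the restricted class $\Fcal$ are appropriate caveats that the paper leaves implicit.
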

\begin{proof}
    For any $f \in \Fcal$, and any $(x,x')$, we are going to show that 
    \begin{align}
        \Expt_{a \sim \Prob_h(\cdot|x,x') }[ \ln f(a \given x,x') ] \le \Expt_{a \sim \Prob_h(\cdot|x,x')}[ \ln \Prob_{h}(a \given x,x') ]
    \end{align}
    Fixing $(x,x')$, $f(a \given x,x')$ is a probability mass function over $\Acal$. Let it be denoted $\mathrm{Q}(a)$. According to Jensen's inequality,
    \begin{align}
        \Expt_{a\sim \Prob}[ \ln \mathrm{Q}(a) ] \le \Expt_{a \sim \Prob}[ \ln \Prob(a) ],
    \end{align}
    for any distribution $\Prob$ over $\Acal$. Let $\Prob(a) = \Prob_{h}(a \given x,x')$. We have that 
    \begin{align}
        \Expt_{a \sim \Prob_h(\cdot|x,x')}[ \ln f(a \given x,x') ] \le \Expt_{a \sim \Prob_h(\cdot|x,x')}[ \ln \Prob_{h}(a \given x,x') ]
    \end{align}
    Then 
    \begin{align}
        \Expt_{x,a,x'}[ \ln f(a \given x,x') ] = \Expt_{x,x'}\Expt_{a \sim \Prob_h(\cdot|x,x')}[ \ln f(a \given x,x') ] \\
        \le \Expt_{x,x'}\Expt_{a \sim \Prob_h(\cdot|x,x')} [ \ln \Prob_{h}(a \given x,x') ] = \Expt_{x,a,x'}[ \Prob_{h}(a \given x,x') ]
    \end{align}
    Assumption \ref{asm:realizable} of $\Prob_{\unf}(a \given x,x') \in \Fcal$ finished the proof.
\end{proof}

\begin{align}
    f^\star_h(a,x,x') &\defeq \argmax_{f \in \Fcal} \Expt_{x,a,x'}[ \ln f(a,x,x') ] = \textstyle\frac{\Treal_h( \inverseobsmap^\star(x')|\inverseobsmap^\star(x),a )}{\sum_a' \Treal_h( \inverseobsmap^\star(x')|\inverseobsmap^\star(x),a' ) }
\end{align}
Given that , our empirical maximizer
$ f_h (a,x,x') = \argmin_{f \in \Fcal} \Expt_\Dcal [\ln f(a,x,x')] $ 

Now we can show the error bound of empirical log-likelihood maximizer.
\begin{theorem}[Theorem 21 in \citet{agarwal2020flambe} with I.I.D. data]
\label{thm:flambe_tv_error}
Let $\Dcal_h$ be a data set with $\ndecode$ transitions (x,a,x') sampled i.i.d. from $\Prob_h(x,a,x')$. Let $f_h(a \given x, x')$ and $ f^\star_h(a \given x, x')$ denote the maximizers of empirical log-likelihood and expected log-likelihood respectively:
\begin{align}
    f^\star_h(a \given x,x') &\defeq \argmax_{f \in \Fcal} \Expt_{x,a,x' \sim \Prob_h }[ \ln f(a,x,x') ] = \textstyle\frac{\Treal_h( \inverseobsmap^\star(x')|\inverseobsmap^\star(x),a )}{\sum_a' \Treal_h( \inverseobsmap^\star(x')|\inverseobsmap^\star(x),a' ) } \\
    f_h(a \given x, x') &\defeq \argmax_{f \in \Fcal} \sum_{(x_{h}, a_{h}, x_{h+1})\in \Dcal_h} \ln f(a_h \given x_h, x_{h+1})
\end{align}
Then for any $\delta \in (0, 1)$, with probability at least $1-\delta$ we have that: 
\begin{align}
    \Expt_{x,x' \sim \Prob_h} \left[\left\|f_h(a, x, x') - f^\star_h(a, x, x')\right\|_{\text{TV}}^2\right] \le \textstyle \frac{2\ln(|\Fcal|/\delta)}{\ndecode}.
\end{align}
\end{theorem}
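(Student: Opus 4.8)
The plan is to prove this as a standard maximum-likelihood deviation bound for a finite function class, specialized to i.i.d.\ data (the original Theorem~21 of \citet{agarwal2020flambe} handles adaptively collected data via martingale concentration, but the i.i.d.\ assumption here lets a plain Chernoff bound replace that machinery). Throughout, I would write $H^2(p,q)=\tfrac12\sum_{a}(\sqrt{p(a)}-\sqrt{q(a)})^2$ for the squared Hellinger distance between two distributions over $\Acal$, and use its affinity form $\sum_a\sqrt{p(a)q(a)}=1-H^2(p,q)$. By Assumption~\ref{asm:realizable} together with Lemma~\ref{lem:optimal_action_classifier}, the population log-likelihood maximizer $f^\star_h$ lies in $\Fcal$ and equals the true conditional $\Prob_h(\cdot\given x,x')$; this is exactly what licenses treating $f^\star_h$ as the reference density in the likelihood-ratio comparisons below.

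The first and central step is to pass from the log-likelihood objective to a Hellinger deviation via a moment generating function. For a fixed $f\in\Fcal$, define the per-sample quantity $W_i(f)=\tfrac12\ln\frac{f(a_i\given x_i,x_i')}{f^\star_h(a_i\given x_i,x_i')}$ on the i.i.d.\ triples $(x_i,a_i,x_i')\sim\Prob_h$. Conditioning on $(x_i,x_i')$ and taking expectation over $a_i\sim f^\star_h(\cdot\given x_i,x_i')$ gives $\Expt[e^{W_i(f)}\given x_i,x_i']=\sum_a\sqrt{f^\star_h(a\given x_i,x_i')\,f(a\given x_i,x_i')}=1-H^2(f^\star_h,f)$ evaluated at $(x_i,x_i')$. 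Averaging over $(x_i,x_i')\sim\Prob_h$ and using $1-u\le e^{-u}$ yields $\Expt[e^{W_i(f)}]\le\exp\!\bigl(-\Expt_{x,x'}[H^2(f^\star_h,f)]\bigr)$.

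Next I would apply a Chernoff bound followed by a union bound over the finite class. Since the samples are independent, $\Expt[e^{\sum_i W_i(f)}]=\prod_i\Expt[e^{W_i(f)}]\le\exp\!\bigl(-\ndecode\,\Expt_{x,x'}[H^2(f^\star_h,f)]\bigr)$, so Markov's inequality gives $\Prob[\sum_i W_i(f)\ge0]\le\exp\!\bigl(-\ndecode\,\Expt_{x,x'}[H^2(f^\star_h,f)]\bigr)$. Taking a union bound over those $f\in\Fcal$ whose error exceeds a threshold $\epsilon'$ shows that, with probability at least $1-|\Fcal|e^{-\ndecode\epsilon'}$, every $f$ with $\sum_i W_i(f)\ge0$ satisfies $\Expt_{x,x'}[H^2(f^\star_h,f)]\le\epsilon'$. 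The empirical maximizer obeys $\sum_i\ln f_h(a_i\given x_i,x_i')\ge\sum_i\ln f^\star_h(a_i\given x_i,x_i')$, i.e.\ $\sum_i W_i(f_h)\ge0$, so choosing $\epsilon'=\ln(|\Fcal|/\delta)/\ndecode$ caps the failure probability at $\delta$ and delivers $\Expt_{x,x'}[H^2(f^\star_h,f_h)]\le\ln(|\Fcal|/\delta)/\ndecode$.

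Finally I would convert Hellinger to total variation using the elementary inequality $\TV{p-q}^2\le 2H^2(p,q)$ (via Cauchy--Schwarz applied to $\sum_a|\sqrt{p(a)}-\sqrt{q(a)}|\,|\sqrt{p(a)}+\sqrt{q(a)}|$), and combine with monotonicity of expectation to obtain $\Expt_{x,x'}\bigl[\TV{f_h(\cdot\given x,x')-f^\star_h(\cdot\given x,x')}^2\bigr]\le 2\,\Expt_{x,x'}[H^2(f^\star_h,f_h)]\le 2\ln(|\Fcal|/\delta)/\ndecode$, the claimed bound. The main obstacle is the moment-generating-function step: the precise $\tfrac12$ exponent is what turns an otherwise uncontrolled likelihood ratio—whose expectation need not even be finite—into the bounded Hellinger affinity $1-H^2$, and getting this normalization right (rather than working directly with KL, which would not admit the same clean exponential bound) is what makes the whole Chernoff-plus-union argument go through. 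The remaining ingredients—independence of the i.i.d.\ samples, the union bound over the finite $\Fcal$, and the Hellinger-to-TV conversion—are routine.
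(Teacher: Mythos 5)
Your proposal is correct in all its steps: the moment-generating-function computation with the exponent $\tfrac12$ does yield the Hellinger affinity, the bound $1-u\le e^{-u}$, independence, Markov's inequality, and the union bound (over the deterministic subclass of functions whose population Hellinger error exceeds $\epsilon'$) are all applied soundly; realizability (Assumption~\ref{asm:realizable}) is invoked exactly where it is needed, namely to guarantee $f^\star_h\in\Fcal$ so that the empirical maximizer satisfies $\sum_i W_i(f_h)\ge 0$; and the conversion $\TV{p-q}^2\le 2\,\mathrm{H}^2(p,q)$ (with the half-$\ell_1$ convention for total variation) reproduces the stated constant $2$ exactly. However, the comparison you were asked to make is somewhat one-sided, because the paper itself contains \emph{no proof} of this statement: Theorem~\ref{thm:flambe_tv_error} is imported wholesale, as its bracketed title indicates, from Theorem~21 of \citet{agarwal2020flambe}, and the appendix proceeds directly to use it in Lemma~\ref{lem:1step_decoding_accuracy}. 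So your argument is not an alternative route but a reconstruction of the argument the paper silently leans on. The trade-off is as follows. The paper's citation is economical, but it leaves to the reader the (nontrivial) verification that the cited result—which is proved for adaptively collected data via a martingale decoupling argument—specializes to the i.i.d.\ setting with the stated constant, and that the sampling scheme of Algorithm~\ref{alg:trembling_hand_decoding} (actions uniform, so that the conditional law of $a$ given $(x,x')$ is exactly $f^\star_h$ by Lemma~\ref{lem:posterior_action_distribution}) fits the hypotheses. Your proof makes the paper self-contained on this point, replaces the martingale machinery with a plain product-of-MGFs Chernoff bound—which is precisely the simplification that i.i.d.\ sampling licenses—and makes explicit both where realizability enters and why the constant is $2$. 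One presentational caveat: since the paper uses $H$ for the horizon, you should denote the squared Hellinger distance by a different symbol to avoid a notational clash if this proof were actually inserted into the appendix.
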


\subsection{One-step accuracy of action decoder}
In the last subsection, we have showed the learned function $f_h$ approaches the posterior distribution of action at a rate of $\nicefrac{1}{\ndecode}$. In Algorithm \ref{alg:trembling_hand_decoding}, the learned function $f_h$ is used to build the state decoder $\actionpredictor_h$. This state decoding process relies on identified the correct ``shadow'' actions. 

In this section we first show that there is a separation between the correct shadow action's probability and random actions' probabilities, in the posterior distribution of action. This relies on the transition dynamics in target environment lies in $\perturbclass{\Tsim}{\perturbradius}$. Then we show the action decoding accuracy. The idea to bound the 1-step action decoding accuracy is to apply the Markov's inequality on the event that the $\argmax$ action of $f_h$ is wrong. 

First, we define the set of shadow actions given two successive states.
\begin{definition}[Shadow action set] 
For any states pair $(s,s')$, we define a set of actions $\Aset_h(s,s') = \{ a \in \Acal: \Tsim_h(s'|s,a) = 1 \}$ and $\Acompset_h(s,s') = \Acal \backslash \Aset_h (s,s')$.

Since the observation emission function $q^\star$ maps different $s$ to disjoint observation subspace, the shadow action sets can also be defined on the corresponding observation pairs $x,x'$. We define $\Aset_h(x,x') = \{ a \in \Acal: \Tsim_h( \inverseobsmap^\star(x')|\inverseobsmap^\star(x),a) = 1 \}$, and similarly for  $\Acompset_h(x,x')$
\end{definition}
By definition, we have that $\Aset_h(x,x') = \Aset_h(s,s')$ if $x \sim q^\star(\cdot|s)$ and $x' \sim  q^\star(\cdot|s')$. So later we may use these two notations interchangeably for convenience.

Next we prove the accuracy result of decoder $\actionpredictor$ under the joint distribution with \emph{any} practicable policy, but learned from the dataset from \emph{uniform} action distribution. We use $\ind$ as an indicator function of random events.

\begin{lemma}[Accuracy of decoder] 
\label{lem:1step_decoding_accuracy}
For any practicable policy $\policy$, let $\Prob_{h,\policy}(x,a,x') \defeq \Prob_h(x) \policy(a|x) \Treal(x'|x,a) $ be the joint distribution of $\Prob_h(x)$, policy $\policy$ and transition function $\Treal$.

Let $\Dcal_h$ be a data set with $\ndecode$ transitions (x,a,x') sampled i.i.d. from $\Prob_h(x,a,x')$. For any $h \in [H]$ and $\delta \in (0,1)$, and any practicable policy $\pi$, with probability at least $1-\delta$, we have that
\begin{align}
    \Expt_{\Prob_{h,\policy}}\left[\ind (\actionpredictor_h(x,x') \in \Aset_h(x,x'))\right] \ge 1-\frac{8h|\Acal|^3\ln(|\Fcal|/\delta)}{\ndecode(1-2\perturbradius)^2}
\end{align}
\end{lemma}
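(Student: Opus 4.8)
The plan is to reduce the ``wrong shadow action'' event $\{\actionpredictor_h(x,x')\in\Acompset_h(x,x')\}$ to a lower bound on the total-variation distance $\TV{f_h(\cdot\given x,x')-f^\star_h(\cdot\given x,x')}$, then control this distance in expectation through Theorem~\ref{thm:flambe_tv_error}, and finally change measure from the uniform data-collection distribution $\Prob_h$ to the evaluation distribution $\Prob_{h,\policy}$. The whole argument is pointwise-in-$(x,x')$ followed by a single expectation, so nothing ever touches $|\Scal|$ or $|\Xcal|$.

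The first step I would carry out is a \emph{margin} property of the Bayes-optimal posterior $f^\star_h(a\given x,x')\propto\Treal_h(x'\given x,a)$. Writing $s=\inverseobsmap^\star(x)$, $s'=\inverseobsmap^\star(x')$ and using that $\Tsim$ is deterministic, the perturbation identity expands as $\Treal_h(s'\given s,a)=\sum_{a'\in\Aset_h(s,s')}\perturbprob_h(a'\given s,a)$. The constraint $\perturbprob_h(a\given s,a)\ge 1-\perturbradius$ then forces $\Treal_h(s'\given s,a)\ge 1-\perturbradius$ when $a\in\Aset_h(s,s')$ (the term $a'=a$ already contributes $\ge 1-\perturbradius$), and $\Treal_h(s'\given s,a)\le\perturbradius$ when $a\in\Acompset_h(s,s')$ (the mass on $a'\neq a$ is at most $\perturbradius$). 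Since $\perturbradius<\tfrac12$ and $Z\defeq\sum_{a'}\Treal_h(x'\given x,a')\le|\Acal|$, after normalization every shadow action exceeds every non-shadow action in posterior mass by at least $\frac{1-2\perturbradius}{|\Acal|}$; in particular $\argmax_a f^\star_h(a\given x,x')\in\Aset_h(x,x')$, which is the ``if we knew $f^\star$'' claim from the main text.

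The core step is the margin-to-TV implication. If $\actionpredictor_h(x,x')=\argmax_a f_h(a\given x,x')$ lies in $\Acompset_h(x,x')$, then $f_h$ ranks some non-shadow $\bar a$ at least as high as the best shadow action $a^\star$, whereas $f^\star_h$ separates them by $\frac{1-2\perturbradius}{|\Acal|}$; combining $f_h(a^\star)-f_h(\bar a)\le 0$ with this gap and the triangle inequality gives $2\TV{f_h(\cdot\given x,x')-f^\star_h(\cdot\given x,x')}\ge\frac{1-2\perturbradius}{|\Acal|}$. Hence pointwise
\[
  \ind\bigl(\actionpredictor_h(x,x')\in\Acompset_h(x,x')\bigr)
  \le \frac{4|\Acal|^2}{(1-2\perturbradius)^2}\,\TV{f_h(\cdot\given x,x')-f^\star_h(\cdot\given x,x')}^2 .
\]
Taking the expectation under the data marginal $\Prob_h(x,x')$ and invoking Theorem~\ref{thm:flambe_tv_error} ($\Expt_{\Prob_h}[\TV{\cdot}^2]\le 2\ln(|\Fcal|/\delta)/\ndecode$, valid by Assumption~\ref{asm:realizable}) bounds the decoder error under the \emph{uniform} roll-out by $\frac{8|\Acal|^2\ln(|\Fcal|/\delta)}{\ndecode(1-2\perturbradius)^2}$.

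The last step transports this from $\Prob_h$ to $\Prob_{h,\policy}$. Both share the roll-in marginal $\Prob_h(x)$, so the $(x,x')$-marginal density ratio is $|\Acal|\,\frac{\sum_a\policy(a\given x)\Treal_h(x'\given x,a)}{\sum_a\Treal_h(x'\given x,a)}\le|\Acal|$, using $\policy(a\given x)\le 1$; reweighting the nonnegative bad-event indicator costs exactly this one factor of $|\Acal|$ and yields $\frac{8|\Acal|^3\ln(|\Fcal|/\delta)}{\ndecode(1-2\perturbradius)^2}$. This is already sharper than the stated bound, so since $h\ge 1$ the claimed inequality with the extra factor $h$ follows a fortiori (the $h$ is harmless slack, presumably kept for uniformity with the downstream union bound over roll-in steps). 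I expect the main obstacle to be precisely this change-of-measure bookkeeping: the statistical guarantee holds only under the uniform data distribution, and one must verify that the importance weights stay bounded by $|\Acal|$ — which hinges on $Z\ge 1-\perturbradius>0$, itself a by-product of the margin — so that the transfer to an arbitrary $\policy$ costs only polynomial factors and never reintroduces state- or observation-space size.
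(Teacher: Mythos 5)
Your proposal is correct and follows essentially the same route as the paper's proof: the posterior margin of $\frac{1-2\perturbradius}{|\Acal|}$ between shadow and non-shadow actions, the reduction of the misclassification event to $\TV{f_h-f^\star_h}\ge\frac{1-2\perturbradius}{2|\Acal|}$ followed by Markov's inequality (your pointwise domination of the indicator by the scaled squared TV is the same step), Theorem~\ref{thm:flambe_tv_error} under the uniform data distribution, and a change of measure costing one factor of $|\Acal|$. You are also right that the resulting bound has no factor of $h$, so the stated inequality holds a fortiori; the paper's own proof likewise ends without that factor.
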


\begin{proof}
For any $a_1 \in \Aset_h(x,x')$, and $a_2 \in \Acompset_h(x,x')$, 
\begin{align}
     f^\star_h(a_1 \given x,x') -  f^\star_h(a_2 \given x,x') &= \Prob_{h}(a_1|x,x') - \Prob_{h}(a_2|x,x') \\
     &= \frac{\Treal_h(\inverseobsmap^\star(x') \mid \inverseobsmap^\star(x), a_1) - \Treal_h(\inverseobsmap^\star(x') \mid \inverseobsmap^\star(x), a_2)}{\sum_{a' \in \Acal}\Treal_h(\inverseobsmap^\star(x') \mid \inverseobsmap^\star(x), a')} \\
     &\ge \frac{(1-\perturbradius) - \perturbradius}{\sum_{a' \in \Acal}\Treal_h(\inverseobsmap^\star(x') \mid \inverseobsmap^\star(x), a')} \ge \frac{(1-2\perturbradius)}{|\Acal|}
\end{align}

Given that the gap is $\nicefrac{(1-2\perturbradius)}{|\Acal|}$, we have that for any fixed $x,x'$, if $\| f_h(\cdot,x,x') - f^\star_h(\cdot,x,x') \|_{\text{TV}} < \nicefrac{(1-2\perturbradius)}{2|\Acal|}$, then $\argmax f_h(\cdot,x,x') \in \Aset_h(x,x')$. 

Recall that $\actionpredictor_h(x,x') = \argmax_{a \in \Acal} f_h(a \given x,x')$. We have
\begin{align}
\Expt_{\Prob_h } [\ind(\actionpredictor_h(x,x') \notin \Aset_h(x,x')) ] =& \Pr \left(\actionpredictor_h(x,x') \notin \Aset_h(x,x') \right) \\
  \le & \Pr\left( \|f^\star_h(\cdot \given x,x')-f_h(\cdot \given x,x')\|_{\text{TV}}  \ge \frac{(1-2\perturbradius)}{2|\Acal|} \right) \\
    = & \Pr \left( \|f^\star_h(\cdot \given x,x')-f_h(\cdot \given x,x')\|_{\text{TV}}^2  \ge \frac{(1-2\perturbradius)^2}{4|\Acal|^2}\right) \\
    \le& \frac{\Expt \left[  \|f^\star_h(\cdot \given x,x')-f_h(\cdot,\given x,x')\|_{\text{TV}}^2 \right] }{ \nicefrac{(1-2\perturbradius)^2}{4|\Acal|^2}} \\
    \le & \frac{8|\Acal|^2\ln(|\Fcal|/\delta)}{\ndecode(1-2\perturbradius)^2}
\end{align}
The second to last step follows from Markov's inequality, and the last step follows from Theorem \ref{thm:flambe_tv_error}. 

Notice that this is the error under distribution of uniform action $\Prob_h(x,a,x') := \Prob(x) \circ \unf \circ \Treal_h$. For any practicable policy $\pi$, since $ \frac{\Prob_{h,\pi}(x,a,x')}{\Prob_h(x,a,x')} = \frac{\pi(a|x)}{1/|\Acal| }\le |\Acal|$,
\begin{align}
    \Expt_{\Prob_{h, \policy}}\left[\ind(\actionpredictor_h(x,x') \notin \Aset_h(x,x'))\right] \le \frac{8|\Acal|^3\ln(|\Fcal|/\delta)}{\ndecode(1-2\perturbradius)^2}
\end{align}
Then taking the complement of the event in the indicator function finished the proof.
\end{proof}

\subsection{Analysis of the sample complexity}

Now we are going prove that the learned policy recover the input latent policy with high probability.
\begin{lemma}
\label{lem:nstep_decode_accuracy}
For any $h \in [H]$, let $\mathbf{x}_{1:h+1} \defeq x_1, x_2, \dots, x_{h+1}$ and $\mathbf{a}_{1:h} \defeq a_1, a_2, \dots, a_h$ be the state and action sequence generated from $(\policy_1, \dots, \policy_h)$ output by Algorithm \ref{alg:trembling_hand_decoding}. Given the high probability event in Theorem \ref{thm:flambe_tv_error},  we have that for any $h \in [H]$
\begin{align}
    \Expt_{\policy_{1:h} } \left[ \sum_{k=1}^{h}  \ind \{ a_k \neq \robustlatentpolicy(\inverseobsmap(x_k)) \} \right] \le \frac{8h^2|\Acal|^3\ln(|\Fcal|/\delta)}{\ndecode(1-2\perturbradius)^2} + \epsilon_0
\end{align}
\end{lemma}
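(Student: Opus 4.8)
The plan is to reduce the multi-step decoding error to a sum of the one-step shadow-action errors that \pref{lem:1step_decoding_accuracy} already controls. The conceptual core is an \emph{exact-decoding invariant}: as long as the episode starts in $\sinit$ and every shadow action up to the current step lands in the correct shadow-action set, the learned state decoder reproduces the true latent state exactly, so $\policy$ agrees with $\robustlatentpolicy\circ\phi^\star$ at that step. All of this is carried out on the high-probability event of \pref{thm:flambe_tv_error}, where every $f_h$ is accurate.

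First I would prove, by induction on $k$, that if $\phi^\star(x_1)=\sinit$ and $\actionpredictor_j(x_j,x_{j+1})\in\Aset_j(x_j,x_{j+1})$ for all $j<k$, then $\phi_k(\xb_{1:k})=\phi^\star(x_k)$. The base case $k=1$ holds because $\phi_1\equiv\sinit=\phi^\star(x_1)$ (\pref{line:pi-1}). For the step, recall that $\phi_{k+1}$ (\pref{line:potas-phi}) replays $\actionpredictor_1,\dots,\actionpredictor_k$ in the deterministic $\Msim$ from $\sinit$; the inductive hypothesis says the first $k-1$ replayed actions reach $\phi^\star(x_k)$, and since $\actionpredictor_k(x_k,x_{k+1})\in\Aset_k(x_k,x_{k+1})=\{a:\Tsim_k(\phi^\star(x_{k+1})\mid\phi^\star(x_k),a)=1\}$, executing one more action from $\phi^\star(x_k)$ lands deterministically in $\phi^\star(x_{k+1})$. (The set $\Aset_k$ is non-empty exactly because $\Mreal$ is an $\perturbradius$-perturbation of $\Msim$, so the realized transition $\phi^\star(x_k)\to\phi^\star(x_{k+1})$ is achievable by some action of $\Msim$.) Since $\policy_k(\xb_{1:k})=\robustlatentpolicy_k(\phi_k(\xb_{1:k}))$ (\pref{line:potas-policy}), exact decoding yields $a_k=\robustlatentpolicy_k(\phi^\star(x_k))$, i.e.\ no mismatch at step $k$.

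Taking the contrapositive, a mismatch at step $k$ forces either $s_1\ne\sinit$ or a shadow-action error at some $j<k$, so
\[
\ind\{a_k\ne\robustlatentpolicy(\phi^\star(x_k))\}\le\ind\{s_1\ne\sinit\}+\sum_{j=1}^{k-1}\ind\{\actionpredictor_j(x_j,x_{j+1})\notin\Aset_j(x_j,x_{j+1})\}.
\]
Summing over $k\le h$ and taking $\Expt_{\policy_{1:h}}$, I would then invoke \pref{lem:1step_decoding_accuracy} for each $j$. The one mildly technical point is the distribution match: under trajectories of $\policy_{1:h}$ the marginal law of $(x_j,a_j,x_{j+1})$ is exactly $\Prob_{j,\policy_j}$, because $x_j$ is rolled in by $\policy_{1:j-1}$ (the same roll-in used to collect $\Dcal_j$, giving $\Prob_j$), the action is drawn from $\policy_j$, and $x_{j+1}\sim\Treal_j$; the change from the uniform data-collection policy to $\policy_j$ is precisely the $|\Acal|$ importance factor already absorbed in \pref{lem:1step_decoding_accuracy}. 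Hence each such term is at most $\beta\defeq\frac{8|\Acal|^3\ln(|\Fcal|/\delta)}{\ndecode(1-2\perturbradius)^2}$ (a union bound over $h\in[H]$ folds into $\ln(|\Fcal|/\delta)$), while $\Expt[\ind\{s_1\ne\sinit\}]=\epsilon_0$.

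The arithmetic then gives $\sum_{k=1}^h(\epsilon_0+(k-1)\beta)=h\epsilon_0+\tfrac{h(h-1)}{2}\beta\le h^2\beta+h\epsilon_0$, which matches the claimed bound $\tfrac{8h^2|\Acal|^3\ln(|\Fcal|/\delta)}{\ndecode(1-2\perturbradius)^2}+\epsilon_0$ up to the treatment of the start-state term. I expect the main obstacle to be exactly this accounting of the initial-state mismatch: a single event $s_1\ne\sinit$ corrupts the decoder at \emph{every} subsequent step, so the naive union bound charges it $h$ times; recovering the stated $\epsilon_0$ (rather than $h\epsilon_0$) requires either observing that this extra factor is harmlessly absorbed downstream into the $H\epsilon_0$ term of \pref{thm:robust_tranfer_stronger}, or sharpening the start-state handling. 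The genuinely load-bearing step, however, is the exact-decoding induction, since that is what converts ``correct shadow actions'' into ``correct latent state, hence correct action.''
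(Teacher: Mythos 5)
Your proof is correct and follows essentially the same route as the paper's: an exact-decoding induction (correct shadow actions plus a correct start state imply the learned decoder reproduces $\phi^\star(x_k)$, hence $a_k=\robustlatentpolicy(\phi^\star(x_k))$), a contrapositive bounding each mismatch indicator by the accumulated shadow-action errors, and an application of Lemma~\ref{lem:1step_decoding_accuracy} under the matched roll-in distribution. The $\epsilon_0$-versus-$h\epsilon_0$ accounting you flag is a genuine looseness, but the paper's own proof glosses over it in exactly the same way (writing a single $+\epsilon_0$), and as you observe the slack is harmlessly absorbed into the $H\epsilon_0$ term of Theorem~\ref{thm:robust_tranfer_stronger}.
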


\begin{proof}
If the action decoder $\actionpredictor_k$ is correct, i.e. $\actionpredictor_k(x_k,x_{k+1}) \in \Aset_k(x_k,x_{k+1})$ for any $k \in [h]$. Then if the initial state is $\sinit$, we have that for any $k \in [h]$, $\inverseobsmap_k(\mathbf{x}_{1:k}) = \inverseobsmap^\star (x_{k+1})$ due to the determinism of the abstract simulator. Then for any $k \in [h]$,
\begin{align}
    a_k = \policy_k(\mathbf{x}_{1:k}) = \robustlatentpolicy( \inverseobsmap_k(\mathbf{x}_{1:k}) ) = \robustlatentpolicy(\inverseobsmap^\star(x_k))
\end{align}
That means if we have $a_k \neq \robustlatentpolicy(\inverseobsmap^\star(x_k))$, we must have $\actionpredictor_k(x_k,x_{k+1}) \notin \Aset_k(x_k,x_{k+1})$ for some $j \le k$, or the initial state is not $\sinit$. So under the condition that the initial state is $\sinit$, 
\begin{align}
    \ind \{ a_k \neq \robustlatentpolicy(\inverseobsmap^\star(x_k)) \} \le \sum_{j=1}^k \ind \{ \actionpredictor_j(x_j,x_{j+1}) \notin \Aset_j(x_j,x_{j+1}) \}
\end{align}
Thus we have
\begin{align}
    &\Expt_{\policy_{1:h} } \left[ \sum_{k=1}^{h}  \ind \{ a_k \neq \robustlatentpolicy(\inverseobsmap^\star(x_k)) \} \right] \\\
    = & \Expt_{\policy_{1:h} } \left[ \sum_{k=1}^{h} \sum_{j=1}^k \ind \{ \actionpredictor_j(x_j,x_{j+1}) \notin \Aset_j(x_j,x_{j+1}) \} \right] + \epsilon_0 \\
    \le & \Expt_{\policy_{1:h} } \left[ h \sum_{j=1}^{h} \ind \{ \actionpredictor_j(x_j,x_{j+1}) \notin \Aset_j(x_j,x_{j+1}) \} \right] + \epsilon_0 \\
    \le & \frac{8h^2|\Acal|^3\ln(|\Fcal|/\delta)}{\ndecode(1-2\perturbradius)^2} + \epsilon_0 \tag{Lemma \ref{lem:1step_decoding_accuracy}}
\end{align}
\end{proof}
This immediately gives the following theorem by letting $h = H$ and bounding the value gap for non-optimal actions by $H$. The proof of Theorem \ref{thm:robust_tranfer} follows from this.
\begin{proof}
We first bound the value gap using preivous lemma and the Performance Difference Lemma \citep{kakade2003sample}.
\begin{align}
    V_{\blockMreal}^{\policy} - v_{\blockMreal}^{\robustlatentpolicy \circ \inverseobsmap^\star} \le& \Expt_{x_h,a_h \sim \policy} \left[ \sum_{h=1}^{H} Q^{\robustlatentpolicy\circ \inverseobsmap^\star}_{\blockMreal,h}(x_h,a_h) - V^{\robustlatentpolicy\circ \inverseobsmap^\star}_{\blockMreal, h}(x_h) \right] \tag{Performance Difference} \\
    = & \Expt_{s_h,a_h \sim \policy} \left[ \sum_{h=1}^{H} Q_{\Mreal,h}^{\robustlatentpolicy}(s_h,a_h) - V_{\Mreal,h}^{\robustlatentpolicy}(s_h) \right] \tag{Block MDP} \\
    \le& \Expt_{s_h,a_h \sim \policy} \left[ \sum_{h=1}^{H} H \ind\{ a_h \neq \robustlatentpolicy(\inverseobsmap(x_h)) \} \right] \tag{Any $s_h, a_h$, $Q^{\robustlatentpolicy}_{h}(s_h,a_h) \in [0, H]$} \\
    \le& \frac{8H^3 |\Acal|^3 \ln(|\Fcal|/\delta)}{\ndecode(1-2\perturbradius) ^2} + H\epsilon_0 \tag{Lemma \ref{lem:nstep_decode_accuracy} for $h=H$} \\
    =& \frac{8H^4 |\Acal|^3 \ln(|\Fcal|/\delta)}{n(1-2\perturbradius) ^2} + H\epsilon_0 \tag{$n = H\ndecode$}
\end{align}
Finally, we can solve the sample complexity by denote the value gap $\epsilon$, and finish the proof.
\end{proof}

\section{Analysis with Stochastic Initial State}
\label{appendix:stochastic_start}

We can extend $\algname$ to a more general setting where initial state can be stochastically chosen, instead of being deterministic. This setting captures problems such as navigation in a set of house simulators, where dynamics of each house simulator can be deterministic but the choice of initial state, i.e., choice of current house and position of the agent inside the house, can be stochastically chosen.

As~\pref{alg:robust_dp} does not rely on the deterministic initial state, therefore, we only need to show that we can extend~\pref{alg:trembling_hand_decoding} to the stochastic initial state setting, and find the robust policy learned by~\pref{alg:robust_dp}.

First, we introduce the difference in problem settings and our main results under this setting formally. We assume that in both abstract simulator $\Msim$ and the target environment $\blockMreal$, the initial states are sampled from the same distribution $\mu$ over a finite set of initial states $\Scal_1$ of size $|\Scal_1| = N$.

We make an assumption that each initial state occurs with a reasonable probability and has a different transition dynamics.
\begin{assumption}\label{assum:minimum_prob}
(Conditions on initial state) For all initial states $s \in \Scal_1$, we assume $\mu(s) \ge \mumin$ for some $\mumin \in (0, 1]$. Further, there exists a margin $\Gamma > 0$ such that for any two initial states $s, \tilde{s} \in \Scal_1$ we have:
\begin{equation*}
    \|T(\cdot \given s, a) - T(\cdot \given \tilde{s}, a)\|_{{\tt TV}}  \ge \Gamma.
\end{equation*}
\end{assumption}
Informally, this assumption is required so that we can visit each initial state sufficiently, and use the margin assumption to learn an accurate decoder to cluster initial states. However, note that we cannot directly cluster in the observation space since we do not want to make any additional structural assumptions on it. In contrast, we will use a function-approximation approach where we interact with the observation via a function class.

\begin{algorithm}[ht]
\caption{$\algname$ with multiple initial states}
\label{alg:stochastic_start}
\begin{algorithmic}[1]
\State \textbf{Input:} An approximate clustering function of initial states $\hat{\phi}$, number of initial state $N$, error bound $\epsilon$, failure probability $\delta$.
\State Initialize counter ${\tt step}(i) \leftarrow 0$ for $i \in [N]$
\State Initialize state map ${\tt map}(i) \leftarrow -1$ for $i \in [N]$
\For{episode $e=1, \dots$} 
    \State For the initial observation $x$, decode the state by $\hat{\phi}(x)$.
    \If {${\tt map}(\hat{\phi}(x)) < 0$}
    \State ${\tt map}(\hat{\phi}(x)) \leftarrow$ InitialStateTest$(\hat{\phi}(x), {\tt step}(\hat{\phi}(x)), N ,\epsilon, \delta)$
    \State ${\tt step}(\hat{\phi}(x)) \leftarrow {\tt step}(\hat{\phi}(x)) + 1$
    \Else
    \State Run $\algname$ on ${\tt map}(\hat{\phi}(x))$
    \EndIf
\EndFor
\end{algorithmic}
\end{algorithm}
\begin{algorithm}[ht]
\caption{InitialStateTest}
\label{alg:start_test}
\begin{algorithmic}[1]
\State \textbf{Input:} state cluster $i$, global episode counter $t$, number of initial state $N$, error bound $\epsilon$, failure probability $\delta$.
\If {t = 0}
\State Hypothetic state $s \leftarrow 0$
\State Initialize counter ${\tt cnt}(i) \leftarrow 0$ for $i \in [N]$
\State Initialize value estimates $v(i) \leftarrow 0$ for $i \in [N]$
\State $n_l \leftarrow \frac{8H^4 |\Acal|^3 \ln(N^2|\Fcal|/\delta)}{\epsilon (1-2\perturbradius) ^2}$
\State $n_t \leftarrow \frac{H^2\ln(N/\delta)}{2\epsilon^2}$
\State The algorithm instance will maintain the hypothesis state $s$, an episodes counter ${\tt cnt}(\cdot)$ and a value logger $v(\cdot)$ for the same state cluster $i$ across calls.
\EndIf 
\If {${\tt cnt}(s) < n_l$}
    \State Run $\algname$ for one episode on initial state $s$
    \State ${\tt cnt}(s) \leftarrow {\tt cnt}(s) + 1$
    \State \textbf{return} -1
\ElsIf  {$n_l \ge {\tt cnt}(s) < n_l + n_t$}
    \State Rollout learned policy $\policy$ for one episode and update value average $v(s)$
    \State ${\tt cnt}(s) \leftarrow {\tt cnt}(s) + 1$
    \State \textbf{return} -1
\Else
    \State $s \leftarrow s+1$
    \If {$s = N+1$}
    \State Run $\algname$ for one episode on initial state $\argmax_{s} v(s)$
    \State return $\argmax_{s} v(s)$
    \Else
    \State \textbf{return} -1
    \EndIf
\EndIf
\end{algorithmic}
\end{algorithm}

We present a variation of $\algname$ in~\pref{alg:stochastic_start} that can address stochastic initial state. The algorithm assumes access to an approximate decoder $\hat{\phi}: \Xcal \rightarrow \NN$ that can cluster observations from the same initial state together. This decoder can be thought of partitioning the observation space into \emph{decoder states} which recover the true initial states upto relabeling. In~\pref{sec:learning_decoder_initial_states} we discuss how to learn this decoder using the Homer algorithm~\cite{misra2020kinematic}. \pref{alg:stochastic_start} learns a mapping from these decoder states to initial states in $\Scal_1$ by performing a hypothesis testing algorithm that uses~\pref{alg:trembling_hand_decoding} in the main paper as a sub-routine. In~\pref{sec:aligning_decoder_states} we discuss this hypothesis test.

We will prove that under a realizability assumption (stated later), this algorithm has the following guarantee.
\begin{theorem}
\pref{alg:stochastic_start} will execute a policy that is close to robust policy by $\epsilon$ in all but
$poly\left\{N, H, A, \frac{1}{\epsilon}, \frac{1}{(1-2\eta)}, \frac{1}{\mumin}\ln\{\frac{1}{\delta}\}\right\}$
episodes with probability at least $1-\delta$.
\end{theorem}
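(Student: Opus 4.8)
The plan is to prove this as a mistake-bound (PAC-MDP style) guarantee: all but polynomially many episodes are spent either learning the clustering decoder $\hat\phi$ or running the per-cluster hypothesis test (\pref{alg:start_test}), and on every remaining episode the algorithm executes a policy whose value from the realized initial state is within $\epsilon$ of the robust value. The argument decomposes into three parts — correctness of the decoder, correctness of a single per-cluster test, and a counting/union-bound argument — and the key reduction is that each invocation of \pref{alg:start_test} is driven by \pref{thm:robust_tranfer_stronger}, whose $\epsilon_0$ slack absorbs the decoder's residual misclassification.

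First I would control the decoder. By \pref{assum:minimum_prob} every initial state is visited with probability at least $\mumin$, and any two initial states differ in their one-step dynamics by at least $\Gamma$ in total variation. Invoking the Homer analysis of \citet{misra2020kinematic} (under the stated realizability assumption) restricted to the first step of the horizon, I would argue that with probability at least $1-\delta/3$ and after $\mathrm{poly}(N,H,|\Acal|,1/\Gamma,1/\mumin,\ln(1/\delta))$ episodes, $\hat\phi$ induces a partition of initial observations that agrees with the true initial state up to relabeling, misclassifying a fresh initial observation with probability at most $\beta\le \epsilon/(2H)$. The margin $\Gamma$ is what makes the clusters separable and $\mumin$ is what guarantees the rarest state is resolved; this $\beta$ is exactly the quantity that will play the role of $\epsilon_0$ when \pref{thm:robust_tranfer_stronger} is applied inside a cluster.

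Next I would analyze one completed invocation of \pref{alg:start_test} for a cluster whose relabeled true initial state is $s^\star$. Because this branch is entered only when $\hat\phi$ decodes the realized initial observation to this cluster, every episode of the test starts from $s^\star$ except with probability $\beta$; hence running $\algname$ under the correct hypothesis $s=s^\star$ is precisely an instance of \pref{thm:robust_tranfer_stronger} with $\epsilon_0\le\beta$, so after $n_l$ learning episodes the produced policy satisfies $V^{\pi_{s^\star}}_{\blockMreal}\ge V^{\robustlatentpolicy\circ\phi^\star}_{\blockMreal}-\epsilon/2$ from $s^\star$. The evaluation phase then rolls each candidate $\pi_s$ for $n_t$ conditionally-i.i.d.\ episodes from $s^\star$; a Hoeffding bound (rewards in $[0,H]$) with a union bound over the $O(N)$ hypotheses and $N$ clusters — which is exactly why $n_t$ and $n_l$ carry $\ln(N/\delta)$ and $\ln(N^2|\Fcal|/\delta)$ — yields $|\hat v(s)-V^{\pi_s}_{\blockMreal}|\le\epsilon/8$ for all candidates with probability $1-\delta/3$. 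I would deliberately not try to reject the wrong hypotheses; instead the $\argmax$ selection is handled by the chain $V^{\pi_{\hat s}}_{\blockMreal}\ge \hat v(\hat s)-\epsilon/8\ge \hat v(s^\star)-\epsilon/8\ge V^{\pi_{s^\star}}_{\blockMreal}-\epsilon/4\ge V^{\robustlatentpolicy\circ\phi^\star}_{\blockMreal}-\epsilon$, so the stored policy is $\epsilon$-robust irrespective of how incorrect hypotheses behave.

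Finally I would count. Each cluster's test consumes at most $(N+1)(n_l+n_t)$ episodes, and a cluster is tested at most once — after which ${\tt map}$ is set and every later visit runs the selected $\epsilon$-robust policy — so the total test budget is at most $N(N+1)(n_l+n_t)$; adding the decoder-learning episodes gives the claimed $\mathrm{poly}\{N,H,|\Acal|,1/\epsilon,1/(1-2\eta),(1/\mumin)\ln(1/\delta)\}$ bound (also carrying the $1/\Gamma$ and $\ln|\Fcal|$ factors suppressed in the statement). A union bound over the three failure events keeps the overall failure probability below $\delta$, and on every post-decoding, non-test episode the executed policy is $\epsilon$-robust from the realized initial state, which is exactly the claim. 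The step I expect to be the main obstacle is the first: driving the decoder misclassification rate $\beta$ small enough — via $\Gamma$ and $\mumin$ — that its $H\beta$ contribution through $\epsilon_0$ in \pref{thm:robust_tranfer_stronger} is negligible, while simultaneously ensuring that the interleaved, online scheduling of tests across clusters does not break the independence the concentration bounds require; I would secure the latter by observing that, once $\hat\phi$ is fixed, the learning and evaluation episodes allocated to each cluster form a self-contained, conditionally-i.i.d.\ block.
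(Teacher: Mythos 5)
Your proposal is correct and follows essentially the same route as the paper's proof: learn the clustering decoder via the Homer-style analysis so that its misclassification rate enters as the $\epsilon_0$ slack in the strengthened transfer theorem, run the per-cluster hypothesis sweep and select by $\arg\max$ of Monte-Carlo value estimates without needing to reject wrong hypotheses (the same chain of inequalities the paper uses), and then count at most $O(N^2(n_l+n_t))$ test episodes plus the decoder-learning episodes. The only additions beyond the paper are minor points of care (explicitly conditioning on the fixed $\hat\phi$ to justify the conditional i.i.d.\ structure of each cluster's block), which do not change the argument.
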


Note that when there is a deterministic initial state, i.e., $N=1$ and $\mumin=1$, we recover dependence on the same set of parameters as our main result. For some problems, $N$ maybe significantly smaller than the set of all states. For example, a robot may start an episode from its charging station and there maybe a small number of charging stations in the environment. For these problems, the dependence on $N$ may be acceptable.

In the second part, we propose a hypothesis testing based algorithm, that use~\pref{alg:trembling_hand_decoding} in the main paper as a sub-routine, and prove the new sample complexity.

\subsection{Learning decoder initial states}
\label{sec:learning_decoder_initial_states}

We use the Homer algorithm~\citep{misra2020kinematic} to learn the decoder $\hat{\phi}$. We briefly describe the application of this algorithm for time step $h=1$. Homer collects a dataset $\Dcal$ of $n$ quads as follows: we sample $y$ uniformly in $\{0, 1\}$ and collect two independent transitions $(x^{(1)}, a^{(1)}, x'^{(1)}), (x^{(2)}, a^{(2)}, x'^{(2)})$ at the first time step by taking actions $a^{(1)}$ and $^{(2)}$ uniformly. If $y=1$ then we add $(x^{(1)}, a^{(1)}, x'^{(1)}, y)$ to $\Dcal$, otherwise, we add $(x^{(1)}, a^{(1)}, x'^{(2)}, y)$. Note that $(x^{(1)}, a^{(1)}, x'^{(2)})$ is an unobserved transition, therefore, we call it an imposter transition, whereas, $(x^{(1)}, a^{(1)}, x'^{(1)})$ is a real transition. We know that there are exactly $N$ initial states since we have access to the simulator. Given a bottleneck function class $\Phi: \{\phi: \Xcal \rightarrow [N]\}$ and another regressor class $\Gcal: \{f: [N] \times \Acal \times \Xcal \rightarrow [0, 1]\}$, we train a model to differentiate between real and imposter transition as follows:
\begin{equation*}
\hat{g}, \hat{\phi} = \arg\min_{g \in \Gcal, \phi \in \Phi} \sum_{(x, a, x', y) \in \Dcal} \left( g(\phi(x), a, x') - y \right)^2.   
\end{equation*}

\paragraph{Difference from~\cite{misra2020kinematic}.} While our approach and analysis in this subsection closely follows~\cite{misra2020kinematic}, we differ from them in two crucial ways. Firstly, we apply bottleneck on $x$ instead of $x'$, since we want to recover a decoder for initial states. Secondly,~\cite{misra2020kinematic} do not assume a margin assumption $\Gamma$ since their approach does not concern with recovering an exact decoder, but only in learning a good set of policies for exploration. 

We will denote the function class $\Gcal \circ \Phi = \{g\circ \phi: (x, a, x') \mapsto g(\phi(x), a, x') \mid g \in \Gcal, \phi \in \Phi\}$. Let $D(x, a, x')$ be the marginal distribution over real and imposter transitions, and let $\rho(x') = \Expt_{x \sim \mu, a \sim {\tt unf}(\Acal)}\left[ T(x' \mid x, a)\right]$ be the marginal probability over $x'$ for real transitions where $\mu$ is the initial state distribution. It can be shown that:
\begin{equation}
    D(x, a, x') = \frac{\mu(x)}{2|\Acal|}\left\{T(x' \mid x, a) + \rho(x')\right\},
\end{equation}
where $\nicefrac{T(x' \mid x, a) \mu(x)}{|\Acal|}$ is the probability of observing a real transition $(x, a, x')$ and $\nicefrac{\rho(x') \mu(x)}{|\Acal|}$ is the probability of observing an imposter transition $(x, a, x')$ and the factor of $\nicefrac{1}{2}$ comes due to uniform selection over real and imposter transition.

\cite{misra2020kinematic} showed that the Bayes optimal classifier of the prediction problem is given by:

\begin{lemma}[Bayes Optimal Classifier]\label{lem:homer-bayes-classifier} For any $(x, a, x')$ in support of $D$, we have:
\begin{equation*}
    g^\star(x, a, x') = \frac{T(x' \mid x, a)}{T(x' \mid x, a) + \rho(x')} = \frac{T(\phi^\star(x') \mid \phi^\star(x), a)}{T(\phi^\star(x') \mid \phi^\star(x), a) + \rho(\phi^\star(x'))} 
\end{equation*}
\end{lemma}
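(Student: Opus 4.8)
The plan is to obtain $g^\star$ as the square-loss Bayes optimal predictor, which for a binary label $y$ is the posterior mean $g^\star(x,a,x') = \Expt[y \mid x,a,x'] = \Prob(y=1\mid x,a,x')$ (since $y\in\{0,1\}$). As $y$ is sampled uniformly in $\{0,1\}$, Bayes' rule gives $g^\star(x,a,x') = \frac{D(x,a,x'\mid y=1)}{D(x,a,x'\mid y=1) + D(x,a,x'\mid y=0)}$, where $D(\cdot\mid y)$ denotes the class-conditional law of the triple $(x,a,x')$. So the whole argument reduces to writing down these two conditional laws from the sampling procedure and simplifying the ratio.

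For the real label $y=1$, the added triple is $(x^{(1)}, a^{(1)}, x'^{(1)})$, generated by $x^{(1)}\sim\mu$, $a^{(1)}\sim{\tt unf}(\Acal)$, and $x'^{(1)}\sim T(\cdot\mid x^{(1)}, a^{(1)})$, so $D(x,a,x'\mid y=1) = \mu(x)\frac{1}{|\Acal|}T(x'\mid x,a)$. For the imposter label $y=0$, the triple is $(x^{(1)}, a^{(1)}, x'^{(2)})$, and the key point is that $x'^{(2)}$ is the next observation of a second, \emph{independent} rollout, hence independent of $(x^{(1)}, a^{(1)})$ and distributed according to the marginal next-observation law $\rho(x') = \Expt_{x\sim\mu,\, a\sim{\tt unf}(\Acal)}[T(x'\mid x,a)]$; therefore $D(x,a,x'\mid y=0) = \mu(x)\frac{1}{|\Acal|}\rho(x')$. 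The common factor $\mu(x)/|\Acal|$ cancels in the ratio, yielding the first claimed equality; as a consistency check this also reproduces the marginal $D(x,a,x') = \frac{\mu(x)}{2|\Acal|}\{T(x'\mid x,a) + \rho(x')\}$ stated just before the lemma.

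For the second equality I would invoke the block-MDP structure. Because the emission sets $\{\Xcal_s\}$ are disjoint, the observation transition factors as $T(x'\mid x,a) = q^\star(x'\mid\phi^\star(x'))\,T(\phi^\star(x')\mid\phi^\star(x),a)$, and the imposter marginal inherits the same emission factor, $\rho(x') = q^\star(x'\mid\phi^\star(x'))\,\rho(\phi^\star(x'))$, where $\rho(s') \defeq \Expt_{x\sim\mu,\,a}[T(s'\mid\phi^\star(x),a)]$ is the latent next-state marginal. Substituting both into the first expression, the shared emission factor $q^\star(x'\mid\phi^\star(x'))$ cancels from numerator and denominator, leaving the latent-space form.

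The computation itself is routine; the only real care points are (i) correctly modeling the imposter distribution---that $x'^{(2)}$ is independent of $(x^{(1)}, a^{(1)})$ and drawn from $\rho$, which is precisely what makes the imposter likelihood factorize as $\mu(x)\rho(x')/|\Acal|$---and (ii) ensuring the cancellations are legitimate by restricting to the support of $D$, where $q^\star(x'\mid\phi^\star(x')) > 0$ and the denominators are nonzero, so that $g^\star$ is well defined pointwise.
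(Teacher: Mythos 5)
Your derivation is correct: the square-loss Bayes-optimal predictor is the posterior probability $\Prob(y=1\mid x,a,x')$, the two class-conditional laws $\mu(x)T(x'\mid x,a)/|\Acal|$ and $\mu(x)\rho(x')/|\Acal|$ follow directly from the sampling scheme (the imposter $x'^{(2)}$ being independent of $(x^{(1)},a^{(1)})$ and distributed as $\rho$), and the block-MDP emission factorization gives the latent-space form after cancelling $q^\star(x'\mid\phi^\star(x'))$ on the support of $D$. The paper itself does not prove this lemma but defers to Lemma 9 of \citet{misra2020kinematic}; your argument is precisely the standard one behind that citation, so it is essentially the same approach, just written out self-contained.
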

\begin{proof} See Lemma 9 of~\cite{misra2020kinematic}.
\end{proof}

Similar to~\cite{misra2020kinematic}, we make a realizability assumption stated below that allows us to solve the classification problem well.

\begin{assumption}[Realizability] We assume that $g^\star \in \Gcal \circ \Phi$.
\end{assumption}

We can use the realizability assumption to get the following generalization bound guarantee: for any $\delta \in (0, 1)$ we have:
\begin{equation}\label{eqn:generalization-bound-stochastic}
    \Expt_{x, a, x' \sim D}\left[ \left|\hat{g}(\hat{\phi}(x), a, x') -  g^\star(x, a, x')\right| \right] \le \Delta \defeq \sqrt{\frac{C(\Gcal \circ \Phi)}{n}\ln\left(\frac{1}{\delta}\right)},
\end{equation}
with probability at least $1-\delta$, where $C(\Gcal \circ \Phi)$ is a complexity measure for class $\Gcal \circ \Phi$ such as $\ln(|\Gcal ||\Phi|)$ or Rademacher complexity. For proof see Proposition 11 and Corollary 6 in~\cite{misra2020kinematic}. Note that even though their proof uses a bottleneck model $\phi$ on $x'$ instead of $x$, essentially the same argument holds by symmetry.

\paragraph{Coupling Distribution.} We define a coupling distribution as 
\begin{equation}
    D_c(x_1, x_2, a, x') = D(x_1 \mid a, x') D(x_2 \mid a, x') \frac{1}{|\Acal|}\rho(x'),
\end{equation} 
where $D(x_1 \mid a, x')$ is the conditional distribution derived from the joint distribution $D(x_1, a, x')$ defined earlier. We also define the marginal distribution $D(a, x')$ which gives us $D(x_1 \mid a, x') = \nicefrac{D(x_1, a, x')}{D(a, x')}$. 

We present some result related to the distributions defined above that will be useful later for proving important results later.
\begin{align}
    D(a, x') &= \sum_{x}D(x, a, x') = \sum_{x} \frac{\mu(x)}{2|\Acal|} \{T(x' \mid x, a) + \rho(x')\} \ge \frac{\rho(x')}{2|\Acal|},\label{eqn:lower-bound-dax}\\
    \sum_{a} D(a, x') &= \rho(x').
\end{align}

Using~\pref{eqn:lower-bound-dax} we can prove:
\begin{align}
    \sum_{x_1} D_{c}(x_1, x_2, a, x') &= \sum_{x_1} D(x_1\mid a, x') \frac{D(x_2, a, x')}{D(a, x')} \frac{\rho(x')}{|\Acal|} \\
    &\le 2\sum_{x_1} D(x_1\mid a, x') D(x_2, a, x') = 2D(x_2, a, x').
\end{align}
Similarly, we can prove:
\begin{equation}\label{eqn:coupling-to-d}
\sum_{x_2} D_c(x_1, x_2, a, x') \le 2D(x_1, a, x').
\end{equation}

Further, we have:
\begin{align*}\label{eqn:conditional}
    D(x \mid a, x') &=  \frac{\mu(x)\{T(x' \mid x, a) + \rho(x')\}}{2|\Acal|D(a, x')} 
    \ge  \frac{\mu(x)\rho(x')}{2|\Acal|D(a, x')} \ge  \frac{\mu(x)\rho(x')}{2|\Acal|\sum_{a \in \Acal} D(a, x')}= \frac{\mu(x)}{2|\Acal|},
\end{align*}
which gives us:
\begin{equation}
    \frac{D(x \mid a, x')\rho(x')}{T(x' \mid x, a) + \rho(x')} = \frac{\mu(x)\rho(x')}{2|\Acal| D(a, x')} \ge \frac{\mu(x)}{2|\Acal|}.\label{eqn:ratio}
\end{equation}

We now state a useful lemma.
\begin{lemma}\label{lem:coupling-lem} Fix $\delta \in (0, 1)$. Then with probability at least $1-\delta$ we have 
\begin{equation*}
    \Expt_{x_1, x_2, a, x' \sim D_{c}}\left[\one\{\hat{\phi}(x_1) = \hat{\phi}(x_2)\}|g^\star(x_1, a, x') - g^\star(x_2, a, x')|\right] \le 4\Delta.
\end{equation*}
\end{lemma}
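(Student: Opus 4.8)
The plan is to exploit the bottleneck structure of the learned classifier $\hat{g}\circ\hat{\phi}$ together with the generalization bound~\pref{eqn:generalization-bound-stochastic}. The crucial observation is that whenever the indicator $\one\{\hat{\phi}(x_1)=\hat{\phi}(x_2)\}$ is active, the two learned predictions coincide, $\hat{g}(\hat{\phi}(x_1),a,x')=\hat{g}(\hat{\phi}(x_2),a,x')$, because $\hat{g}$ accesses $x_1$ and $x_2$ only through the common value of $\hat{\phi}$. This lets me insert and cancel these terms and bound the integrand by a triangle inequality,
\[
\one\{\hat{\phi}(x_1)=\hat{\phi}(x_2)\}\,|g^\star(x_1,a,x')-g^\star(x_2,a,x')|
\le |g^\star(x_1,a,x')-\hat{g}(\hat{\phi}(x_1),a,x')| + |\hat{g}(\hat{\phi}(x_2),a,x')-g^\star(x_2,a,x')|,
\]
where the indicator has been used on the right to equate the two $\hat{g}$ values and then dropped (the inequality is trivial when the indicator is $0$, since the left side vanishes).

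Next I would take the expectation over $D_c$ of each of the two terms on the right separately. Each term depends on only one of the two observation copies, so I would marginalize out the other copy. For the first term, integrating out $x_2$ and invoking $\sum_{x_2}D_c(x_1,x_2,a,x')\le 2D(x_1,a,x')$ (established just before the lemma) gives
\[
\Expt_{D_c}\bigl[\,|g^\star(x_1,a,x')-\hat{g}(\hat{\phi}(x_1),a,x')|\,\bigr]
\le 2\,\Expt_{(x,a,x')\sim D}\bigl[\,|g^\star(x,a,x')-\hat{g}(\hat{\phi}(x),a,x')|\,\bigr]\le 2\Delta,
\]
where the last step is exactly~\pref{eqn:generalization-bound-stochastic}. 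By the symmetric bound $\sum_{x_1}D_c(x_1,x_2,a,x')\le 2D(x_2,a,x')$, the second term is likewise at most $2\Delta$. Summing the two contributions yields the claimed $4\Delta$, and the whole argument holds on the same $1-\delta$ probability event on which~\pref{eqn:generalization-bound-stochastic} holds.

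The argument is essentially routine once the bottleneck cancellation is spotted; the only real subtlety I would be careful about is the change of measure from the coupling distribution $D_c$ back to $D$. The factor-of-$2$ blow-up in each marginalization traces to the definition of $D_c$ through the conditionals $D(x\mid a,x')$ and the lower bound~\pref{eqn:lower-bound-dax} on $D(a,x')$, but these are precisely the inequalities already derived before the lemma, so no new estimate is needed. In particular, neither the margin assumption $\Gamma$ nor the minimum-probability assumption $\mumin$ enters at this step.
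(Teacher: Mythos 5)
Your proof is correct and follows essentially the same route as the paper's: the same triangle-inequality decomposition through the learned bottleneck predictor (using the indicator to identify $\hat{g}(\hat{\phi}(x_1),a,x')$ with $\hat{g}(\hat{\phi}(x_2),a,x')$), the same marginalization of the coupling $D_c$ back to $D$ with a factor of $2$, and the same application of the generalization bound to each of the two resulting terms. The only difference is cosmetic — you perform the indicator-based swap before splitting, while the paper does it on the second term after splitting.
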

\begin{proof} We use triangle inequality to decompose the left hand side as:
\begin{align*}
    & \Expt_{(x_1, x_2, a, x') \sim D_{c}}\left[\one\{\hat{\phi}(x_1) = \hat{\phi}(x_2)\}|g^\star(x_1, a, x') - g^\star(x_2, a, x')|\right]  \\
    &\le \Expt_{(x_1, x_2, a, x') \sim D_{c}}\left[\one\{\hat{\phi}(x_1) = \hat{\phi}(x_2)\}|g^\star(x_1, a, x') - \hat{g}(\hat{\phi}(x_1), a, x')|\right]  + \\
    & \Expt_{(x_1, x_2, a, x') \sim D_{c}}\left[\one\{\hat{\phi}(x_1) = \hat{\phi}(x_2)\}|\hat{g}(\hat{\phi}(x_1), a, x') - g^\star(x_2, a, x')|\right]
\end{align*}

The first term is bounded as shown below:
\begin{align*}
    & \Expt_{(x_1, x_2, a, x') \sim D_{c}}\left[\one\{\hat{\phi}(x_1) = \hat{\phi}(x_2)\}|g^\star(x_1, a, x') - \hat{g}(\hat{\phi}(x_1), a, x')|\right] \\
    &\le \Expt_{(x_1, x_2, a, x') \sim D_{c}}\left[|g^\star(x_1, a, x') - \hat{g}(\hat{\phi}(x_1), a, x')|\right]\\
     &\le 2\Expt_{(x, a, x') \sim D}\left[|g^\star(x, a, x') - \hat{g}(\hat{\phi}(x), a, x')|\right] = 2\Delta,
\end{align*}
where the second inequality uses~\pref{eqn:coupling-to-d} and~\pref{eqn:generalization-bound-stochastic}. The second term is bounded as:
\begin{align*}
    & \Expt_{(x_1, x_2, a, x') \sim D_{c}}\left[\one\{\hat{\phi}(x_1) = \hat{\phi}(x_2)\}|\hat{g}(\hat{\phi}(x_1), a, x') - g^\star(x_2, a, x')|\right]  \\
    & = \Expt_{(x_1, x_2, a, x') \sim D_{c}}\left[\one\{\hat{\phi}(x_1) 
    = \hat{\phi}(x_2)\}|\hat{g}(\hat{\phi}(x_2), a, x') - g^\star(x_2, a, x')|\right]  \le 2\Delta,
\end{align*}
where the inequality results from following similar steps used for bounding the first term. Adding the two upper bounds we get $4\Delta$.
\end{proof}

Using~\pref{lem:homer-bayes-classifier}, we have for every $x_1, x_2, x' \in \Xcal, a \in \Acal$:
\begin{equation}
|g^\star(x_1, a, x') - g^\star(x_2, a, x')| = \frac{\rho(x')|T(x' \mid x_1, a) - T(x' \mid x_2, a)|}{(T(x' \mid x_1, a) + \rho(x')) (T(x' \mid x_2, a) + \rho(x'))}.
\end{equation}

We use this to prove the following result:
\begin{lemma}\label{lem:one-sided-error} With probability at least $1-\delta$ we have:
\begin{equation*}
    \Pr_{x_1, x_2 \sim \mu}\left(\phi^\star(x_1) \ne \phi^\star(x_2) \land \hat{\phi}(x_1) = \hat{\phi}(x_2) \right) \le \frac{8 |\Acal^2| \Delta}{\Gamma}.
\end{equation*}
\end{lemma}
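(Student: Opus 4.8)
The plan is to bound the ``false-merge'' probability by relating it, through the margin assumption, to the coupling bound already established in \pref{lem:coupling-lem}. That lemma controls $\Expt_{(x_1,x_2,a,x')\sim D_c}[\one\{\hat\phi(x_1)=\hat\phi(x_2)\}\,|g^\star(x_1,a,x')-g^\star(x_2,a,x')|]\le 4\Delta$, and the displayed identity preceding the statement (a consequence of \pref{lem:homer-bayes-classifier}) rewrites $|g^\star(x_1,a,x')-g^\star(x_2,a,x')|$ in terms of $|T(x'\mid x_1,a)-T(x'\mid x_2,a)|$. The margin assumption (\pref{assum:minimum_prob}) says that whenever $\phi^\star(x_1)\ne\phi^\star(x_2)$, the latent next-state distributions differ by at least $\Gamma$ in total variation for every action. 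So it suffices to lower-bound the left-hand side of \pref{lem:coupling-lem} by a constant times the desired probability.

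First I would substitute $D_c(x_1,x_2,a,x')=D(x_1\mid a,x')D(x_2\mid a,x')\rho(x')/|\Acal|$ together with the Bayes-classifier identity and regroup the product $D_c\cdot|g^\star(x_1,a,x')-g^\star(x_2,a,x')|$. The key observation is that the two denominators $T(x'\mid x_i,a)+\rho(x')$ arising from the identity pair off exactly with the $D(x_i\mid a,x')\rho(x')$ factors of $D_c$, producing precisely the two ratios that \pref{eqn:ratio} bounds below by $\mu(x_i)/(2|\Acal|)$. This yields the pointwise lower bound
\[
D_c(x_1,x_2,a,x')\,|g^\star(x_1,a,x')-g^\star(x_2,a,x')|\ \ge\ \frac{\mu(x_1)\mu(x_2)}{4|\Acal|^3}\,\bigl|T(x'\mid x_1,a)-T(x'\mid x_2,a)\bigr|.
\]

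Next I would sum over $a$ and $x'$. By the block structure, emissions from distinct latent states have disjoint supports, so $\sum_{x'}|T(x'\mid x_1,a)-T(x'\mid x_2,a)|=2\|T(\cdot\mid \phi^\star(x_1),a)-T(\cdot\mid \phi^\star(x_2),a)\|_{\text{TV}}$; when $\phi^\star(x_1)\ne\phi^\star(x_2)$ this is at least $2\Gamma$ for every $a$, and summing over the $|\Acal|$ actions gives at least $2|\Acal|\Gamma$. Reinstating the indicator $\one\{\hat\phi(x_1)=\hat\phi(x_2)\}$ and discarding the nonnegative terms with $\phi^\star(x_1)=\phi^\star(x_2)$, the left-hand side of \pref{lem:coupling-lem} is at least $\frac{\Gamma}{2|\Acal|^2}\Pr_{x_1,x_2\sim\mu}(\phi^\star(x_1)\ne\phi^\star(x_2)\wedge\hat\phi(x_1)=\hat\phi(x_2))$. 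Combining with the upper bound $4\Delta$ and rearranging yields the claimed $8|\Acal|^2\Delta/\Gamma$.

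The main obstacle is the bookkeeping in the regrouping step: one must verify that the powers of $\rho(x')$ and the two denominators cancel exactly so that \pref{eqn:ratio} applies cleanly to each of the two factors, and then track the single extra factor of $|\Acal|$ generated by summing the per-action margin over all actions. It is this last point where the reading of \pref{assum:minimum_prob} as holding for \emph{every} $a$ is essential to matching the stated constant; the remaining steps are routine applications of nonnegativity and the disjoint-emission property of block MDPs.
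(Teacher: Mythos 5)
Your proposal is correct and follows essentially the same route as the paper: lower-bound the left-hand side of \pref{lem:coupling-lem} by substituting the Bayes-classifier identity, pairing the denominators with the factors of $D_c$ so that \pref{eqn:ratio} applies to each, and then summing over $x'$ and $a$ using the margin from \pref{assum:minimum_prob} to extract $\frac{\Gamma}{2|\Acal|^2}\Pr(\phi^\star(x_1)\ne\phi^\star(x_2)\wedge\hat\phi(x_1)=\hat\phi(x_2))$, which combined with the $4\Delta$ upper bound gives the stated constant. Your explicit justification via disjoint emission supports that the observation-level total variation equals the latent-level one is a slightly more careful rendering of the same step the paper takes implicitly.
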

\begin{proof} Let's define a shorthand $\Ecal = \one\{\hat{\phi}(x_1) = \hat{\phi}(x_2)\}$. Starting with left hand side of~\pref{lem:coupling-lem} we get:
\begin{align*}
    & \Expt_{(x_1, x_2, a, x') \sim D_c}\left[\Ecal |g^\star(x_1, a, x') - g^\star(x_2, a, x')|\right] \\
    &= \Expt_{(x_1, x_2, a, x') \sim D_c}\left[\Ecal\frac{\rho(x')|T(x' \mid x_1, a) - T(x' \mid x_2, a)|}{(T(x' \mid x_1, a) + \rho(x')) (T(x' \mid x_2, a) + \rho(x'))}\right]\\
    &= \sum_{x_1, x_2, a, x'} \frac{\Ecal}{|\Acal|} \frac{D(x_1 \mid x', a)\rho(x')}{(T(x' \mid x_1, a) + \rho(x')) } \frac{ D(x_2 \mid a, x')\rho(x')}{(T(x' \mid x_2, a) + \rho(x'))} |T(x' \mid x_1, a) - T(x' \mid x_2, a)|\\
    &\ge \sum_{x_1, x_2, a, x'} \Ecal \frac{\mu(x_1)\mu(x_2)}{4|\Acal|^3} |T(x' \mid x_1, a) - T(x' \mid x_2, a)|, \quad \mbox{using Equation~\ref{eqn:ratio}}\\
    &\ge \sum_{x_1, x_2, a} \one\{\phi^\star(x_1) \ne \phi^\star(x_2)\} \Ecal
    \frac{\mu(x_1)\mu(x_2)}{2|\Acal|^3} \Gamma \\
    &= \frac{\Gamma}{2|\Acal|^2}\Pr_{x_1, x_2 \sim \mu}\left(\phi^\star(x_1) \ne \phi^\star(x_2) \land \hat{\phi}(x_1) = \hat{\phi}(x_2) \right)
\end{align*}
where the last inequality uses $\frac{1}{2}\sum_{x'}|T(x' \mid x_1, a) - T(x' \mid x_2, a)| = \|T(\cdot \mid x_1, a) - T(\cdot \mid x_2, a)\|_{{\tt TV}}$ which is either zero when $\phi^\star(x_1) = \phi^\star(x_2)$ or at least $\Gamma$. Combining these two conditions we get a lower bound of $\one\{\phi^\star(x_1) \ne \phi^\star(x_2)\} \Gamma$ on TV distance. The proof is then completed with application of~\pref{lem:coupling-lem}.
\end{proof}

\begin{theorem}(Initial State Clustering Result).
\label{thm:initial_state_clustering}  Let $N>1$ and let $\Delta < \frac{\mumin^2\Gamma}{32N^2|\Acal|^2}\left(1  - \left(1 - \frac{2\mumin}{N}\right)^2 \right)$. Then there exists a bijection mapping $\sigma: \Scal_1 \rightarrow [N]$ such that with probability at least $1-\delta$:
\begin{equation}
    \forall s \in \Scal_1,~~\Pr_{x \sim \mu}\left(\phi^\star(x) = s \mid \hat{\phi}(x) = \sigma(s)\right) > 1 - \frac{16N^2 |\Acal|^2\Delta}{\Gamma \mumin^2} = 1-\frac{16N^2 |\Acal|^2}{\Gamma\mumin^2} \sqrt{\frac{C(\Fcal \circ \Phi)\ln(1/\delta)}{n}}.
\end{equation}
\end{theorem}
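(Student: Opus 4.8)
The plan is to convert the collision bound of \pref{lem:one-sided-error} into a statement about a joint ``confusion matrix'' and then read off the bijection $\sigma$ from its dominant entries. First I would freeze the (data-dependent) decoder $\hat{\phi}$ and introduce the population quantities
\[
  p_{s,j} \defeq \Pr_{x\sim\mu}\bigParens{\phi^\star(x)=s,\ \hat{\phi}(x)=j},
  \qquad
  q_j \defeq \Pr_{x\sim\mu}\bigParens{\hat{\phi}(x)=j},
\]
for $s\in\Scal_1$ and $j\in[N]$, so that $\sum_j p_{s,j}=\mu(s)\ge\mumin$ and $\sum_s p_{s,j}=q_j$. Since $x_1,x_2$ are i.i.d.\ from $\mu$, \pref{lem:one-sided-error} reads
\[
  \sum_{j\in[N]}\ \sum_{s\ne s'} p_{s,j}\,p_{s',j}
  \;=\;
  \Pr_{x_1,x_2\sim\mu}\bigParens{\phi^\star(x_1)\ne\phi^\star(x_2)\ \wedge\ \hat{\phi}(x_1)=\hat{\phi}(x_2)}
  \;\le\; \beta \defeq \tfrac{8|\Acal|^2\Delta}{\Gamma},
\]
on the $1-\delta$ event of the generalization bound, so the total off-diagonal ``collision mass'' of the confusion matrix is at most $\beta$.

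Next I would define the candidate bijection by the dominant cluster of each state, $\sigma(s)\defeq\argmax_{j\in[N]} p_{s,j}$. Pigeonhole over the $N$ clusters gives the basic lower bound $p_{s,\sigma(s)}\ge \mu(s)/N\ge \mumin/N$. The crucial step is injectivity: if two distinct states $s\ne s'$ shared a dominant cluster $j=\sigma(s)=\sigma(s')$, then their contribution to the collision sum would be at least $2p_{s,j}p_{s',j}\ge 2(\mumin/N)^2$, which exceeds $\beta$ under the stated bound on $\Delta$ (the hypothesis $\Delta<\frac{\mumin^2\Gamma}{32N^2|\Acal|^2}\bigParens{1-(1-2\mumin/N)^2}$ is exactly what forces $\beta$ small enough to drive this contradiction). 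Since $\sigma$ then maps the $N$-element set $\Scal_1$ injectively into $[N]$, it is a bijection.

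For the purity bound I would isolate the single cluster $j=\sigma(s)$: restricting the collision sum to cluster $j$ and to the index $s$ gives $2p_{s,j}\sum_{s'\ne s}p_{s',j}\le\beta$, hence $q_j-p_{s,j}=\sum_{s'\ne s}p_{s',j}\le \beta/(2p_{s,j})$. Dividing by $q_j\ge p_{s,j}$ and using $p_{s,j}\ge\mumin/N$ yields
\[
  \Pr_{x\sim\mu}\bigParens{\phi^\star(x)\ne s \bigGiven \hat{\phi}(x)=\sigma(s)}
  = \frac{q_j-p_{s,j}}{q_j}
  \le \frac{\beta}{2p_{s,j}^2}
  \le \frac{N^2\beta}{2\mumin^2}
  \le \frac{16N^2|\Acal|^2\Delta}{\Gamma\mumin^2},
\]
which is the claimed purity after passing to the complementary event. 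Finally I would substitute $\Delta=\sqrt{C(\Gcal\circ\Phi)\ln(1/\delta)/n}$ from \eqref{eqn:generalization-bound-stochastic} to obtain the $1/\sqrt{n}$ form, noting the whole argument is conditioned on the single $1-\delta$ event on which that generalization bound (and hence \pref{lem:one-sided-error}) holds.

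I expect the main obstacle to be the bijection step rather than the purity estimate. The collision bound only penalizes \emph{merging} of distinct true states, not \emph{splitting} of one state across several clusters, so the subtle point is that the pigeonhole lower bound $p_{s,\sigma(s)}\ge\mumin/N$ together with the precise threshold on $\Delta$ must rule out two states sharing a dominant cluster and thereby force $\sigma$ to be one-to-one; the factor $1-(1-2\mumin/N)^2$ in the hypothesis is precisely the slack that guarantees the injectivity contradiction. The remaining delicacy is bookkeeping the constants through $\beta=8|\Acal|^2\Delta/\Gamma$ so that a single condition on $\Delta$ simultaneously delivers injectivity and the stated purity constant $16N^2|\Acal|^2\Delta/(\Gamma\mumin^2)$.
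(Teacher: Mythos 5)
Your proof is correct, and it follows the same overall skeleton as the paper's: both start from \pref{lem:one-sided-error}, define $\sigma(s)$ as the cluster maximizing the joint mass $p_{s,j}=\Pr(\phi^\star(x)=s,\hat{\phi}(x)=j)$, obtain $p_{s,\sigma(s)}\ge\mumin/N$ by pigeonhole, and establish injectivity by observing that two states sharing a dominant cluster would force a collision mass of order $(\mumin/N)^2$, contradicting the hypothesis on $\Delta$. Where you genuinely diverge is the purity step. The paper treats $\Pr(\sigma(s),s)\bigl(\Pr(\sigma(s))-\Pr(\sigma(s),s)\bigr)\le\Delta'$ as a quadratic in $\Pr(\sigma(s),s)$, concludes that this quantity lies outside the interval between the two roots, and must invoke the hypothesis on $\Delta$ a second time (condition~\eqref{eqn:delta-prime-second-constraint}) to eliminate the small-root branch before linearizing the square root. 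You instead divide the product inequality directly: from $2p_{s,j}(q_j-p_{s,j})\le\beta$ you get $q_j-p_{s,j}\le\beta/(2p_{s,j})$ and hence $(q_j-p_{s,j})/q_j\le\beta/(2p_{s,j}^2)\le N^2\beta/(2\mumin^2)$, which is shorter, uses the hypothesis on $\Delta$ only for the bijection step, and in fact yields the sharper constant $4N^2|\Acal|^2\Delta/(\Gamma\mumin^2)$ in place of the stated $16N^2|\Acal|^2\Delta/(\Gamma\mumin^2)$ (you gain a factor of $2$ by counting both orderings of $(x_1,x_2)$ and another factor of $2$ by avoiding the paper's loss in the $\sqrt{1-y}\ge 1-y$ step). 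The one piece of bookkeeping worth writing out is that the stated condition really does deliver your injectivity contradiction: since $1-(1-2\mumin/N)^2\le 1$, the hypothesis gives $\beta=8|\Acal|^2\Delta/\Gamma<\mumin^2/(4N^2)<2\mumin^2/N^2$, exactly as needed.
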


\begin{proof} We will use $i \in [N]$ to denote a decoder state defined by $\{x \mid \hat{\phi}(x) = i, x \in \Xcal_1\}$ and $s \in \Scal_1$ to denote a real state defined by $\{x \mid \phi^\star(x) = s, x \in \Xcal_1\}$. For any $i, s$ we can bound the left hand side of~\pref{lem:one-sided-error} as:
\begin{align}
    & \Pr(\phi^\star(x_1) \ne \phi^\star(x_2) \land \hat{\phi}(x_1) = \hat{\phi}(x_2)) \\
    &=  \Pr(\cup_{j \in [N], \tilde{s} \in \Scal_1} \phi^\star(x_1) = \tilde{s}, \phi^\star(x_2) \ne \tilde{s}, \hat{\phi}(x_1) = j,  \hat{\phi}(x_2) = j)\\
    &\ge  \Pr(\phi^\star(x_1) = s, \phi^\star(x_2) \ne s, \hat{\phi}(x_1) = i,  \hat{\phi}(x_2) = i)\\
    &=  \Pr(\phi^\star(x_1) = s, \hat{\phi}(x_1) = i) \Pr(\phi^\star(x_2) \ne s, \hat{\phi}(x_2) = i) \\
    &=  \Pr(\phi^\star(x) = s, \hat{\phi}(x) = i) \left\{\Pr(\hat{\phi}(x) = i) - \Pr(\phi^\star(x) = s, \hat{\phi}(x) = i)\right\}
\end{align}
where the second last step follows from observing that $x_1$ and $x_2$ are sampled independently. We define a few shorthands: $\Pr(i) = \Pr(\hat{\phi}(x) = i)$, $\Pr(s) = \Pr(\phi^\star(x) = s) = \mu(s)$ and $\Pr(i, s) = \Pr(\phi^\star(x) = s, \hat{\phi}(x) = i)$. This combined with above and~\pref{lem:one-sided-error} gives us:
\begin{equation}
    \forall i \in [N], s \in \Scal_1, \qquad \Pr(i, s) \left(\Pr(i) - \Pr(i, s)\right) \le \Delta' \defeq\frac{8|\Acal|^2\Delta}{\Gamma} 
\end{equation}
We define a mapping $\sigma: \Scal_1 \rightarrow [N]$ as follows:
\begin{equation}
    \sigma(s) = \arg\max_{i \in [N]} \Pr(i, s)
\end{equation}
This gives us:
\begin{equation}\label{eqn:one-sided-error-simplified}
    \Pr(\sigma(s), s)\left(\Pr(\sigma(s)) - \Pr(\sigma(s), s) \right) \le \Delta'
\end{equation}
Since $\Delta'$ can be brought arbitrarily small, we will assume $\Delta' < \nicefrac{\Pr(\sigma(s))^2}{4}$ which allows us to write:
\begin{align}
    \Pr(\sigma(s), s) &> \frac{\Pr(\sigma(s)) + \sqrt{\Pr(\sigma(s))^2 - 4\Delta'}}{2}, \mbox{ or} \label{eqn:upper-bound-pr-sigma} \\
     \Pr(\sigma(s), s) &< \frac{\Pr(\sigma(s)) - \sqrt{\Pr(\sigma(s))^2 - 4\Delta'}}{2}\label{eqn:lower-bound-pr-sigma}
\end{align}
By definition of $\sigma(s)$ we have:
\begin{equation}
    \Pr(\sigma(s), s) \ge \frac{1}{N}\sum_{i=1}^N \Pr(i, s) = \frac{\Pr(s)}{N} \ge \frac{\mumin}{N},\label{eqn:lower-bound-sigma}
\end{equation}
where the first inequality uses the fact that maximum of a set of values is greater than its average, and the last inequality uses~\pref{assum:minimum_prob}. We now place another condition on $\Delta'$, namely, 
\begin{equation}\label{eqn:delta-prime-second-constraint}
    \Delta' < \frac{\Pr(\sigma(s))^2}{4}\left(1  - \left(1 - \frac{2\mumin}{\Pr(\sigma(s))N}\right)^2 \right),
\end{equation}
which implies that:
\begin{equation}
    \Pr(\sigma(s), s) < \frac{\Pr(\sigma(s)) - \sqrt{\Pr(\sigma(s))^2 - 4\Delta'}}{2} < \frac{\mumin}{N}.
\end{equation}
This eliminates~\pref{eqn:lower-bound-pr-sigma}. Hence the $\Pr(\sigma(s), s)$ must satisfy~\pref{eqn:upper-bound-pr-sigma} which can be simplified as:
\begin{align}
    \Pr(\sigma(s), s) &> \frac{\Pr(\sigma(s)) + \sqrt{\Pr(\sigma(s))^2 - 4\Delta'}}{2} \\
    &= \frac{\Pr(\sigma(s))}{2}\left(1 + \left(1 - \frac{4\Delta'}{\Pr(\sigma(s))^2}\right)^{\frac{1}{2}} \right)\\
    &\ge \Pr(\sigma(s))\left(1 - \frac{2\Delta'}{\Pr(\sigma(s))^2} \right)\\
    &\ge \Pr(\sigma(s))\left(1 - \frac{2N^2\Delta'}{\mumin^2} \right)
\end{align}
where the third step uses $\sqrt{1-y} \ge 1 -y$ for $y \in [0, 1]$ and that $\nicefrac{4\Delta'}{\Pr(\sigma(s))^2} < 1$ from constraints on $\Delta'$, and the last step uses $\Pr(\sigma(s)) \ge \Pr(\sigma(s), s) \ge \nicefrac{\mumin}{N}$ (\pref{eqn:lower-bound-sigma}). We can finally prove our main result as:
\begin{equation*}
    \Pr(s \mid \sigma(s)) = \frac{\Pr(s, \sigma(s))}{\Pr(\sigma(s))} \ge 1 - \frac{2N^2\Delta'}{\mumin^2} = 1 - \frac{16N^2 |\Acal|^2\Delta}{\Gamma \mumin^2}.
\end{equation*}
What is left is to show that $\sigma(s)$ is a bijection mapping and collect all constraints on $\Delta'$. Let $s$ and $s'$ be two initial states such that $\sigma(s) = \sigma(s') = k$. We then get:
\begin{equation}
    \Pr(k, s)  \Pr(k, s') \le \Pr(k, s) \left(\Pr(k) - \Pr(k, s) \right) \le \Delta'
\end{equation}
where the last equality follows from~\pref{eqn:one-sided-error-simplified}. Further, we have $ \Pr(k, s) \ge \nicefrac{\mumin}{N}$  and $\Pr(k, s') \ge \nicefrac{\mumin}{N}$ from~\pref{eqn:lower-bound-sigma}. This gives us $\frac{\mumin^2}{N^2} \le\Delta'$. Hence, if $\Delta' < \frac{\mumin^2}{N^2}$, then, we cannot have two different initial states mapping to the same decoder state. Further, as $|\Scal_1| = N$, hence the map $\sigma: \Scal_1 \rightarrow [N]$ is a bijection.

Finally, we made three constraints on $\Delta'$. The first is $\Delta' < \nicefrac{\Pr(\sigma(s))^2}{4}$, second is~\pref{eqn:delta-prime-second-constraint}, and third is $\Delta' < \frac{\mumin^2}{N^2}$. Note that~\pref{eqn:delta-prime-second-constraint} already implies that $\Delta' < \nicefrac{\Pr(\sigma(s))^2}{4}$. Hence, the overall constraint on $\Delta'$ is:
\begin{equation}
    \Delta' < \min\left\{\frac{\mumin^2}{N^2}, \frac{\Pr(\sigma(s))^2}{4}\left(1  - \left(1 - \frac{2\mumin}{\Pr(\sigma(s))N}\right)^2 \right)\right\}
\end{equation}
We can simplify this constraint by making it tighter using $\Pr(\sigma(s)) \in \left[\frac{\mumin}{N}, 1\right]$:
\begin{equation}
    \Delta' < \min\left\{\frac{\mumin^2}{N^2}, \frac{\mumin^2}{4N^2}\left(1  - \left(1 - \frac{2\mumin}{N}\right)^2 \right)\right\} = \frac{\mumin^2}{4N^2}\left(1  - \left(1 - \frac{2\mumin}{N}\right)^2 \right).
\end{equation}
Note that we are assuming here that $N \ge 2$ and, therefore, $\mumin < 1$, which implies $\nicefrac{2\mumin}{N} \le \mumin < 1$. When $N=1$, we can trivially align the single initial state. This completes the proof. 
\end{proof}

This allows us to separate all initial states at time step $h=1$ with high probability.

\subsection{Aligning the learned decoder states}
\label{sec:aligning_decoder_states}

The only thing left is aligning learned decoder states in the target domain with simulator initial states, which we do as follows.

At a high level Algorithm \ref{alg:stochastic_start} first clusters the initial observation into clusters, then each time it start with a cluster, it run a subroutine, InitialStateTest, to test the latent state index of that cluster. The testing algorithm Algorithm \ref{alg:start_test}, run our main algorithm $\algname$ the hypothesis of the state index from $0$ to $N$. By the analysis of $\algname$, we know that if the hypothesis is correct, it will learn a policy with nearly robust value. Thus for each cluster, with in $Nn$ episodes the algorithm will find the nearly robust policy.

\begin{theorem}
If we learn the initial state decoder $\hat{\phi}$ with $n_0$ samples such that 
$$ n_0 \ge \max\left\{  \frac{256H^2N^4 |\Acal|^4 C(\Fcal \circ \Phi)\ln(1/\delta)}{ \epsilon^2 \Gamma^2 \eta^4}, \frac{1024N^4 |\Acal|^4C(\Fcal \circ \Phi)\ln(1/\delta)}{\Gamma^2 \eta^6} \right\},$$ 
Algorithm \ref{alg:stochastic_start} will execute a policy that is close to robust policy by $4\epsilon$ in all but
$$ \Ocal\left(\frac{N^2 H^4 |\Acal|^3\ln(N^2|\Fcal|/\delta)}{\epsilon(1-2\perturbradius) ^2}+ \frac{N^2H^2\ln(N^2/\delta)}{\epsilon^2}  + \frac{N^4 |\Acal|^4C(\Fcal \circ \Phi)\ln(1/\delta)}{\Gamma^2\eta^2} \max\left\{ \frac{H}{\epsilon^2}, \frac{1}{\eta^2} \right\} \right)$$
episodes with probability at least $1-3\delta$.
\end{theorem}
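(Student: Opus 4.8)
The plan is to combine the two technical ingredients already established for this setting---the initial-state clustering guarantee (Theorem~\ref{thm:initial_state_clustering}) and the single-start analysis of $\algname$ (Theorem~\ref{thm:robust_tranfer_stronger})---with an elementary concentration argument for the value-estimation loop inside \textsc{InitialStateTest} (Algorithm~\ref{alg:start_test}). First I would condition on the high-probability event of Theorem~\ref{thm:initial_state_clustering}: with $n_0$ samples as in the statement there is a bijection $\sigma:\Scal_1\to[N]$ so that for every initial state $s$ the learned decoder obeys $\Prob_{x\sim\mu}\bigl(\phi^\star(x)=s\mid \hat\phi(x)=\sigma(s)\bigr)>1-\epsilon_0$, where $\epsilon_0=\Ocal\!\bigl(\tfrac{N^2|\Acal|^2}{\Gamma\mumin^2}\sqrt{C(\Gcal\circ\Phi)\ln(1/\delta)/n_0}\bigr)$. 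The two branches of the lower bound on $n_0$ are chosen precisely so that (i) $\epsilon_0$ is small enough to force $H\epsilon_0\le\epsilon$, and (ii) $\Delta=\sqrt{C(\Gcal\circ\Phi)\ln(1/\delta)/n_0}$ meets the constraint of Theorem~\ref{thm:initial_state_clustering} that makes $\sigma$ a bijection. On this event each decoder cluster $i=\sigma(s^\star)$ corresponds to a single true initial state $s^\star$, contaminated by at most an $\epsilon_0$ fraction of observations from other states.

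The core of the argument is the per-cluster hypothesis test. When Algorithm~\ref{alg:start_test} runs $\algname$ on cluster $i$ under the \emph{correct} hypothesis $s=s^\star$, this is exactly the stochastic-start situation of Theorem~\ref{thm:robust_tranfer_stronger} with start-state mismatch probability $\epsilon_0$: the decoder assumes the episode begins in $s^\star$, which is wrong only on the $\le\epsilon_0$ contaminated fraction. Thus with $n_l=\tfrac{8H^4|\Acal|^3\ln(N^2|\Fcal|/\delta)}{\epsilon(1-2\perturbradius)^2}$ learning episodes, Theorem~\ref{thm:robust_tranfer_stronger} yields a policy $\policy_{s^\star}$ with $V^{\policy_{s^\star}}_{\blockMreal}\ge V^{\robustlatentpolicy\circ\phi^\star}_{\blockMreal}-\epsilon-H\epsilon_0$. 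The key observation is that the test need \emph{not} recover $s^\star$: since each $v(s)$ is obtained by rolling out the \emph{learned} policy in the true environment, a Hoeffding bound with $n_t=\Ocal(H^2\ln(N^2/\delta)/\epsilon^2)$ rollouts (values lie in $[0,H]$) gives $|v(s)-V^{\policy_s}_{\blockMreal}|\le\epsilon$ simultaneously for all $N$ hypotheses. Hence the selected $\hat s=\argmax_s v(s)$ satisfies
\[
V^{\policy_{\hat s}}_{\blockMreal}\ \ge\ v(\hat s)-\epsilon\ \ge\ v(s^\star)-\epsilon\ \ge\ V^{\policy_{s^\star}}_{\blockMreal}-2\epsilon\ \ge\ V^{\robustlatentpolicy\circ\phi^\star}_{\blockMreal}-3\epsilon-H\epsilon_0\ \ge\ V^{\robustlatentpolicy\circ\phi^\star}_{\blockMreal}-4\epsilon,
\]
using $H\epsilon_0\le\epsilon$; this is the claimed $4\epsilon$ guarantee, and it holds regardless of whether $\hat s=s^\star$.

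It then remains to count episodes and collect failure probabilities. Each cluster finishes its test after at most $N(n_l+n_t)$ visits, after which every further visit runs the near-robust $\policy_{\hat s}$; summing over the $N$ clusters bounds the number of suboptimal episodes by $N^2(n_l+n_t)$, and adding the $n_0$ episodes spent learning $\hat\phi$ reproduces the three terms of the stated budget (absorbing constants and using the $\max\{H/\epsilon^2,1/\eta^2\}$ factor to merge the two branches of the $n_0$ bound into the last term). For the confidence I would take a union bound in three parts: $\delta$ for the clustering event; $\delta$ over the $N^2$ invocations of $\algname$, each at confidence $\delta/N^2$ (whence $\ln(N^2|\Fcal|/\delta)$ in $n_l$); and $\delta$ over the $N^2$ value estimates---for a total of $3\delta$.

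I expect the main obstacle to be the first linkage rather than any single calculation: one must argue that a cluster's $\le\epsilon_0$ contamination by other initial states is \emph{exactly} the $\epsilon_0$ start-state mismatch that Theorem~\ref{thm:robust_tranfer_stronger} is designed to absorb, and that an episode beginning in the wrong initial state can cost at most $H$ in value, so its total contribution to suboptimality is bounded by $H\epsilon_0$. The accompanying subtlety, which I would emphasize, is that the hypothesis test is a selection over \emph{true} (estimated) values, so its correctness follows merely from the correct hypothesis being \emph{present} among the candidates and achieving high value---thereby decoupling the policy-quality guarantee from the strictly harder problem of exactly identifying each cluster's latent index.
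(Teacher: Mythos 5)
Your proposal matches the paper's own proof essentially step for step: condition on the clustering event of Theorem~\ref{thm:initial_state_clustering} (with the two branches of the $n_0$ bound ensuring the bijection and $H\epsilon_0\le\epsilon$), invoke Theorem~\ref{thm:robust_tranfer_stronger} for the correct hypothesis with the cluster contamination playing the role of the start-state mismatch $\epsilon_0$, apply Hoeffding to the $n_t$ evaluation rollouts, run the same chain of inequalities for the $\argmax$ selection to get the $4\epsilon$ guarantee, count $N^2(n_l+n_t)+n_0$ mistake episodes, and union-bound the three failure sources to reach $1-3\delta$. The one observation you make that the paper leaves implicit---that correctness of the test requires only that the true hypothesis be present and score well, not that it be identified---is a fair reading of the same argument rather than a different route.
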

\begin{proof}
We prove this theorem by two steps. First, we show that for each cluster $\hat{\phi}(x)$, after we run Algorithm \ref{alg:start_test} with $N(n_l+n_t)$ steps we can find the correct state of that cluster. Second, we show that once we find the correct state, we run $\algname$ and learn a policy at most $4\epsilon$ worse than the robust policy.

First, by the definition of $n_0$, we have that $\Delta := \sqrt{\frac{C(\Fcal \circ \Phi)}{n_0}\ln\left(\frac{1}{\delta}\right)} \le \frac{\eta^2\Gamma}{32N^2|\Acal|^2} \frac{2\eta}{N} < \frac{\eta^2\Gamma}{32N^2|\Acal|^2}\left(1  - \left(1 - \frac{2\eta}{N}\right)^2 \right)$ for $N \ge 2$ and $\eta < 1$. By Theorem \ref{thm:initial_state_clustering}, we have for each $i \in [N]$, there must exist a state $\sigma(i) \in [N]$ such that with probability at least $1-\delta$,
\begin{align}
    \Pr(\phi(x) = s | \hat{\phi}(x) = i) &\ge 1 - \frac{16N^2 |\Acal|^2\Delta}{\Gamma \eta^2} \\
    &\ge 1-\frac{\epsilon}{H}
\end{align}
Thus if we run Algorithm \ref{alg:start_test} InitialStateTest for $s = \sigma(i)$. Then by Theorem \ref{thm:robust_tranfer_stronger}, we have that the value of learned policy $\policy$ is at least 
$$
V^{\rho \circ \inverseobsmap^\star}_{\blockM} - \frac{8H^4 |\Acal|^3 \ln(N^2|\Fcal|/\delta)}{n_l(1-2\perturbradius) ^2} - H \epsilon_0 = V^{\rho \circ \inverseobsmap^\star}_{\blockM} - 2\epsilon,
$$ with probability at least $1-\delta/N^2$.
By Hoeffding's inequality, we know that if the Monte-Carlo estimates $v(s)$ of the learned policy value with $N_t$ samples is at least 
$$
V^{\rho \circ \inverseobsmap^\star}_{\blockM} - 2\epsilon - \sqrt{\frac{H^2\ln(N^2/\delta)}{2n_t}} = V^{\rho \circ \inverseobsmap^\star}_{\blockM} - 3\epsilon,
$$
with probability at least $1-\delta/N^2$. Thus let $\hat{s} = \argmax_s v(s)$ and $\pi$ be the corresponding learned policy from $\algname$ given hypothetical initial state $\hat{s}$. The policy value $v^{\policy}$ is at least 
\begin{align}
    v(\hat{s}) - \epsilon \ge V^{\rho \circ \inverseobsmap^\star}_{\blockM} - 4\epsilon
\end{align}
with probability $1-2\delta/N$ by taking the union bound on all $N$ possible hypothetical initial states.  Thus, for a given state cluster, after run Algorithm \ref{alg:start_test} InitialStateTest for $N(n_l+n_t)$ episodes, the policy value is at least $V^{\rho \circ \inverseobsmap^\star}_{\blockM} - 4\epsilon$. That means for each state cluster, we make at most $N(n_l+n_t)$ mistakes. Thus we make at most $N^2(n_l+n_t)$ mistakes in total during running Algorithm \ref{alg:start_test}. Notice that we also need $n_0$ samples to learn the decoder $\hat{\phi}$. The total number of episodes we may make mistake on is therefore 
$$ N^2(n_l+n_t) + n_0 $$
Taking the union bound over all initial state clusters $\hat{\phi}(x)$ and the high probability statement in Theorem \ref{thm:initial_state_clustering}, we have that with probability $1-3\delta$ the statement holds. We finish the proof by plugging in $n_l, n_t, n_0$.
\end{proof}

\section{Experiment Details}
\label{appendix:experiment}
\subsection{Experiment Details in Combination Lock}

\paragraph{Domain details}
We describe the details of the combination lock domain here. The deterministic MDP is described in section \ref{sec:experiments}. The transition dynamics and reward functions in the target domain is defined by the the coefficients $\perturbprob(a'|s,a)$ and Definition \ref{def:perturb_mdp_class}. For each $s,a$, $\perturbprob(a'|s,a) := \perturbradius p(a') + (1-\perturbradius) \ind (a'=a) $ where $p(\cdot)$ is a random probability mass distribution and each probability is draw uniformly from $[0,1]$ then normalized. $\eta$ is set to be $0.1$ in this experiment. The only exception is that all transitions from state $H,2$ are not perturbed. This settings makes the robust policy without this knowledge not optimal in this specific instance, but our goal is still learning the robust policy here.

The observation mapping is defined as below. Let $v_s$ be a one-hot embedding in $\mathbb{R}^{H+4}$ of the state where the first $3$ bits denotes the second number in state representation and the later $H+1$ bits denotes the first number (time-step index) in the state representation. The the observation $o$ is computed by
\begin{align}
    o = \mathbf{H} \times {\tt perm}(v_s + \mathcal{N}(0,0.01) ) ,
\end{align}
where ${\tt perm}$ is an arbitrary permutation of $[H+4]$, and $\mathbf{H}$ is the $2^{\lceil \log_2(H+4) \rceil }$ by $2^{\lceil \log_2(H+4) \rceil }$ Hadamard matrix, consists of $2^{\lceil \log_2(H+4) \rceil }$ mutually orthogonal columns. If $H+4 < 2^{\lceil \log_2(H+4) \rceil }$, then the vector ${\tt perm}(v_s + \mathcal{N}(0,0.01) )$ will be padded with zeros.

\paragraph{Implementation details of $\algname$}

We describe the details of $\algname$ and hyper-parameters below. We implement the action predictor by $H$ two-layer MLPs with ReLU activations in this domain. The input to the MLP is concatenated from the two observations $x$ and $x'$. We train them in the forward order each with $n/H$ episides and $n$ is the total number of episodes. Since our algorithm is not online and needs a predefined sample size $n$, we uses $n \in \{ 10000, 50000, 100000, 400000 \} $ for our algorithms. The reported results for each combination lock problem (with different horizons) is the smallest $n$ such that the median of results on 5 random seeds reaches the 0.95 of robust value.

The $h+1$-th action predictor is initialized with the parameter from $h$-th action predictor, and the first one is using PyTorch's default initialization. For action predictor we split the training and a hold-out validation set by $0.8:0.2$. The trained model is tested on validation set after each epoch, and we stop the training if there is no decreasing in the negative log-likelihood loss for $10$ epoches, or is trained more than $100$ epoches. Other hyper-parameters are specified in Table \ref{tab:hp_nn_combolock}.

\begin{table}[ht]
    \centering
    \begin{tabular}{c|c}
        \toprule
         parameter name & values  \\
         \midrule
         hidden layer sizes & $[56, 56]$ \\
         learning rate & $0.0003$ \\
         optimizer & Adam \\
         batch size & $32$ \\
         gradient clipping & 0.25 \\
         \bottomrule 
    \end{tabular}
    \vspace{0.1cm}
    \caption{Hyperparameters in training NNs for all algorithms in the combination lock domain}
    \label{tab:hp_nn_combolock}
\end{table}

\paragraph{Implementation details of PPO, PPO+RND, and domain randomization}
We train PPO and PPO+RND, with a maximum of $5 \times 10^5$ number of episodes in the target domain. The policy network is implemented by a two-layer MLP with ReLU activations in this domain, and the training hyperparameters are the same as Table \ref{tab:hp_nn_combolock}. When training RND, the observation is normalized to mean $0$ and standard deviation $1$. Other hyperparameters that are specific to the PPO and RND is listed on Table \ref{tab:hp_ppo_combolock}.

For the domain randomization pretraining, we train in the source domain with $5 \times 10^5$ episodes first. Every 100 episodes the domain is randomized uniformly by regenerating a permutation function and the perturbation in transitions $\perturbprob(\cdot|\cdot,\cdot)$, in the same way as how target domain is generated. We assume the domain randomization algorithm know $\perturbradius$ (same as $\algname$), and everything about the observation mapping except the specific permutation function (not known by $\algname$).

Rather than uniform domain randomization, EPOpt \citep{rajeswaran2016epopt} uses only the episodes with reward smaller than lower $\epsilon$-quantile in the each batch ($100$ episodes) during domain randomization pretraining. We searched over the $\epsilon$ and it turns out uniform domain randomization ($\epsilon = 1.0$) performs the best in this problem.

\begin{table}[ht]
    \centering
    \begin{tabular}{c|c}
        \toprule
         parameter name & values  \\
         \midrule
         clip epsilon & 0.1 \\
         discounting factor & $0.999$ \\
         number of updates per batch & $4$ \\
         number of episodes per batch & $100$ \\
         entropy loss coefficient & $0.001$ \\
         RND bonus coefficient & 500 (0, 100, 500) \\
         $\epsilon$ in EPOpt & 1.0 (0.1, 0.2, 0.5, 1.0) \\
         \bottomrule 
    \end{tabular}
    \vspace{0.1cm}
    \caption{Hyperparameters for PPO, PPO+RND, PPO(+RND) with domain randomization in the combination lock domain. Values in the parenthesis is the values that we searched, with the picked best value outside. }
    \label{tab:hp_ppo_combolock}
\end{table}

\paragraph{Details of the experiment results}
We report the details of experiments we used to generate Figure \ref{fig:combolock_diff_h} in the paper. We run each algorithm with 5 random seeds for a maximum of $5 \times 10^5$ number of episodes in the target domain, saving checkpoints every $1000$ episodes. We decide whether the combination lock is solved or not by a threshold of $0.95 v^\robustlatentpolicy$, where $\robustlatentpolicy$ is the latent robust policy with a latent state access in the target domain. In Figure \ref{fig:combolock_diff_h} we report the median of the number of episodes needed to solve the combination lock with horizon $H$. The results of each random seeds is list in the following table.

\begin{table}[ht]
    \centering
    \begin{tabular}{c|c|c}
    \toprule
       Algorithm & Horizon & Number of episodes needed  \\
       \midrule
       $\algname$  & 5 & 10000, 9000, 12000, 10000, 11000 \\
       & 10 & 10000, 11000, $\infty$, 11000, $\infty$ \footnotemark \\
       & 20 & 50000, 48000, 48000, 48000, 49000 \\
       & 40 & 393000, 391000, 391000, 393000, 395000 \\
       \midrule
       PPO & 5 & 8000, 5000, 6000, 7000, 5000 \\
       & $\ge 10$ & $\infty$, $\infty$, $\infty$, $\infty$, $\infty$ \\
       \midrule
       PPO+RND & 5 & 7000, 7000, 8000, 6000, 5000 \\
       &10 & 14000, 47000, 10000, 16000, 14000 \\
       &20 & 121000, 120000, 79000, 79000, 90000 \\
       & $\ge 40$ & $\infty$, $\infty$, $\infty$, $\infty$, $\infty$ \\
       \midrule
       PPO+DR & 5 & 2000, 6000, 6000, 7000, 1000 \\
       & $> 5$ & $\infty$, $\infty$, $\infty$, $\infty$, $\infty$ \\
       \midrule
       PPO+RND+DR & 5 & 6000, 2000, 1000, 7000, 3000 \\
       & 10 & 62000, 90000, $\infty$, $\infty$, 88000 \\
       & $\ge 20$ & $\infty$, $\infty$, $\infty$, $\infty$, $\infty$ \\
       \bottomrule
    \end{tabular}
    \vspace{0.2cm}
    \caption{Results for all random seeds to generate Figure \ref{fig:combolock_diff_h}}
    \label{tab:combo_lock_full_results}
\end{table}
\footnotetext[1]{The $\infty$ means with the pre-defined samples size $10000$ we did not reaches the 0.95 of robust value in that random seeds. However it does not means our algorithm needs to take infinite amount of episodes to solve it. With the next sample size value $50000$ our algorithm can solve it. This is verified by the results on a harder problem H=20. The reported results for each problem horizon is the smallest $n$ in $\{ 10000, 50000, 100000, 400000 \}$ such that the median of results on 5 random seeds solves the problem (reaching the 0.95 of robust value). }

In the main paper, Figure \ref{fig:reward_curve_h40_combolock} shows the reward curves for different algorithms with horizon equals $40$ in the target environment. Here we include the reward curves for $H$ is $5$, $10$, and $20$ in Figure \ref{fig:combo_lock_rewards}. It shows our algorithm stably learned a robust policy with number of samples that is smaller than baseline algorithms. Though baseline algorithms aim to learn the near optimal policy, they did not converge to a policy that is significantly better than the robust policy. 

\begin{figure*}[!t]
    \centering
    \begin{minipage}{.31\textwidth}
        \centering
        \includegraphics[width=1\textwidth]{./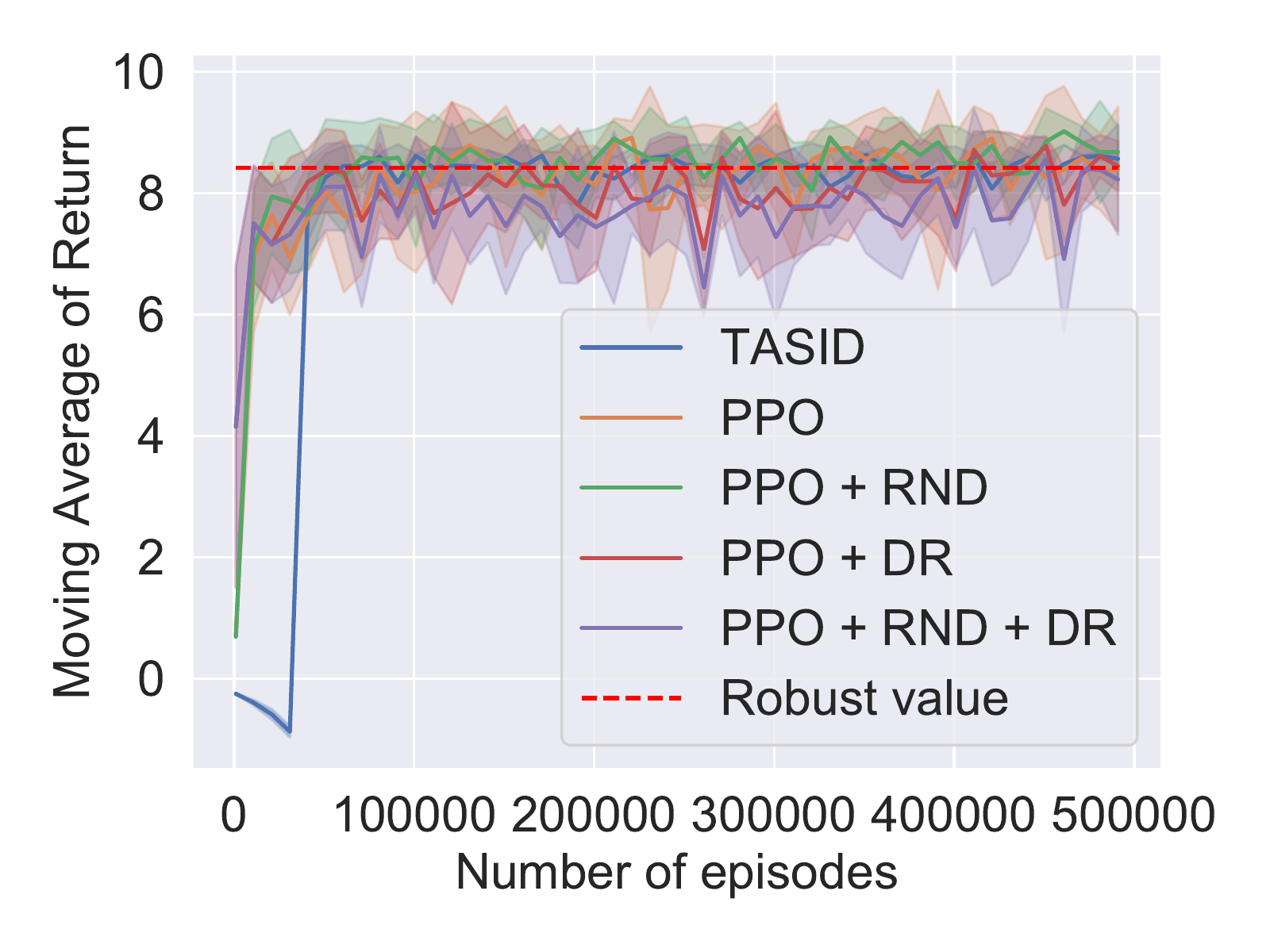}
        \subcaption{$H=5$}
        \label{fig:reward_curve_h5_combolock}
    \end{minipage}
    \hspace{0.05cm}
    \begin{minipage}{.31\textwidth}
        \centering
       \includegraphics[width=1\textwidth]{./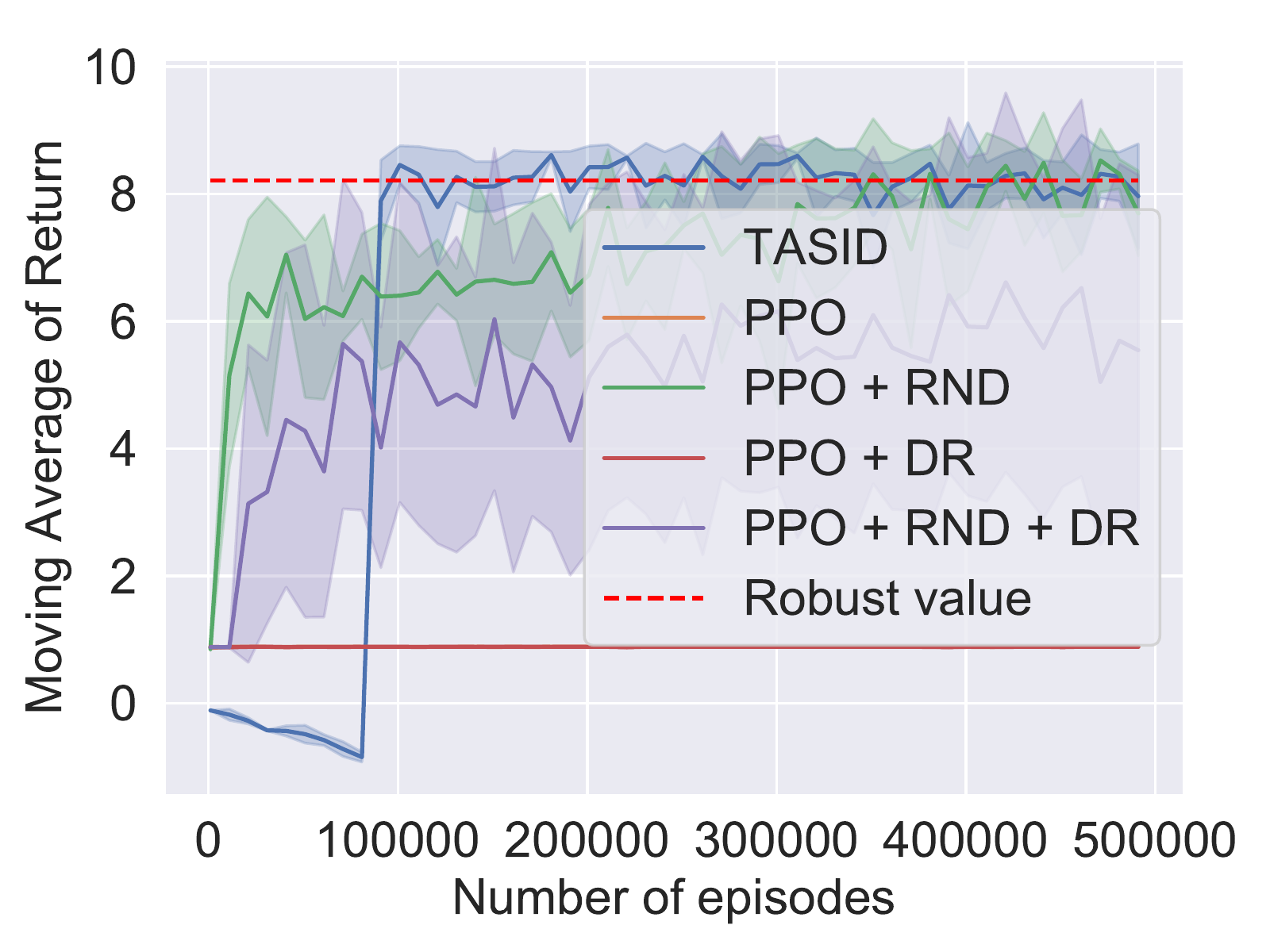}
        \subcaption{$H=10$}
        \label{fig:reward_curve_h10_combolock}
    \end{minipage}
    \hspace{0.05cm}
    \begin{minipage}{.31\textwidth}
        \centering
        \includegraphics[width=1\textwidth]{./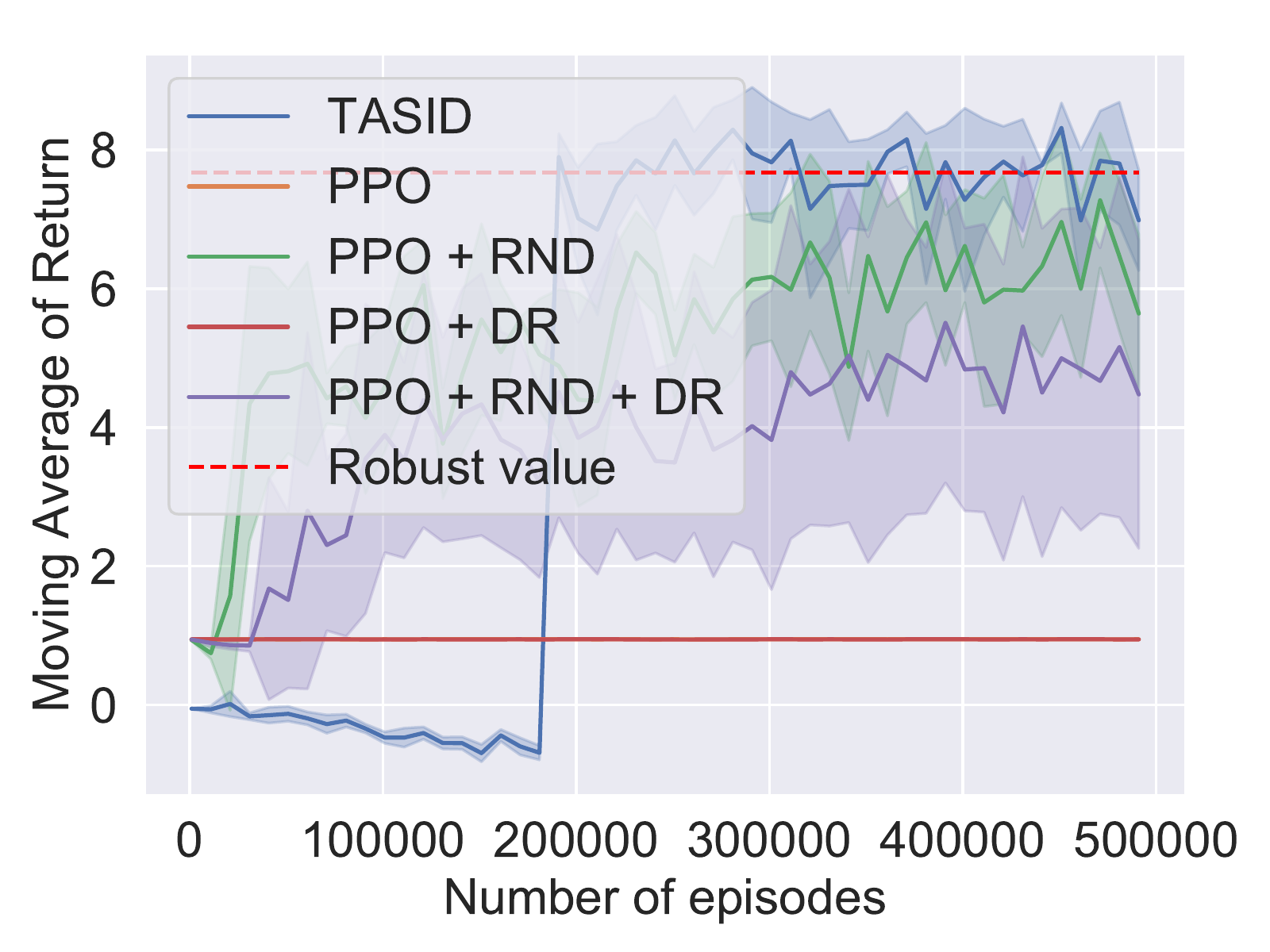}
        \subcaption{$H=20$}
        \label{fig:reward_curve_h20_combolock}
    \end{minipage}
    \caption{Reward curves of different algorithms in the target environments.}
    \label{fig:combo_lock_rewards}
    \figsqueeze
\end{figure*}

\paragraph{Amount of compute} We run our experiments on a cluster containing mixture of P40, P100, and V100 GPUs. Each experiment runs on a single GPU in a docker container. We use Python3 and Pytorch 1.4. We found that for H=40 and V100 GPUs, on average, POTAS took 2.5 hours, PPO + DR took 14 hours, and PPO + RND + DR took 31 hours.  Performing domain randomization increased the computational time by a factor of 2.

\subsection{Experiment Details in MiniGrid}

\paragraph{Domain details} We use the code of the MiniGrid environment \citep{gym_minigrid} under the Apache License 2.0. In MiniGrid, the agent is placed in a discrete grid world and needs to solve different types of tasks. We customized the lava crossing environment in several ways. We first create a shorter but more narrow crossing path, as the optimal path in deterministic environment. Then we construct a longer but more safe path under perturbation. The map of the mini-grid is shown in Figure \ref{fig:minigrid_original}. The height of the map can be change without changing the problem structure and the optimal/robust policy. The state consists of the  $x$-$y$ coordinate of the position, a discrete and 4-values direction, and the time step. The agent only observe a $7\times7$ area in face of it. (See \cite{gym_minigrid} for details.) We set the observation to be the visual map (RGB image) with a random noise between $(50, 50, 50)$ and $(150,150,150)$ for all empty grids. The action space is changed to having five actions: moving forward, turning left, turning right, turning right and moving forward, turning left and moving forward. We truncate the horizon by the three times number of steps the robust policy needs in the deterministic environment. 

\paragraph{Implementation details of $\algname$}
We describe the details of $\algname$ and hyper-parameters below. We implement the action predictor by two convolutional layers and a linear layer, with ReLU activations. We train the algorithm with $5\times10^5$ episodes, for all the $\perturbradius$ values and heights in Figure \ref{fig:minigrid}. in Figure \ref{fig:minigrid}, all the results in the MiniGrid experiments are averaged over 5 runs. Shaded error bars are plus and minus a standard deviation.  Other hyperparameters are shown in Table \ref{tab:hp_ours_minigrid}. 
\begin{table}[ht]
    \centering
    \begin{tabular}{c|c}
        \toprule
         parameter name & values  \\
         \midrule
         CNN kernel 1 & $8\times8\times16$, stride 4 \\
         CNN kernel 2 & $4\times4\times32$, stride 2 \\
         learning rate & $0.0003$ \\
         optimizer & Adam \\
         batch size & $256$ \\
         gradient clipping & 100 \\
         \bottomrule 
    \end{tabular}
    \vspace{0.1cm}
    \caption{Hyperparameters for $\algname$ in the MiniGrid domain}
    \label{tab:hp_ours_minigrid}
\end{table}

\paragraph{Amount of compute} We run our experiments on a cluster containing P100 GPUs. Each experiment runs on a single GPU in a docker container. We use Python3 and Pytorch 1.4. We found that on P100 GPUs, POTAS took 2 to 6 hours (2 hours for $\perturbradius = 0.1$ and 6 hours for $\perturbradius = 0.5$).

\end{document}